\documentclass[runningheads]{llncs}

\PassOptionsToPackage{pagebackref,breaklinks,colorlinks,citecolor=eccvblue}{hyperref}

\usepackage[mobile]{eccv}

\usepackage{eccvabbrv}
\usepackage{graphicx}
\usepackage{booktabs}
\usepackage[accsupp]{axessibility}
\usepackage{hyperref}
\usepackage{orcidlink}
\usepackage{amsmath}
\usepackage{amssymb}
\usepackage{placeins}
\usepackage{float}
\usepackage{wrapfig}
\usepackage{caption}
\usepackage{siunitx}  
\usepackage[table]{xcolor}
\usepackage{tabularx}
\usepackage{multirow}
\usepackage{algorithm}
\usepackage{algpseudocode}
\usepackage{bbm}
\usepackage{xcolor}

\definecolor{AstroStart}{HTML}{7A28CB} 
\definecolor{AstroEnd}{HTML}{00D2FF}

\newcommand{\AstroGradient}{%
\textcolor{AstroStart!100!AstroEnd}{A}%
\textcolor{AstroStart!87!AstroEnd}{s}%
\textcolor{AstroStart!75!AstroEnd}{t}%
\textcolor{AstroStart!62!AstroEnd}{r}%
\textcolor{AstroStart!50!AstroEnd}{o}%
\textcolor{AstroStart!37!AstroEnd}{l}%
\textcolor{AstroStart!25!AstroEnd}{a}%
\textcolor{AstroStart!12!AstroEnd}{b}%
\textcolor{AstroStart!0!AstroEnd}{e}%
}

\definecolor{ourscolor}{HTML}{E6F2FF}

\begin{document}

\title{%
  \texorpdfstring{\AstroGradient}{Astrolabe}%
  : Steering Forward-Process Reinforcement Learning for%
  \texorpdfstring{\\}{ }%
  Distilled Autoregressive Video Models%
}

\titlerunning{\textbf{\AstroGradient}}


\author{Songchun Zhang\inst{1}\and Zeyue Xue\inst{2,3}\and Siming Fu\inst{2}\and Jie Huang\inst{2} \and Xianghao Kong\inst{1} \and Yue-Ma\inst{1} \and Haoyang Huang\inst{2} \and Nan Duan\inst{2\S} \and Anyi Rao\inst{1\S}}

\authorrunning{Zhang. et al.}

\institute{$^1$HKUST $^2$JD Explore Academy $^3$HKU\\
\href{https://franklinz233.github.io/projects/astrolabe/}{Project Page}
}
\maketitle

\begin{figure*}[h]
  \centering
  \vspace{-0.8cm}
  \includegraphics[width=\textwidth]{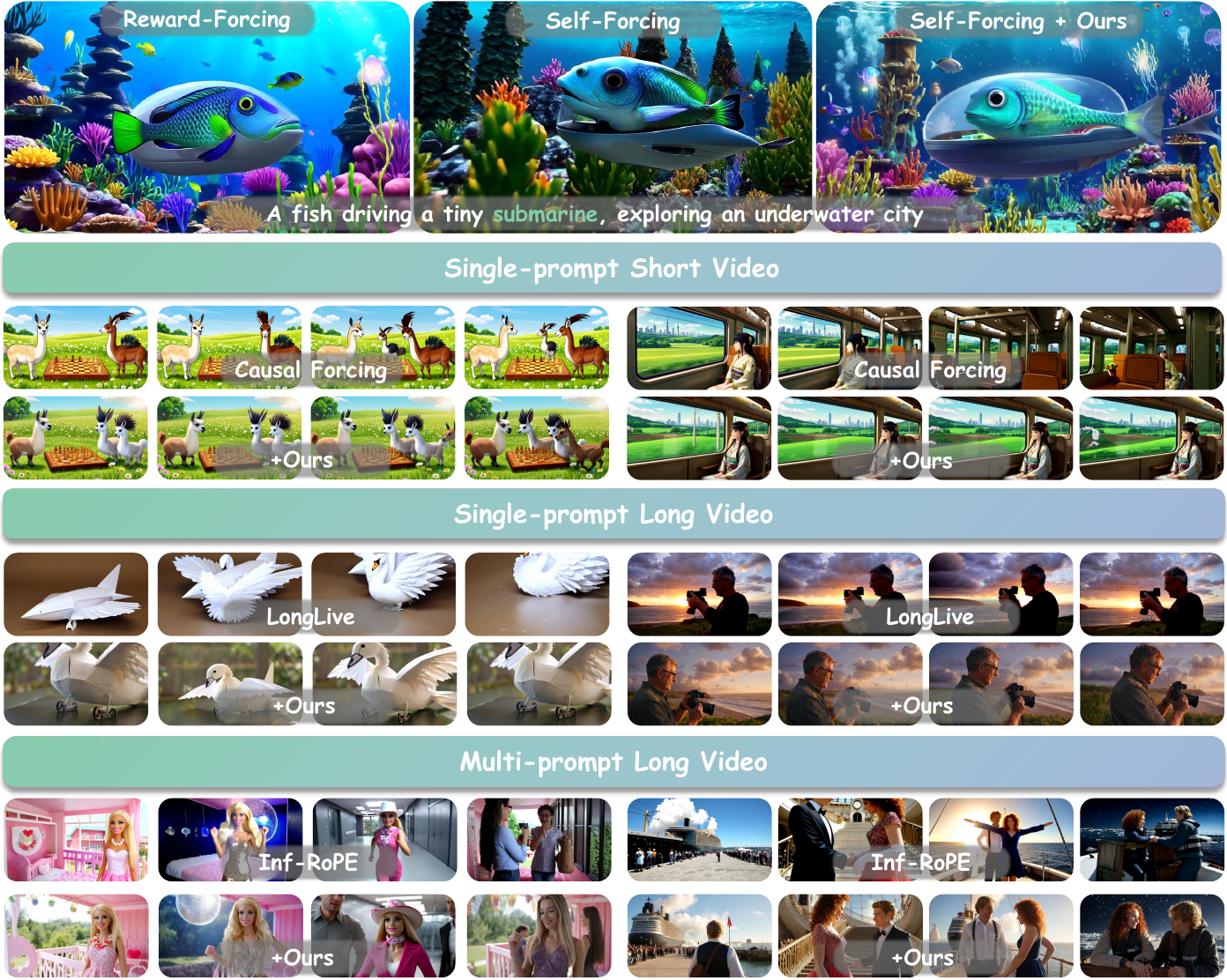}
  \caption{\AstroGradient~efficiently aligns distilled streaming video models with human preferences without re-distillation, enhancing baselines (e.g., Causal Forcing~\cite{zhu2026causal}, LongLive~\cite{yang2025longlive} and Infinite-RoPE~\cite{yesiltepe2025infinityrope}) by mitigating artifacts and improving temporal consistency. We demonstrate boosted perceptual quality across: (Top) single-prompt short, (Middle) single-prompt long, and (Bottom) multi-prompt long video generation.}
  \vspace{-0.1cm}
  \label{fig:teaser}
\end{figure*}

\vspace{-2em}

\FloatBarrier
\begin{abstract}
Distilled autoregressive (AR) video models enable efficient streaming generation but frequently misalign with human visual preferences. Existing reinforcement learning (RL) frameworks are not naturally suited to these architectures, typically requiring either expensive re-distillation or solver-coupled reverse-process optimization that introduces considerable memory and computational overhead.
We present Astrolabe, an efficient online RL framework tailored for distilled AR models. To overcome existing bottlenecks, we introduce a forward-process RL formulation based on negative-aware fine-tuning. By contrasting positive and negative samples directly at inference endpoints, this approach establishes an implicit policy improvement direction without requiring reverse-process unrolling. To scale this alignment to long videos, we propose a streaming training scheme that generates sequences progressively via a rolling KV-cache, applying RL updates exclusively to local clip windows while conditioning on prior context to ensure long-range coherence. Finally, to mitigate reward hacking, we integrate a multi-reward objective stabilized by uncertainty-aware selective regularization and dynamic reference updates. Extensive experiments demonstrate that our method consistently enhances generation quality across multiple distilled AR video models, serving as a robust and scalable alignment solution.

\keywords{Video Generation \and Distilled Autoregressive Models \and Reinforcement Fine-tuning}
\end{abstract}

\section{Introduction}
\label{sec:intro}

Recent advances in diffusion models~\cite{he2022latent, ho2022video, blattmann2023align, blattmann2023stable, chen2023videocrafter1, gupta2024photorealistic, zhao2024identifying, xing2024dynamicrafter, zhao2025controlvideo,wan2025wan,zheng2024open} have enabled unprecedented quality in video synthesis, yet deploying these systems for real-time interactive applications remains challenging.
Conventional video diffusion models rely on extensive multi-step denoising processes, resulting in prohibitive generation latencies.
Furthermore, the bidirectional attention mechanism employed by most architectures processes all frames jointly, precluding streaming generation, wherein frames must be produced sequentially.
These constraints have motivated a paradigm shift toward efficient, autoregressive alternatives.

To overcome these constraints, several distilled autoregressive models~\cite{huang2025self, yang2025longlive, cui2025self,shin2025motionstream,huang2025live, hong2025relic} have emerged.
These methods distill pretrained bidirectional video diffusion models into efficient autoregressive models via distribution matching distillation (DMD)~\cite{yin2024one}.
The resulting models leverage KV-caching for streaming inference, enabling real-time generation with the potential to support long video generation.
%
However, while distillation ensures the student mimics the teacher's distribution, it lacks optimization for human preference. Consequently, the generated outputs frequently exhibit artifacts and unnatural motion dynamics, remaining misaligned with human preferences.

Concurrently, online RL has demonstrated high efficacy in aligning LLMs with human preferences~\cite{ouyang2022training, guo2025deepseek}.
This success motivates a natural question: \textit{can online RL be applied to align distilled streaming video models with human visual expectations without reverting to computationally expensive pre-training or re-distillation pipelines?}
Aligning these models via existing methods introduces non-trivial challenges.
Previous attempts at reward-guided distillation~\cite{lu2025rewardforcing} merely bias the supervised distillation loss by prioritizing samples with higher rewards. 
While this shifts the output distribution toward high-reward regions, it lacks a mechanism for active exploration and fails to penalize suboptimal generation samples.
On the other hand, applying online RL via reverse-process optimization~\cite{xue2025dancegrpo,liu2025flowgrpo} requires log-probability estimation along the sampling trajectory. 
This couples the algorithm to specific solvers and necessitates storing intermediate trajectory states, adding substantial memory and computational overhead that erodes the efficiency advantages of streaming models.

We present \textbf{Astrolabe}, an efficient and stable online RL framework for distilled AR video models, as shown in Figure~\ref{fig:pipeline}. 
Firstly, 
to bypass the limitations of reward-weighted distillation and the overhead of reverse-process RL, we introduce a trajectory-free alignment strategy tailored for distilled AR video generation. 
Drawing on the principles of negative-aware fine-tuning~\cite{zheng2025diffusionnft}, our approach contrasts positive and negative generations to establish an implicit policy improvement direction. 
Requiring only clean inference endpoints, our method sidesteps solver-specific unrolling and full trajectory storage, better preserving the efficiency inherent to streaming architectures.
Then, while this resolves per-clip alignment efficiently, scaling to long videos remains challenging: naively unrolling and backpropagating through extended sequences is prohibitively expensive.
To address this, we introduce a streaming training scheme that generates videos progressively while applying RL updates only to short segments, conditioning on prior context to retain long-range coherence.
Furthermore, to prevent models from reward hacking at the expense of overall aesthetics, the framework employs a multi-reward formulation covering visual quality, motion dynamics, and text alignment.
This optimization process is further stabilized by an uncertainty-aware selective regularization strategy that restricts KL penalties to samples lacking auxiliary consensus, alongside a dynamic reference update mechanism that accommodates shifting distributions during online learning.

Extensive experiments on various distilled AR models validate the effectiveness of our method. 
Figure~\ref{fig:teaser} showcases a diverse set of representative results, demonstrating that the proposed framework consistently enhances generation quality across different settings. 
Comprehensive evaluations demonstrate improvements across multiple benchmarks.
In summary, the primary contributions of our work are as follows: (1) Astrolabe, an online reinforcement learning framework formulated to align distilled streaming video models with human visual preferences; (2) a streaming training scheme that enables scalable alignment of long videos via segment-wise optimization under historical context; and (3) a suite of stabilization techniques, encompassing multi-reward optimization and dynamic regularization, to mitigate reward hacking.
\section{Related Work}
\label{sec:related}

\subsection{Video Generative Models}
Diffusion models achieve remarkable success in video synthesis~\cite{he2022latent, ho2022video, singer2022make, blattmann2023align, blattmann2023stable, chen2023videocrafter1, gupta2024photorealistic, zhao2024identifying, xing2024dynamicrafter, zhao2025controlvideo, zhao2022egsde}. The strong scalability of Diffusion Transformers (DiTs)~\cite{bao2023all, peebles2023scalable} facilitates the emergence of large-scale models~\cite{yang2024cogvideox, bao2024vidu, kong2024hunyuanvideo, wan2025wan} that generate high-quality content by jointly denoising all frames. However, this full-sequence generation requires simultaneous processing of all frames, which incurs substantial latency and precludes real-time interaction. Consequently, autoregressive approaches~\cite{wu2021godiva, hong2022cogvideo, wu2022nuwa, weissenborn2019scaling, yan2021videogpt, zhao2025ultravico, zhao2025riflex, deng2024autoregressive, kondratyuk2023videopoet} emerge to enable streaming generation by producing frames sequentially.

\subsection{Autoregressive Video Generation}
To circumvent the limitation of bidirectional diffusion models, autoregressive (AR) approaches enable streaming generation by producing frames sequentially.
While AR models are inherently suitable for real-time applications, early methods~\cite{hu2024acdit, gao2024ca2} relying on Teacher Forcing (TF) suffer from severe error accumulation during long-video synthesis. 
Recent studies explore novel training paradigms to resolve this train-test misalignment. 
Diffusion Forcing~\cite{chen2024diffusion} introduces conditioning at arbitrary noise levels, while CausVid~\cite{yin2025slow} employs block causal attention and distills bidirectional teacher via DMD~\cite{yin2024one}. 
More recently, Self-Forcing~\cite{huang2025self} and its successors~\cite{lu2025rewardforcing,yang2025longlive,yesiltepe2025infinityrope,cui2025self,guo2025end} establish post-training frameworks that systematically mitigate error accumulation.
Identifying an architectural gap in the initial ODE distillation phase of these frameworks, Causal Forcing~\cite{zhu2026causal} reveals that distilling from a bidirectional teacher violates frame-level injectivity. 
By employing an AR teacher for initialization instead, it theoretically bridges this gap to achieve superior real-time generation.

\subsection{Reinforcement Learning for Generative Models}
Recent successes in large language models~\cite{guo2025deepseek,ouyang2022training} highlight the efficacy of on-policy reinforcement learning via memory-efficient algorithms like GRPO~\cite{shao2024deepseekmath}. 
For diffusion models, DiffusionDPO~\cite{wallace2024diffusion} utilizes off-policy pairs, while Dance-GRPO~\cite{xue2025dancegrpo} and Flow-GRPO~\cite{liu2025flowgrpo} perform alignment by estimating reverse-trajectory log-probabilities. 
These reverse-process methods inherently couple the training objective to specific solvers and demand full trajectory storage. 
To bypass this, DiffusionNFT~\cite{zheng2025diffusionnft} introduces solver-agnostic forward-process policy optimization. 
Building on this, WorldCompass~\cite{wang2026worldcompass} recently adapted NFT to autoregressive world models~\cite{sun2025worldplay}.
However, their framework directly optimizes heavy pre-distilled teacher models. Extending RL to highly efficient distilled AR video models remains an open problem.

\begin{figure*}[t!]
    \centering
    \includegraphics[width=1.0\linewidth]{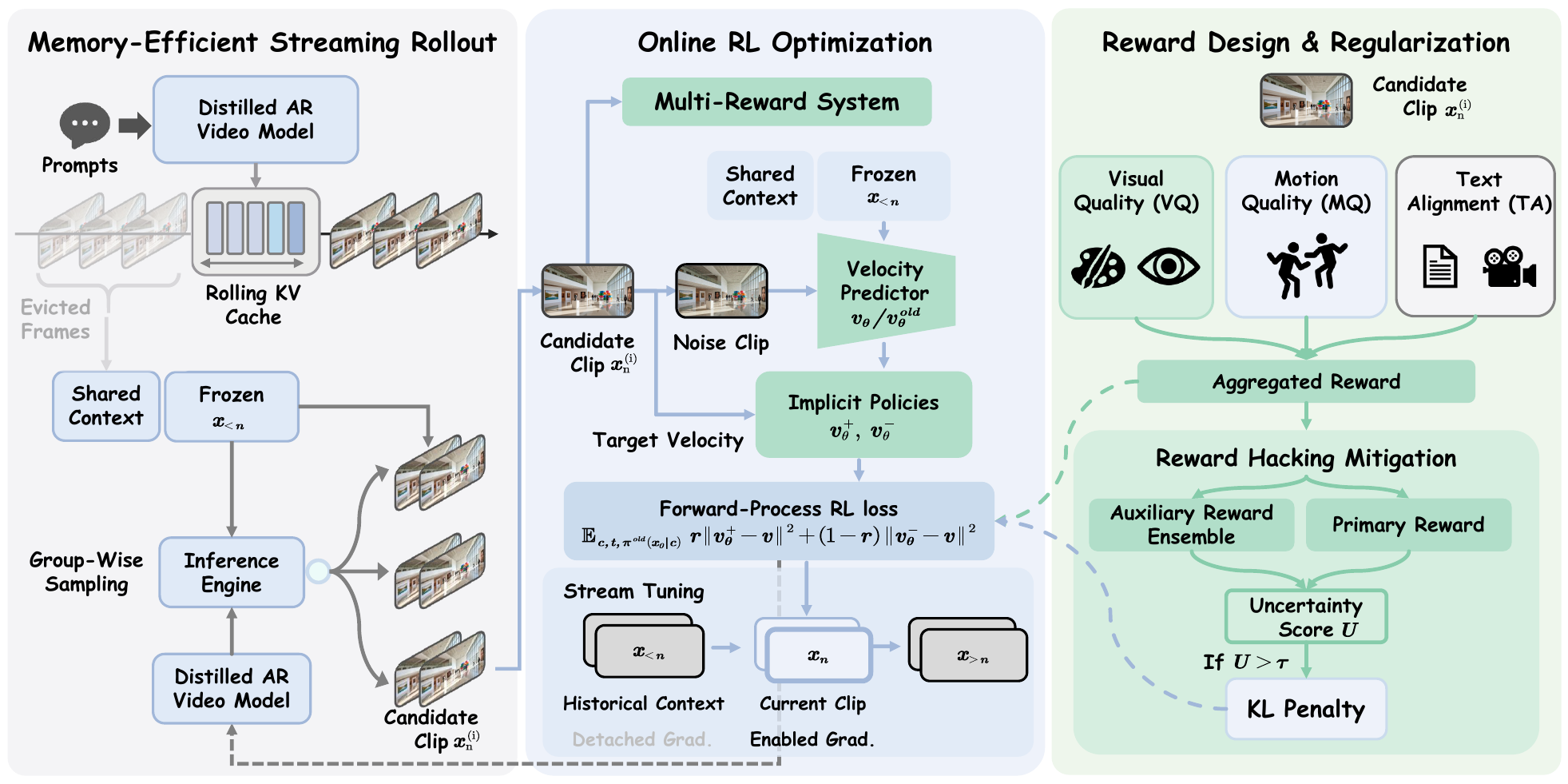}
    \vspace{-4mm}
    \caption{\textbf{Overview of Astrolabe.} We propose a memory-efficient RL framework for distilled streaming video models. The method combines group-wise streaming rollout using a rolling KV cache for efficient group-wise sampling (see left), and clip-level forward-process RL for solver-agnostic optimization (see middle). To scale to long videos, we utilize Streaming Long Tuning with detached historical gradients. Furthermore, a multi-reward formulation paired with uncertainty-based selective regularization is employed to effectively mitigate reward hacking during training (see right). The pseudocode of the algorithm can be found in the supplementary materials.}
\vspace{-4mm}
\label{fig:pipeline}
\end{figure*}

\section{Methodology}
\label{sec:method}

Given a distilled autoregressive video diffusion model optimized for real-time generation, our goal is to further align it with human preferences through online reinforcement learning in the post-training stage. We propose Astrolabe, a memory-efficient framework combining streaming rollout with forward-process RL optimization. 
Section~\ref{sec:preliminaries} reviews the foundations of AR video diffusion and forward-process RL. 
Section~\ref{sec:rollout} details our memory-efficient streaming rollout mechanism for scalable exploration. 
Section~\ref{sec:optimization} presents the online RL optimization strategy, encompassing clip-level forward-process RL and streaming long tuning. 
Finally, Section~\ref{sec:reward} formulates our multi-reward design and selective regularization approach to mitigate reward hacking.

\subsection{Preliminaries}
\label{sec:preliminaries}

\noindent{\textbf{Autoregressive Video Diffusion Models.}}
AR video model factorizes the joint distribution as $p(x_{1:N}) = \prod_{i=1}^N p(x_i | x_{<i})$. 
Following the flow matching formulation, each conditional $p(x_i|x_{<i})$ is modeled by defining a probability path $x_i^t = (1-t) x_i + t \epsilon_i$, where $\epsilon_i \sim \mathcal{N}(0,I)$ and $t \in [0,1]$. 
The model predicts the velocity field $v_\theta$ conditioned on text $c$ and the KV cache of preceding frames. 
Training paradigms such as Teacher Forcing (TF) and Diffusion Forcing (DF) minimize the frame-wise MSE between the predicted and true targets. 
In TF, timesteps $t$ are shared across frames with clean ground-truth context $x_{<i}$, whereas in DF, independent timesteps $t_i$ are sampled for each frame using noisy context $x_{j<i}^{t_j}$.
Both suffer from exposure bias due to the mismatch between training context and inference-time generation. 
To mitigate this, Self-Forcing~\cite{huang2025self} employs autoregressive rollouts $\{x_{1:N}^\theta\} \sim \prod_{i=1}^N p_\theta(x_i|x_{<i})$ to simulate inference dynamics. 
The objective aligns the velocity predictions of the model on these self-generated trajectories with the scores provided by teacher model.

\noindent{\textbf{Forward-Process Reinforcement Learning.}}
To avoid the likelihood estimation challenges of reverse-process RL, DiffusionNFT~\cite{zheng2025diffusionnft} optimizes diffusion models by applying rewards directly to the forward process. Given a clean generated sample $x$ with a normalized reward $\tilde{r} \in [0,1]$, a noisy version $x^t$ is constructed for timestep $t \in [0, 1]$. Using the current ($v_\theta$) and old ($v_{\theta_{\text{old}}}$) velocity predictors, implicit positive and negative policies are defined via interpolation:
\begin{equation}
    v^+ = (1-\beta) v_{\theta_{\text{old}}} + \beta v_\theta, \quad v^- = (1+\beta) v_{\theta_{\text{old}}} - \beta v_\theta
\label{eq:interpolation}
\end{equation}
where $\beta$ controls the interpolation strength. 
The policy loss contrasts these implicit policies against the target forward velocity $v_{target}$:
\begin{equation}
    \mathcal{L}_{\text{policy}} = \tilde{r} \|v^+ - v_{target}\|_2^2 + (1-\tilde{r}) \|v^- - v_{target}\|_2^2
\label{eq:policy_loss}
\end{equation}
This trajectory-free formulation requires only clean generated samples, enabling highly efficient, solver-agnostic training.

\subsection{Memory-Efficient Streaming Rollout}
\label{sec:rollout}

Standard RL paradigms rely on sequence-level rollouts with global rewards. For autoregressive (AR) video generation, this introduces two critical bottlenecks: the temporal credit assignment problem, where sparse global scores fail to isolate localized visual degradation, and the prohibitive memory overhead of maintaining independent KV caches for long sequences. To overcome these limitations, we propose a group-wise streaming rollout strategy.

\noindent{\textbf{Rolling KV Cache with Frame Sinks.}}
We maintain a rolling KV cache to bound memory usage. Let the sequence of generated clips be denoted as $x_{1}, x_{2}, \dots, x_{N}$. At generation step $n$, naïvely caching the full history $x_{<n}$ incurs a KV memory cost that grows linearly with video length, quickly becoming prohibitive for long-horizon rollouts. To resolve this, we construct a restricted visual context window $\mathcal{C}_n$ comprising two components: a \emph{frame sink} of $S$ permanently retained frames that anchors global semantic context to prevent long-range drift, and a rolling window of the $L$ most recent frames that provides fine-grained local conditioning. The model attends exclusively to the KV cache of $\mathcal{C}_n$ to generate the next clip $x_n \sim \pi_\theta(\cdot | \mathcal{C}_n, c)$. Since $S$ and $L$ are fixed hyperparameters independent of total video length $N$, the resident KV memory remains constant regardless of how long the video grows, enabling real-time streaming rollout.

\noindent{\textbf{Clip-level Group-wise Sampling.}}
Rather than generating $G$ independent long trajectories from scratch, we autoregressively sample the visual history exactly once and freeze its KV cache as a shared prefix. At the $n$-th step, utilizing the memory-efficient KV states of $\mathcal{C}_n$, the model decodes $G$ independent candidate clips in parallel:
\begin{equation}
    x_{n}^{(i)} \sim \pi_\theta(\cdot | \mathcal{C}_n, c), \quad \text{for } i \in \{1, \dots, G\}
\end{equation}
This clip-level rollout restricts the generation overhead to the local chunk rather than the full sequence. By sharing the frozen context prefix across all $G$ candidates, the additional cost of group-wise sampling is incurred only once per step rather than once per trajectory, substantially reducing rollout time and eliminating out-of-memory bottlenecks during reinforcement learning.

\subsection{Online RL Optimization}
\label{sec:optimization}

\noindent{\textbf{Clip-level Forward-Process RL.}}
For each candidate $x_n^{(i)}$, we evaluate a composite reward $R(x_n^{(i)}, c)$ and compute its advantage $A^{(i)}$ via group-wise mean-centering:
\begin{equation}
A^{(i)} = R(x_n^{(i)}, c) - \frac{1}{G}\sum_{j=1}^G R(x_n^{(j)}, c)
\end{equation}
This advantage is then normalized as $\tilde{r}_i = \text{clip}(A^{(i)} / A_{\max}) / 2 + 0.5$.
For our $T=4$ distilled model, the timestep $t$ is sampled from $\mathcal{T}_{distill}$. Crucially, we discard the adaptive loss weighting of DiffusionNFT~\cite{zheng2025diffusionnft}, as it triggers gradient explosion under large discretization gaps in distilled AR settings.
Conditioned on text $c$ and the shared KV cache $\mathcal{C}_n$, we construct the noised sample $x_n^{t,(i)}$ to predict velocities $v_\theta$ and $v_{\theta_{\text{old}}}$. The model is optimized directly via the implicit policy loss $\mathcal{L}_{\text{policy}}$ (Eq.~\ref{eq:policy_loss}) by substituting $x_n^{t,(i)}$ to derive $v_{target}$. To further mitigate reward hacking, this objective is complemented by an uncertainty-aware selective KL penalty (Section~\ref{sec:reward}).

\noindent{\textbf{Streaming Long Tuning.}}
Distilled AR models suffer from a train-short/test-long mismatch, where accumulated prediction errors cause inevitable long-horizon degradation. 
To address this, our training paradigm strictly simulates the dynamics of long-sequence inference while decoupling the forward rollout from gradient computation.
Specifically, we first perform a full forward pass to accumulate the KV cache up to the target step. 
Upon reaching the active training window $x_n$, the KV cache of all preceding frames $x_{<n}$ is explicitly detached from the computation graph. 
This detached cache serves as historical context, mimicking the progressively noisy conditions encountered during autoregressive generation.
Gradients are then backpropagated through the active window. 
This exact formulation inherently bounds the training memory usage, circumventing the cost of backpropagation through extended trajectories. 


\subsection{Reward Design and Regularization}
\label{sec:reward}

\noindent{\textbf{Multi-reward Formulation.}}
Scalar reward functions obscure specific quality dimensions and often inadvertently encourage the model to exploit one attribute over others. 
To address this, we formulate a composite reward integrating three distinct axes: Visual Quality (VQ), Motion Quality (MQ), and Text-Video Alignment (TA). 
We compute the Visual Quality (VQ) reward as the mean HPSv3~\cite{ma2025hpsv3} score over the top 30\% of frames. 
Excluding lower-scoring frames prevents transient motion blur from disproportionately penalizing the overall aesthetic assessment.
For the Motion Quality (MQ) reward, we evaluate temporal consistency using a pre-trained VideoAlign~\cite{videoalign} strictly on grayscale inputs; removing color forces the metric to focus on motion dynamics rather than texture. 
Finally, the Text Alignment (TA) reward employs the standard RGB VideoAlign to measure the semantic correspondence between the text and the generated video

\noindent{\textbf{Uncertainty-Aware Penalty.}}
To prevent uniform KL regularization from indiscriminately suppressing high-quality generations, we introduce a selective KL penalty targeting reward hacking via reward rank disagreement~\cite{he2025gardo}. 
For each candidate $x_n^{(i)}$, we quantify sample uncertainty as the rank discrepancy between the primary reward model $p$ and $M-1$ auxiliary models: $\Delta_{\text{rank}}^{(i)} = \text{rank}_p^{(i)} - \frac{1}{M-1}\sum_{m \neq p} \text{rank}_m^{(i)}$. 
High positive values indicate likely reward hacking lacking ensemble consensus. We mask these risky samples using $\mathcal{M}^{(i)} = \mathbbm{1}[\Delta_{\text{rank}}^{(i)} > \tau]$, where $\tau$ is the $(1-\rho)$-th percentile of positive discrepancies (with risk ratio $\rho$). The total objective $\mathcal{L} = \mathcal{L}_{\text{policy}} + \lambda_{\text{KL}} \mathcal{L}_{\text{KL}}$ applies the KL penalty strictly to masked samples, preserving optimization flexibility for clean data. 
Furthermore, to mitigate distributional shifts during online RL, the policy $\theta_{\text{old}}$ follows an EMA update, and the reference policy conditionally resets ($\theta_{\text{ref}} \leftarrow \theta$) when policy deviation surpasses $\tau_{\text{KL}}$ or epochs reach $K_{\max}$.

\section{Experiments}
\label{sec:exp}

\subsection{Experimental Setup}
\noindent{\textbf{Implementation Details.}}
To validate the effectiveness of our method, we evaluate Astrolabe on distilled autoregressive models. 
We adopt base models trained via Self-Forcing~\cite{huang2025self}, Causal-Forcing~\cite{zhu2026causal}, and LongLive~\cite{yang2025longlive} as our primary baselines. 
Training prompts are sampled from the VidProM dataset~\cite{wang2024vidprom}, specifically utilizing the filtered subset introduced in DanceGRPO~\cite{xue2025dancegrpo}. 
We employ Low-Rank Adaptation (LoRA) with rank $r=256$ and scaling factor $\alpha=256$ for parameter-efficient fine-tuning. 
To maximize memory efficiency during optimization, we do not store separate full-parameter copies for the current policy $v_\theta$ and the old policy $v_{\theta_{\text{old}}}$. 
Instead, both policies share a single frozen base model, and we switch between their respective lightweight LoRA during the forward pass, reducing GPU memory overhead.
Training operations are distributed across 48 NVIDIA H200 GPUs. 
Each epoch processes 48 prompts, maintaining a group size of $G=24$ candidate clips per prompt. 
For reward computation, we integrate VideoAlign~\cite{videoalign} and HPSv3~\cite{ma2025hpsv3} into our pipeline. 
More details can be found in supplementary material.

\begin{figure}[t]
    \centering
    \includegraphics[width=1.0\linewidth]{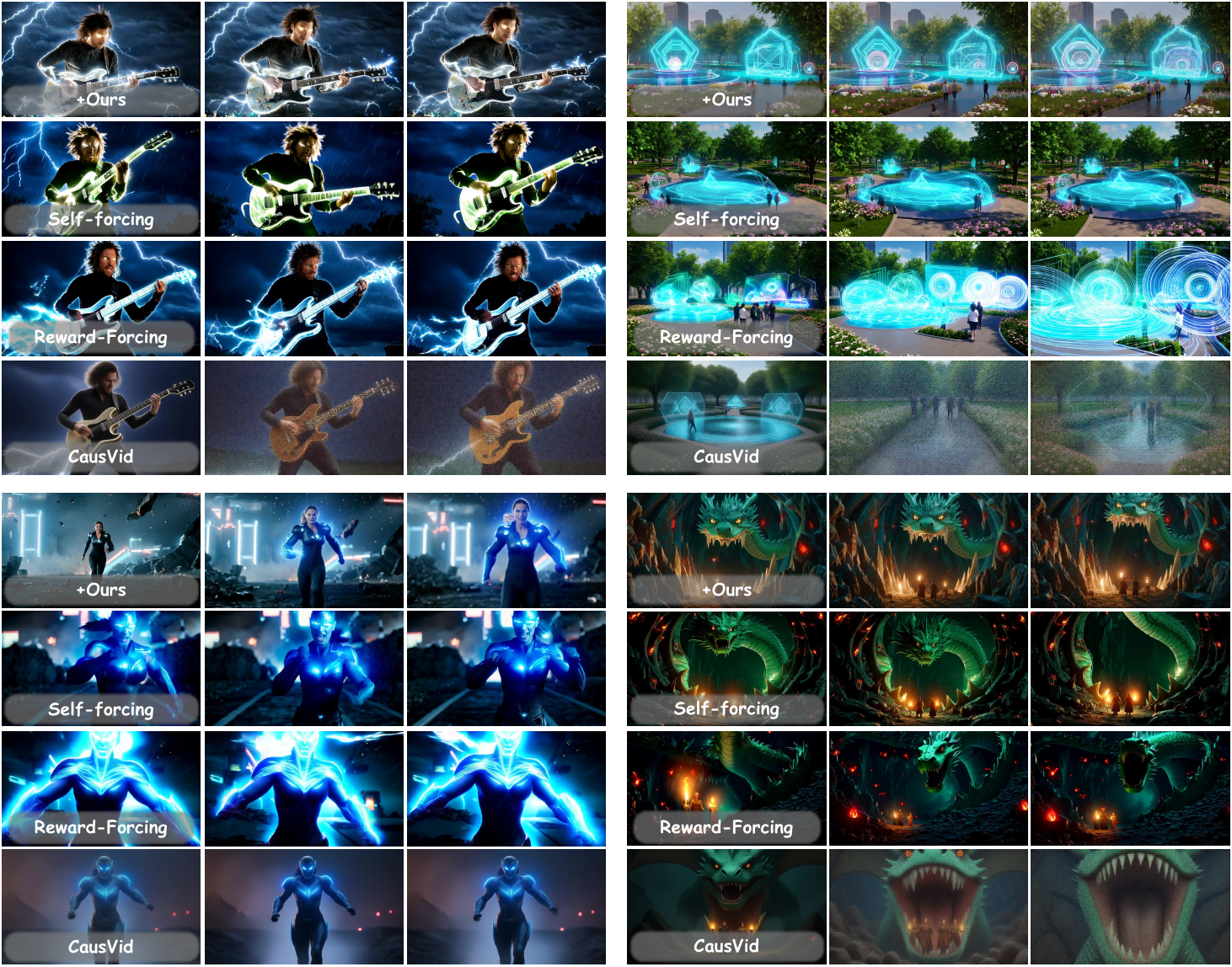}
    \vspace{-2mm}
    \caption{Qualitative comparison under the short-video, single-prompt setting. We evaluate our framework (+Ours) against other baselines. Visual results confirm that our method generates videos with significantly sharper textures and superior motion coherence, aligning better with human preferences. More results can be found in supplementary material.}
\vspace{-4mm}
\label{fig:compare_single}
\end{figure}

\begin{table}[t!]
\centering
\caption{Quantitative results on VBench benchmarks. Integrating our approach into existing distilled models yields consistent improvements in motion quality, semantic alignment, and overall generation quality.}
\label{tab:compare_short}
\resizebox{\textwidth}{!}{%
\begin{tabular}{lccccccc}
\toprule
\textbf{Method} & \textbf{Total}$\uparrow$ & \textbf{Quality}$\uparrow$ & \textbf{Semantic}$\uparrow$ & \textbf{HPSv3}$\uparrow$ & \textbf{MQ}$\uparrow$ & \textbf{Throughput}$\uparrow$ \\
\midrule
\rowcolor[gray]{0.95} \multicolumn{7}{l}{\textbf{Diffusion Models}} \\
LTX-Video~\cite{hacohen2024ltx} & 80.00 & 82.30 & 70.79 & 8.32 & 1.34 & 8.98 \\
Wan2.1~\cite{wan2025wan} & 84.26 & 85.30 & 80.09 & 9.26 & 1.62 & 0.78 \\
\midrule
\rowcolor[gray]{0.95} \multicolumn{7}{l}{\textbf{AR Models}} \\
SkyReels-V2~\cite{chen2025skyreels} & 82.67 & 84.70 & 74.53 & 9.08 & 1.59 & 0.49 \\
MAGI-1~\cite{teng2025magi} & 79.18 & 82.04 & 67.74 & 7.95 & 1.52 & 0.19 \\
NOVA~\cite{deng2024autoregressive} & 80.12 & 80.39 & 79.05 & 8.21 & 1.63 & 0.88 \\
PyramidFlow~\cite{jin2024pyramidal} & 81.72 & 84.74 & 69.62 & 8.76 & 1.50 & 6.70 \\
\midrule
\rowcolor[gray]{0.95} \multicolumn{7}{l}{\textbf{Distilled AR Models}} \\
CausVid~\cite{yin2025slow} & 81.20 & 84.05 & 69.80 & 7.56 & 1.22 & 17.0 \\
Reward Forcing~\cite{lu2025rewardforcing} & 84.13 & 84.84 & 81.32 & 8.74 & 1.65 & 23.1 \\
Self-Forcing~\cite{huang2025self} & 83.74 & 84.48 & 80.77 & 9.36 & 1.65 & 17.0 \\
\rowcolor{ourscolor} \quad + Ours & 83.79{\scriptsize\textcolor{green!60!black}{+.05}} & 84.51{\scriptsize\textcolor{green!60!black}{+.03}} & 80.92{\scriptsize\textcolor{green!60!black}{+.15}} & 10.72{\scriptsize\textcolor{green!60!black}{+1.36}} & 1.71{\scriptsize\textcolor{green!60!black}{+.06}} & 17.0 \\
LongLive~\cite{yang2025longlive} & 83.22 & 83.68 & 81.37 & 9.38 & 1.51 & 20.7 \\
\rowcolor{ourscolor} \quad + Ours & 84.93{\scriptsize\textcolor{green!60!black}{+1.71}} & 85.83{\scriptsize\textcolor{green!60!black}{+2.15}} & 81.36{\scriptsize\textcolor{red!60!black}{-.01}} & 11.03{\scriptsize\textcolor{green!60!black}{+1.65}} & 1.64{\scriptsize\textcolor{green!60!black}{+.13}} & 20.7 \\
Causal Forcing~\cite{zhu2026causal} & 84.04 & 84.59 & 81.84 & 9.48 & 1.69 & 17.0 \\
\rowcolor{ourscolor} \quad + Ours & 84.46{\scriptsize\textcolor{green!60!black}{+.42}} & 85.15{\scriptsize\textcolor{green!60!black}{+.56}} & 81.72{\scriptsize\textcolor{red!60!black}{-.12}} & 10.84{\scriptsize\textcolor{green!60!black}{+1.36}} & 1.80{\scriptsize\textcolor{green!60!black}{+.11}} & 17.0 \\
\bottomrule
\end{tabular}
}
\vspace{-4mm}
\end{table}

\begin{figure*}[t]
    \centering
    \includegraphics[width=1.0\linewidth]{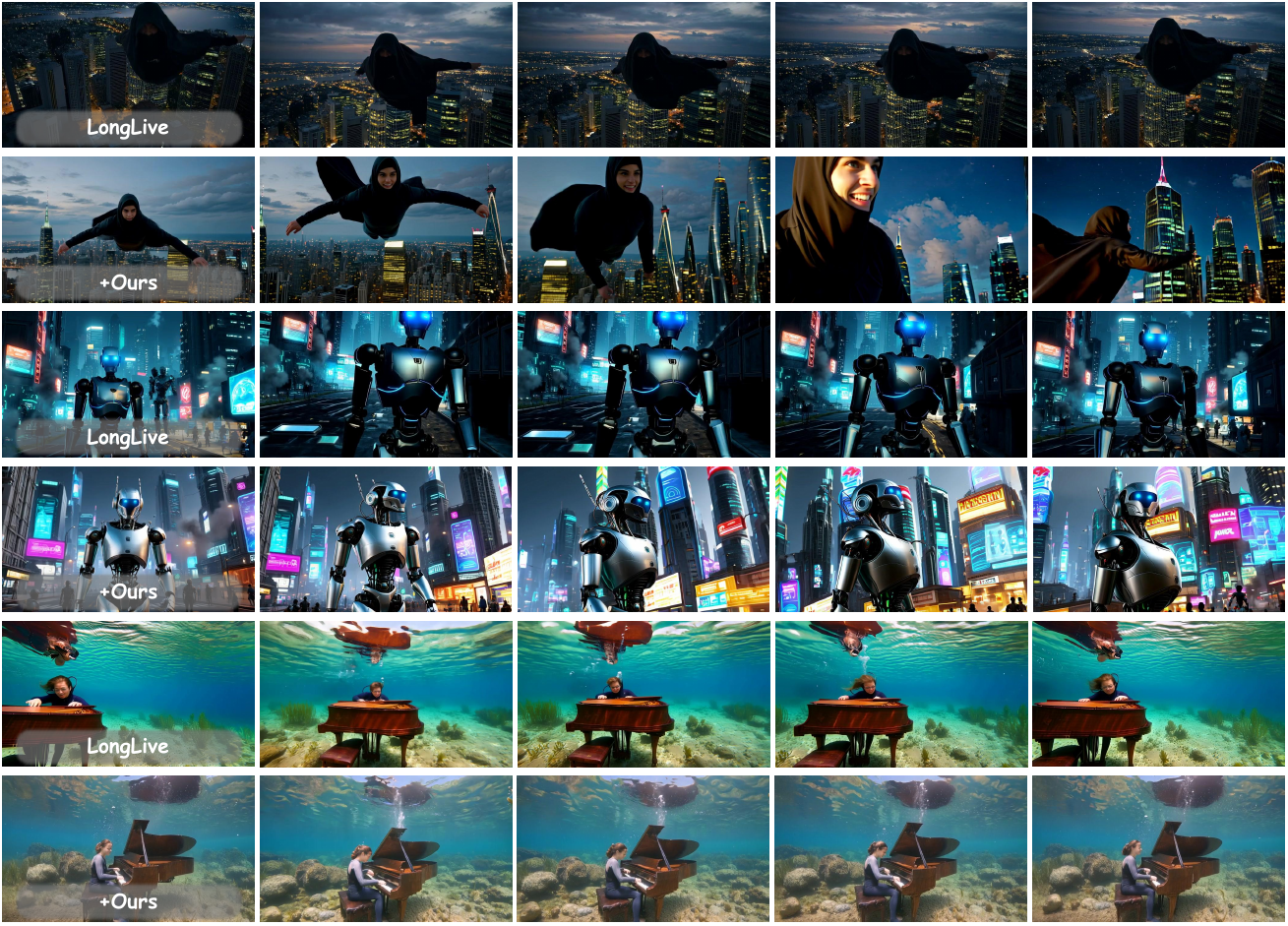}
    \vspace{-4mm}
    \caption{Qualitative results under the single-prompt long-video setting. Our framework (+Ours) effectively translates alignment optimizations from short videos to extended temporal horizons. Our approach delivers enhanced visual details and more stable throughout the sequence.}
\vspace{-4mm}
\label{fig:compare_single_long}
\end{figure*}

\begin{figure}[t]
  \centering
  
  \includegraphics[width=0.95\columnwidth]{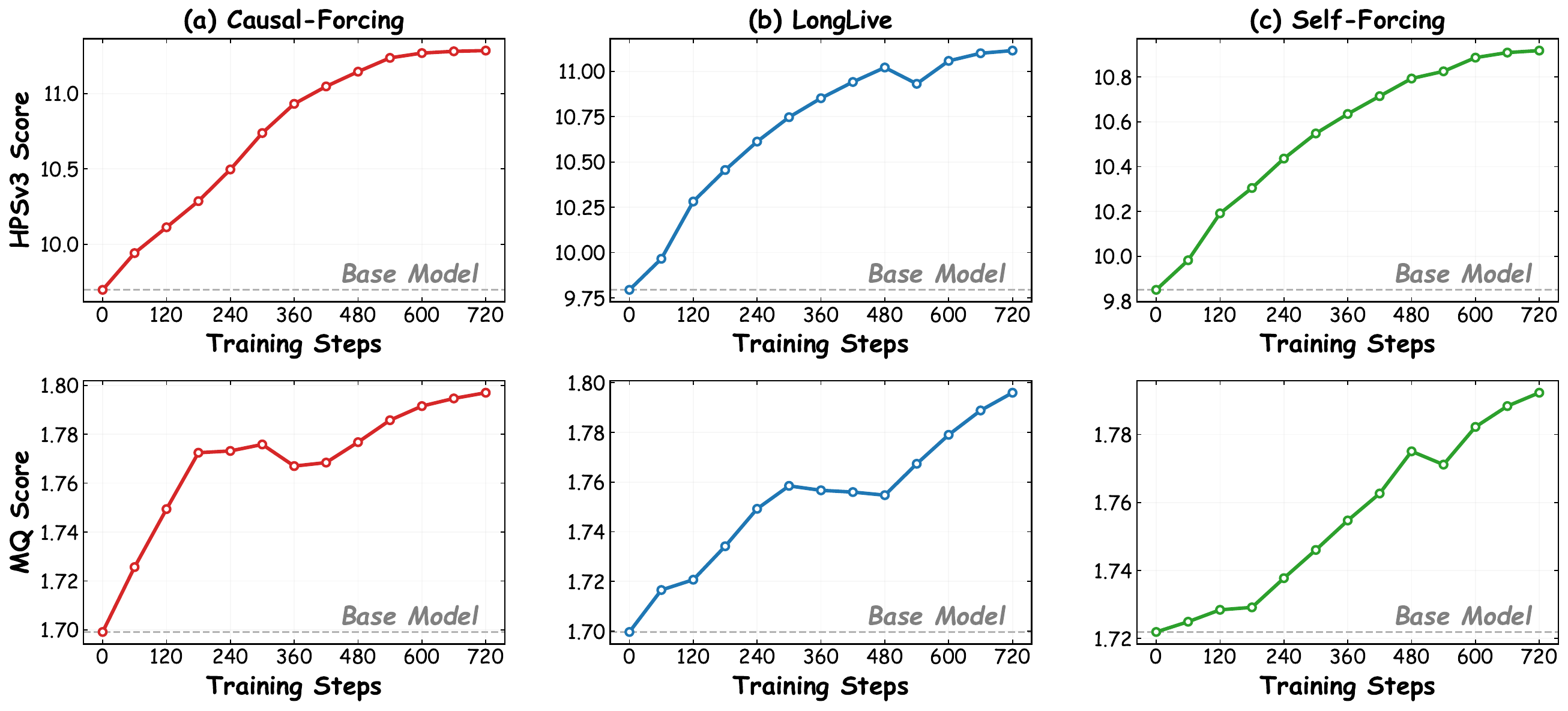}
  \vspace{-2mm}
  \caption{Performance improvements across different models. We evaluate our method on three models. The dashed grey lines indicate the baseline performance of the respective base models. The results demonstrate that our approach consistently improves both HPSv3 and MQ scores across all three models.}
  \label{fig:faces}
  \vspace{-2mm}
\end{figure}

\subsection{Short-Video Single-Prompt Generation}
\label{sec:short_5s}

We first validate our method under the short-video, single-prompt setting. 
Following VBench protocols~\cite{huang2024vbench}, we evaluate models using 946 standard prompts. 
To ensure a fair comparison with Self-Forcing, we utilize the augmented prompt test set during sampling, where prompts are expanded via Qwen2.5-7B-Instruct~\cite{bai2023qwen25} using Wan2.1~\cite{wan2025wan} system prompts. 
We integrate Astrolabe with various distilled AR models, comparing them against native AR models and bidirectional diffusion baselines. 
Quantitative results in Table~\ref{tab:compare_short} show that Astrolabe consistently enhances performance across all Self-Forcing variants. 
Similar gains observed in LongLive~\cite{yang2025longlive} and Causal-Forcing~\cite{zhu2026causal} further demonstrate the framework's generalizability across different base architectures.
To further assess alignment with human preferences, we curate 100 diverse prompts from MovieGenBench~\cite{polyak2024moviegen} for evaluation. 
We compute HPSv3 and Motion Quality scores to quantify improvements in aesthetic appeal and temporal consistency.
Results indicate that our RL-tuned models outperform their base versions in these metrics while maintaining the exact inference speed of the original checkpoints. 
Qualitative results in Figure~\ref{fig:compare_single} further confirm that Astrolabe yields sharper textures and superior motion coherence without sacrificing system throughput.

\begin{table}[t!]
  \centering
  \caption{Quantitative results on VBench-Long benchmarks. Integrating our method consistently improves the performance of long video generation baselines across both video quality and human preference metrics.}
  \label{tab:long_video}
  \begin{tabular}{lccccc}
    \toprule
    \textbf{Method} & \textbf{Total}$\uparrow$ & \textbf{Quality}$\uparrow$ & \textbf{Semantic}$\uparrow$ & \textbf{HPSv3}$\uparrow$ & \textbf{MQ}$\uparrow$ \\
    \midrule
    SkyReels-V2~\cite{chen2025skyreels}    & 75.29 & 80.77 & 53.37 & 8.72 & 1.54 \\
    FramePack~\cite{zhang2025frame}      & 81.95 & 83.61 & 75.32 & 8.94 & 1.58 \\
    \midrule
    Self-Forcing~\cite{huang2025self}   & 81.59 & 83.82 & 72.70 & 9.12 & 1.61 \\
    \rowcolor{ourscolor} \quad $+$ Ours & \textbf{82.03} & \textbf{84.36} & \textbf{72.71} & \textbf{10.38} & \textbf{1.72} \\
    LongLive~\cite{yang2025longlive}       & 83.52 & 85.44 & 75.82 & 9.21 & 1.48 \\
    \rowcolor{ourscolor} \quad $+$ Ours & \textbf{84.07} &\textbf{86.12} & \textbf{75.87} & \textbf{10.67} & \textbf{1.64} \\
    Causal Forcing~\cite{zhu2026causal} & 82.87 & 84.36 & \textbf{76.91} & 9.28 & 1.65 \\
    \rowcolor{ourscolor} \quad $+$ Ours & \textbf{84.24} & \textbf{86.18} & 76.48 & \textbf{10.52} & \textbf{1.74} \\
    \bottomrule
  \end{tabular}
  \vspace{-6mm}
\end{table}

\subsection{Long-Video Single-Prompt Generation}
\label{sec:long_30s}
%

\begin{figure*}[t!]
    \centering
    \includegraphics[width=1.0\linewidth]{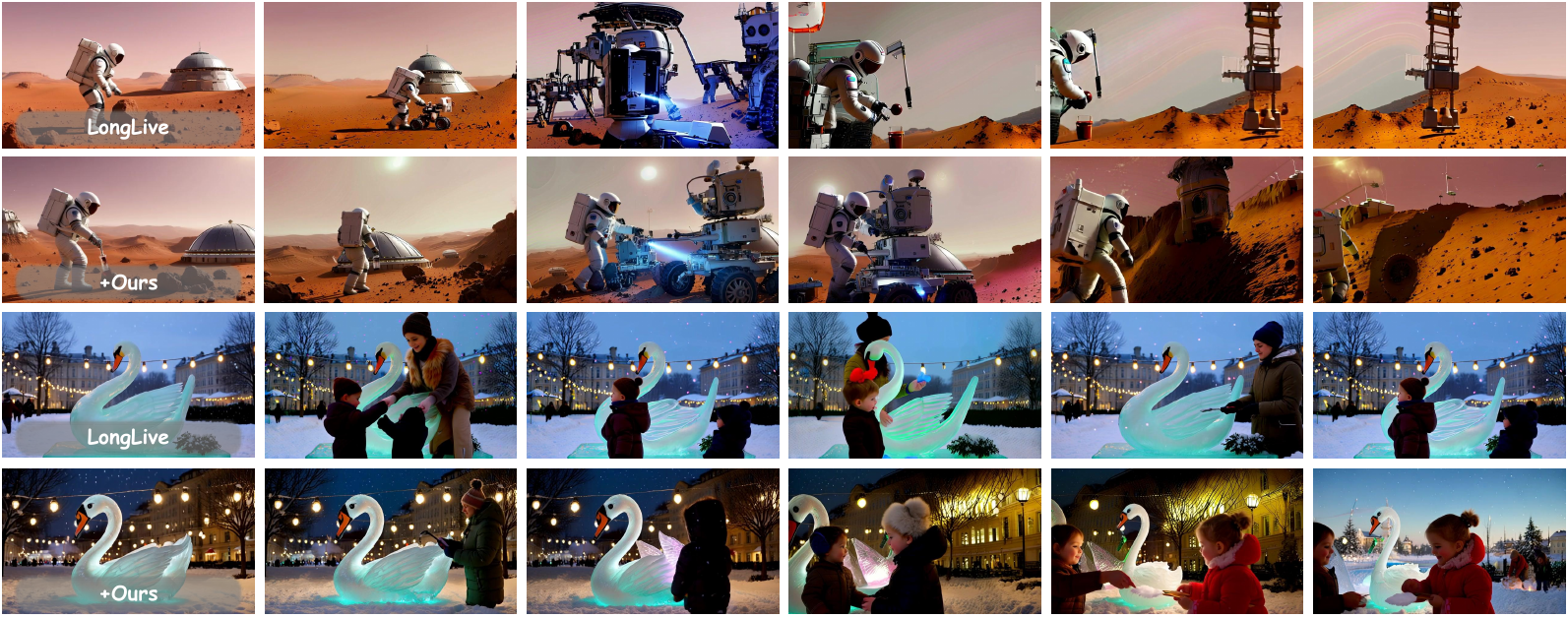}
    \vspace{-4mm}
    \caption{Qualitative comparison of multi-prompt long-video generation. We compare the LongLive~\cite{yang2025longlive} with our method. The generated sequences exhibit noticeable improvements in visual aesthetics and fine details during complex narrative transitions.}
\vspace{-4mm}
\label{fig:compare_multi}
\end{figure*}

Under the single-prompt long-video generation setting, we evaluate our method using VBench-Long protocols. 
For each prompt in the official dataset, we generate a 30-second video and subsequently partition it into localized clips using the standard VBench-Long evaluation scripts. 
Notably, while LongLive natively supports long-video generation, Self-Forcing and Causal-Forcing are exclusively trained on 5-second sequences. 
To enable long-horizon generation for these short-context models, we integrate the Infinity-RoPE~\cite{yesiltepe2025infinityrope} to extrapolate their positional embeddings. 
Furthermore, we rigorously benchmark these configurations against open-source solutions, including SkyReels-V2~\cite{chen2025skyreels} and FramePack~\cite{zhang2025frame}.
Quantitative results in Table~\ref{tab:long_video} report standard VBench-Long metrics measuring long-horizon quality and temporal consistency. 
Results indicate that our RL framework can also improve performance across long-video benchmarks, demonstrating that alignment optimizations conducted on short videos can effectively extrapolate to extended temporal horizons. 
Qualitative results in Figure~\ref{fig:compare_single_long} further confirm that Astrolabe yields sharper textures and superior motion coherence over extended durations.

\subsection{Long-Video Multi-Prompt Generation}
\label{sec:long_60s}

\begin{table}[t]
  \centering
  
  \caption{Quantitative evaluation on long video generation. We compare these overall metrics alongside CLIP Scores evaluated across 10-second intervals (0-60s).}
  \label{tab:long_video_clipscore}
  \setlength{\tabcolsep}{3.5pt}
  \resizebox{\linewidth}{!}{
  \begin{tabular}{l ccc cccccc}
    \toprule
    \multirow{2}{*}{\textbf{Method}}
      & \multirow{2}{*}{\textbf{Quality Score}$\uparrow$}
      & \multirow{2}{*}{\textbf{Consistency Score}$\uparrow$}
      & \multirow{2}{*}{\textbf{Aesthetic Score}$\uparrow$}
      
      & \multicolumn{6}{c}{\textbf{CLIP Score}$\uparrow$} \\
    \cmidrule(lr){5-10}
      &  &  &  & \textbf{0-10} & \textbf{10-20} & \textbf{20-30} & \textbf{30-40} & \textbf{40-50} & \textbf{50-60} \\
    \midrule
    SkyReels-V2~\cite{chen2025skyreels}    & 81.55 & 94.72 & 56.83 & 25.31 & 23.40 & 22.50 & 21.62 & 21.67 & 20.91 \\
    FramePack~\cite{zhang2025frame}      & 84.40 & 96.77 & 59.44 & 26.51 & 22.60 & 22.18 & 21.53 & 21.98 & 21.62 \\
    \midrule
    Self-Forcing~\cite{huang2025self} & 83.94 & 95.74 & 58.45 & 26.24 & \textbf{24.87} & 23.46 & 21.92 & \textbf{22.05} & 21.07 \\
    \rowcolor{ourscolor} + Ours & \textbf{84.72} & \textbf{95.98} & \textbf{59.62} & \textbf{26.42} & 24.75 & \textbf{23.95} & \textbf{22.40} & 21.85 & \textbf{21.50} \\
    \midrule
    LongLive~\cite{yang2025longlive}     & 84.28 & 96.05 & 59.89 & 26.63 & 25.77 & \textbf{24.65} & 23.99 & \textbf{24.52} & 24.11 \\
    \rowcolor{ourscolor} + Ours & \textbf{85.15} & \textbf{96.16} & \textbf{60.75} & \textbf{26.80} & \textbf{26.15} & 24.45 & \textbf{24.55} & 24.30 & \textbf{24.65} \\
    \midrule
    Causal-Forcing~\cite{zhu2026causal} & 84.12 & \textbf{95.88} & 59.15 & 26.45 & \textbf{25.60} & \textbf{23.98} & 22.85 & 22.48 & 22.45 \\
    \rowcolor{ourscolor} + Ours & \textbf{84.95} & 95.63 & \textbf{60.32} & \textbf{26.58} & 25.12 & 23.85 & \textbf{23.40} & \textbf{23.10} & \textbf{22.95} \\
    \bottomrule
  \end{tabular}
  }
  \vspace{-2mm}
\end{table}
To demonstrate that our framework effectively improves human preference alignment, we evaluate Astrolabe in the setting of interactive multi-prompt long-video generation.
We apply our method directly to the baselines, demonstrating how Astrolabe further enhances their capabilities.
Following established protocols from LongLive~\cite{yang2025longlive}, we curate 100 groups of narrative scripts. 
Each group comprises six successive 10-second prompts, yielding 60-second long-form videos. 
To ensure fair comparisons, short-context baselines (Self-Forcing, Causal-Forcing) are adapted for multi-prompt generation via prompt switching during the autoregressive rollout. 
LongLive, conversely, natively supports generative extrapolation with interactive instructions. 
We segment the generated videos at prompt boundaries to evaluate text alignment.
CLIP scores are subsequently computed at 10-second intervals to measure clip-wise semantic adherence.
Quantitative results in Table~\ref{tab:long_video_clipscore} show that Astrolabe improves overall generation quality, with noticeable gains in visual aesthetics and long-range motion consistency. 
Qualitative examples in Figure~\ref{fig:compare_multi} further illustrate these enhancements during extended video generation. 
These results suggest that our framework enhances both frame-level aesthetics and temporal consistency in complex multi-prompt setting.

\subsection{Ablation Studies}
\label{sec:ablation}

\begin{figure}[t!]
  \centering
  \begin{minipage}[b]{0.48\linewidth}
    \centering
    \includegraphics[width=\linewidth]{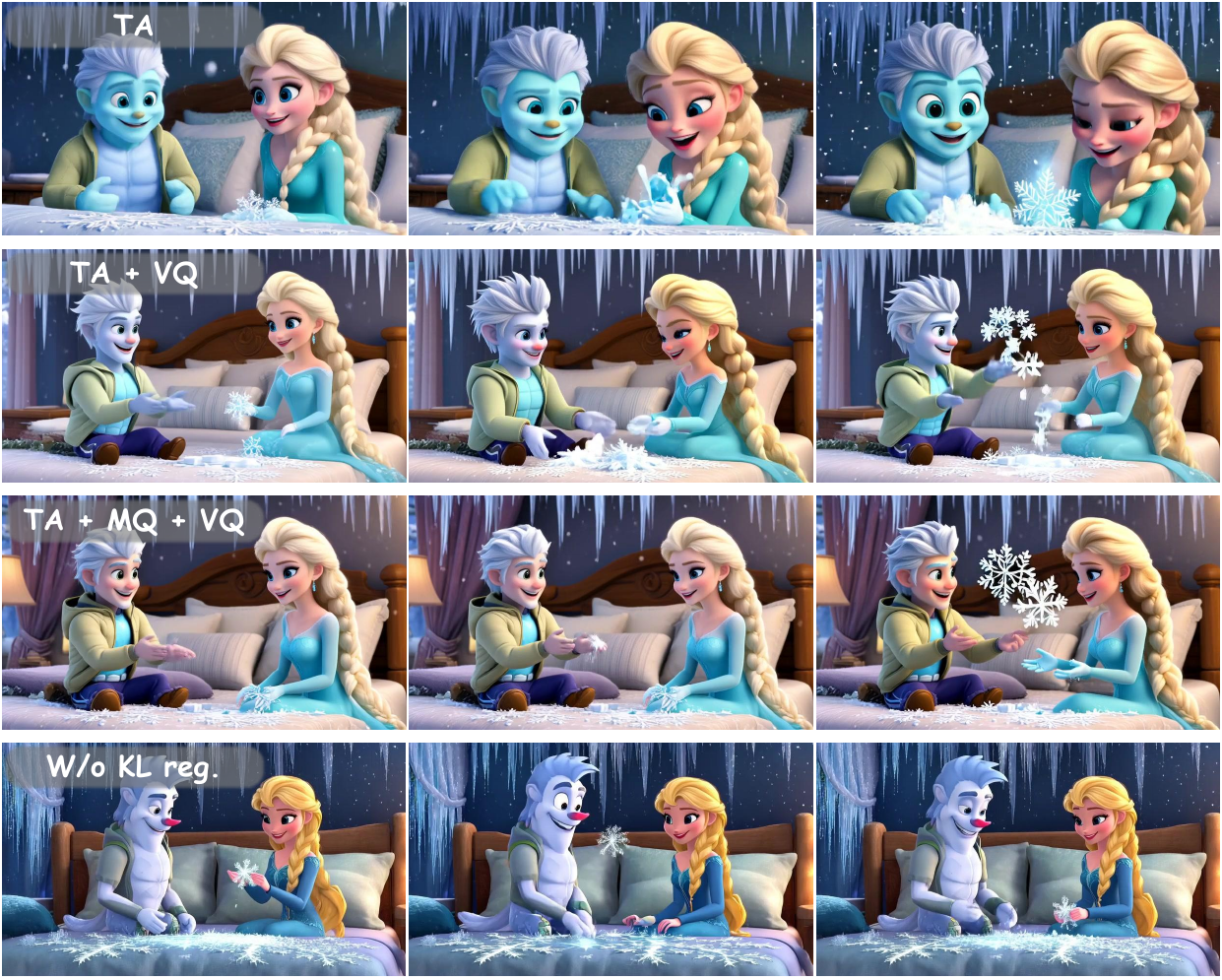}
    \vspace{1mm}
    \centerline{\small (a) Multi-reward design}
  \end{minipage}
  \hfill 
  \begin{minipage}[b]{0.48\linewidth}
    \centering
    \includegraphics[width=\linewidth]{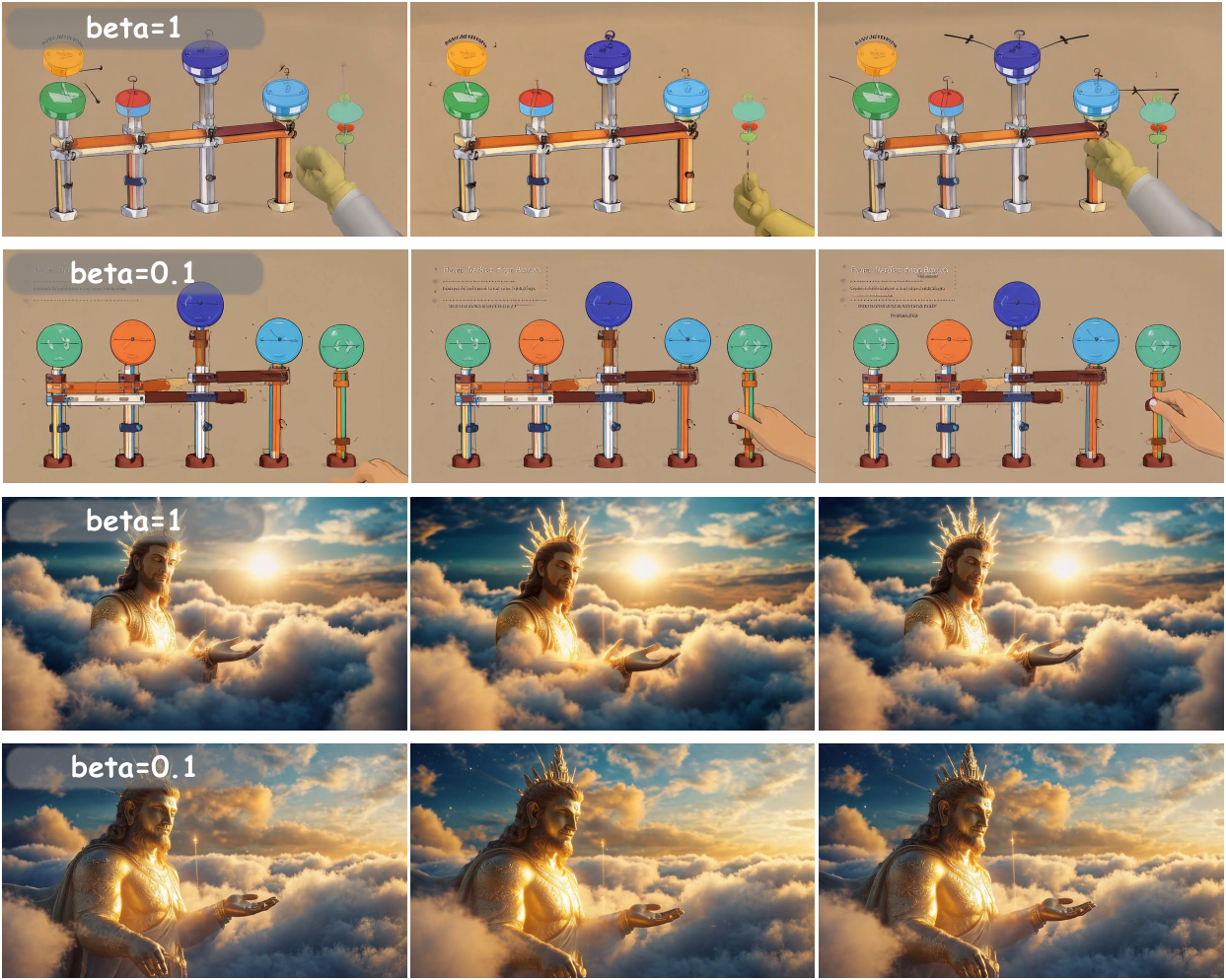}
    \vspace{1mm}
    \centerline{\small (b) Different $\beta$ values}
  \end{minipage}
  
  \vspace{-2mm}
  \caption{\textbf{Ablation studies on reward formulation and interpolation strength.} \textbf{(a)} Optimizing a single objective induces reward hacking and degrades other quality dimensions, whereas our aggregated formulation balances visual aesthetics and motion consistency. \textbf{(b)} The parameter $\beta$ controls the implicit contrast between positive and negative samples, with $\beta=1.0$ yielding the optimal trade-off for quality.}
  \vspace{-6mm}
  
  \label{fig:ablation_reward_beta}
\end{figure}

We conduct ablation studies to validate each component of our method.
All ablations are performed on Causal-Forcing with short-video alignment unless otherwise specified. 
Additional discussion and details can be found in the Supplementary Material.

\noindent{\textbf{Streaming Training Scheme.}}
Table~\ref{tab:ablation_main} compares different rollout and optimization strategies for 30-second video generation.
Sequence-level rollout with full backpropagation causes out-of-memory errors.
Our clip-level group-wise sampling with detached context achieves the best trade-off: it reduces memory consumption by $\approx2\times$ compared to clip-level full backpropagation while improving both HPSv3 and MQ.
The efficiency gains stem from sharing historical context across candidate clips reducing redundant computation.

\begin{table}[t]
\centering
\caption{Ablation studies on each component. \textbf{(a)} and \textbf{(b)} show the impact of the streaming strategy and selective KL regularization. \textbf{(c)} compares different reward combinations.}
\label{tab:ablation_main}
\vspace{-2mm}

\begin{minipage}[t]{\dimexpr(\textwidth-5mm)/2\relax}
  \centering
  \scriptsize
  \textbf{(a) Streaming Training}\vspace{1mm}\\
  \setlength{\tabcolsep}{3pt}
  \setlength{\aboverulesep}{0pt}
  \setlength{\belowrulesep}{0pt}
  \renewcommand{\arraystretch}{1.25}
  \begin{tabular}{lccc}
  \toprule
  \textbf{Config} & \textbf{HPSv3}$\uparrow$ & \textbf{MQ}$\uparrow$ & \textbf{Mem}$\downarrow$ \\
  \midrule\addlinespace[1pt]
  Seq + Full BP  & OOM & OOM & $>$140 \\
  Seq + Detach   & 10.21 & 1.72 & 96.4 \\
  Clip + Full BP & 10.58 & 1.76 & 112.3 \\
  \rowcolor{ourscolor} Clip+Detach & \textbf{10.84} & \textbf{1.80} & \textbf{54.3} \\
  \addlinespace[1pt]\bottomrule
  \end{tabular}
\end{minipage}%
\hspace{5mm}%
\begin{minipage}[t]{\dimexpr(\textwidth-5mm)/2\relax}
  \centering
  \scriptsize
  \textbf{(b) Selective KL Reg.}\vspace{1mm}\\
  \setlength{\tabcolsep}{3pt}
  \setlength{\aboverulesep}{0pt}
  \setlength{\belowrulesep}{0pt}
  \renewcommand{\arraystretch}{1.25}
  \begin{tabular}{lccc}
  \toprule
  \textbf{Strategy} & \textbf{HPSv3}$\uparrow$ & \textbf{MQ}$\uparrow$ & \textbf{TA}$\uparrow$ \\
  \midrule\addlinespace[1pt]
  No KL & 10.67 & 1.74 & -0.068 \\
  Uni. ($\lambda=1e^{-4}$) & 10.52 & 1.71 & 0.012 \\
  Uni. ($\lambda=5e^{-4}$) & 10.28 & 1.68 & 0.028 \\
  \rowcolor{ourscolor} Sel.+EMA & \textbf{10.84} & \textbf{1.80} & \textbf{0.065} \\
  \addlinespace[1pt]\bottomrule
  \end{tabular}
\end{minipage}

\vspace{4mm}

\begin{minipage}[t]{\textwidth}
  \centering
  \scriptsize
  \textbf{(c) Multi-Reward Formulation}\vspace{1mm}\\
  \setlength{\tabcolsep}{4pt}
  \setlength{\aboverulesep}{0pt}
  \setlength{\belowrulesep}{0pt}
  \renewcommand{\arraystretch}{1.25}
  \begin{tabular}{lcccc | lcccc}
  \toprule
  \textbf{Reward} & \textbf{HPSv3}$\uparrow$ & \textbf{MQ}$\uparrow$ & \textbf{TA}$\uparrow$ & \textbf{VB}$\uparrow$ &
  \textbf{Reward} & \textbf{HPSv3}$\uparrow$ & \textbf{MQ}$\uparrow$ & \textbf{TA}$\uparrow$ & \textbf{VB}$\uparrow$ \\
  \midrule\addlinespace[1pt]
  Baseline & 9.48  & 1.69 & -0.015 & 84.04 & VQ + MQ & 10.67 & 1.74 & -0.068 & 83.95 \\
  VQ only  & 10.92 & 1.58 & -0.075 & 83.21 & VQ + TA & 10.71 & 1.62 &  0.055 & 84.12 \\
  MQ only  & 9.31  & 1.82 & -0.058 & 83.67 & MQ + TA &  9.45 & 1.78 &  0.051 & 84.08 \\
  \rowcolor{ourscolor}
  \cellcolor{white} TA only &
  \cellcolor{white} 9.42  &
  \cellcolor{white} 1.62  &
  \cellcolor{white} 0.082 &
  \cellcolor{white} 84.25 &
  \textbf{All (Ours)} & \textbf{10.84} & \textbf{1.80} & \textbf{0.065} & \textbf{84.46} \\
  \addlinespace[1pt]\bottomrule
  \end{tabular}
\end{minipage}
\vspace{-2mm}
\end{table}

\noindent{\textbf{Reward Design and Regularization.}}
Table~\ref{tab:ablation_main} ablates our objective formulation. 
Single-reward optimization induces hacking: VQ-only training collapses into static frames, improving HPSv3 but degrading MQ. 
Our multi-reward formulation (VQ+MQ+TA) prevents this single-objective overfitting, yielding balanced improvements. Furthermore, uniform KL regularization over-constrains learning, while its omission causes instability and early MQ plateaus (Figure~\ref{fig:kl_penalty}). 
Our selective KL penalty with EMA reference updates resolves this by adaptively penalizing only high-uncertainty predictions. This targeted approach preserves optimization freedom for confident samples, effectively balancing exploration with stable convergence across all metrics.



\begin{figure*}[t] 
    \centering
    \begin{subfigure}[b]{0.49\linewidth}
        \centering
        \includegraphics[width=\linewidth]{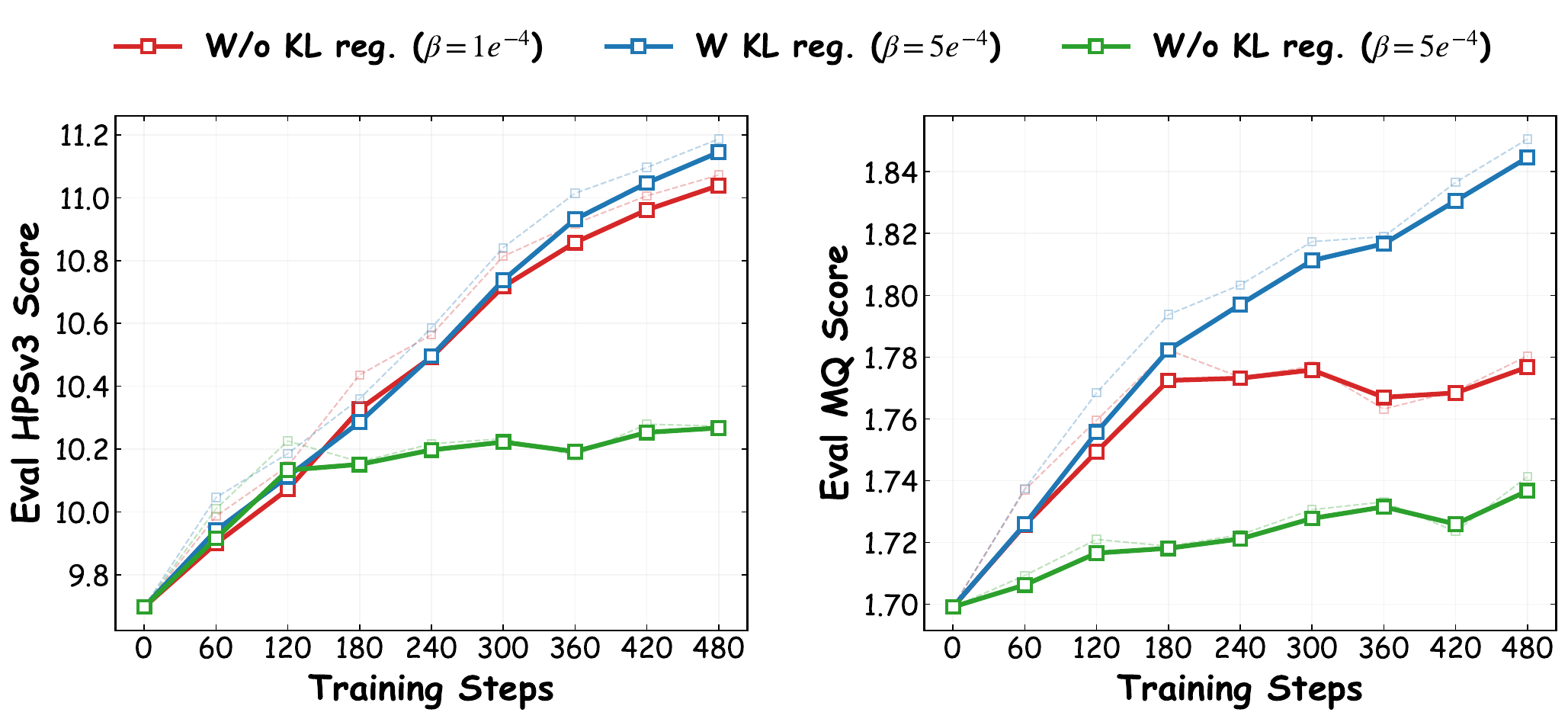}
        \caption{Ablation on KL penalty.}
        \label{fig:kl_penalty}
    \end{subfigure}
    \hfill
    \begin{subfigure}[b]{0.49\linewidth}
        \centering
        \includegraphics[width=\linewidth]{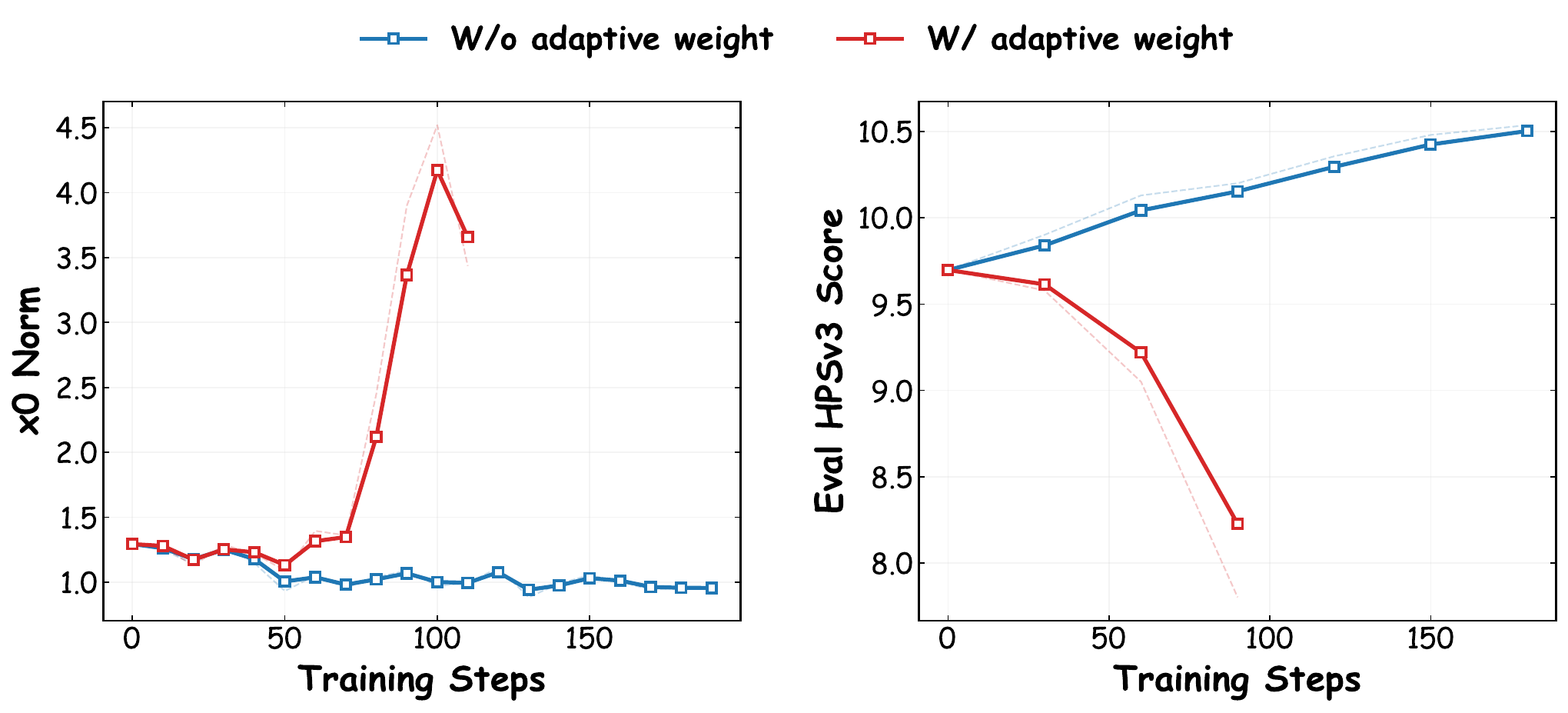}
        \caption{Ablation on adaptive weight.}
        \label{fig:adaptive_weight}
    \end{subfigure}
    
    \vspace{-1mm} 
    \caption{Ablation study of stabilization techniques in Astrolabe. (a) Effect of the selective KL penalty. (b) Impact of the dynamic adaptive weight mechanism.}
    \vspace{-4mm}
    \label{fig:adaptiveweight}
\end{figure*}

\noindent{\textbf{Removing Adaptive Weighting.}}
DiffusionNFT~\cite{zheng2025diffusionnft} scales the loss using a self-normalized $x_0$ denominator. 
However, Figure~\ref{fig:adaptive_weight} demonstrates this adaptive weighting destabilizes distilled AR setting. 
Under large discretization gaps, this dynamic denominator becomes volatile, causing the predicted $x_0$ norm to explode after 50 steps and triggering a sharp collapse in reward. Conversely, removing this scaling factor bounds the $x_0$ norm and ensures steady, monotonic reward improvements.

\noindent{\textbf{Impact of $\beta$.}}
Figure~\ref{fig:ablation_reward_beta} ablates the parameter $\beta$, which determines the scale of the implicit guidance direction integrated into the old policy. 
Empirical results indicate that varying $\beta$ directly influences the temporal dynamics of the generated sequences. In our experimental setup, setting $\beta=1$ yields higher overall visual and motion quality compared to a smaller value such as $\beta=0.1$. Consequently, we adopt $\beta=1$ as the default configuration to maintain generation stability.
\section{Conclusion}
\label{sec:conclusion}
We present Astrolabe, an online RL framework for aligning distilled autoregressive video models with human preferences. Utilizing a memory-efficient, forward-process RL formulation, our method eliminates the trajectory storage overhead of reverse-process alternatives. 
For long-video scalability, we introduce a streaming training scheme with local-window optimization, achieving constant peak memory. To prevent reward hacking, we implement a multi-reward formulation coupled with an uncertainty-aware selective KL penalty.
Extensive experiments across multiple distilled streaming architectures and benchmarks validate the effectiveness and generality of our approach.

\bibliographystyle{splncs04}
\bibliography{main}

\appendix

\setcounter{page}{1}
\setcounter{figure}{0}
\setcounter{table}{0}
\renewcommand{\thesection}{\Alph{section}}
\renewcommand{\thefigure}{S\arabic{figure}}
\renewcommand{\thetable}{S\arabic{table}}

\begin{algorithm}[t]
  \caption{NFT Training with Selective KL Regularization}
  \label{alg:diffusionnft}
  \begin{algorithmic}[1]
  \Require Policy $\pi_\theta$, behavior policy $\pi_{\theta_{\text{old}}}$, KL reference $\pi_{\theta_{\text{ref}}}$; reward functions $\{R_m\}_{m=1}^M$; prompts $\mathcal{D}$; hyperparameters $\beta$, $\lambda_{\text{KL}}$, $K_{\max}$, $\tau_{\text{KL}}$
  \Ensure Optimized policy $\pi_\theta$
  \State Initialize $\theta_{\text{old}} \leftarrow \theta$, $\theta_{\text{ref}} \leftarrow \theta$
  \State Initialize risk buffer $\mathcal{B} \leftarrow \emptyset$, KL ratio $\rho \leftarrow \rho_0$, $k_{\text{last}} \leftarrow 0$
  \For{each epoch $k$}
      \State \textbf{// Phase 1: Rollout and multi-reward evaluation}
      \State Sample prompts $\{c_i\}_{i=1}^B \sim \mathcal{D}$
      \State Generate samples $\{x_0^{(i)}\}$ using $\pi_{\theta_{\text{old}}}$
      \State Compute rewards $\{r_m^{(i)} = R_m(x_0^{(i)}, c_i)\}$ for each reward model $m$
      \State Compute advantages $\{A^{(i)}\}$ from primary reward
      \State \textbf{// Compute reward uncertainty via rank disagreement}
      \For{each reward model $m$}
          \State $\text{rank}_m^{(i)} \leftarrow \text{Rank}(\{r_m^{(j)}\}_{j=1}^B)$ \Comment{Rank samples by reward $m$}
      \EndFor
      \State $\Delta_{\text{rank}}^{(i)} \leftarrow \text{rank}_{\text{primary}}^{(i)} - \frac{1}{M-1}\sum_{m \neq \text{primary}} \text{rank}_m^{(i)}$
      \State \textbf{// Adaptive threshold via risk compensation}
      \State $\rho \leftarrow \textsc{UpdateRiskRatio}(\mathcal{B}, \{\Delta_{\text{rank}}^{(i)}\}, \rho)$
      \State $\tau \leftarrow \text{Percentile}(\{\Delta_{\text{rank}}^{(i)} | \Delta_{\text{rank}}^{(i)} \geq 0\}, 100 - 100\rho)$
      \State $\mathcal{M}^{(i)} \leftarrow \mathbbm{1}[\Delta_{\text{rank}}^{(i)} > \tau]$ \Comment{High-uncertainty mask}
      \State \textbf{// Phase 2: Forward process optimization}
      \For{each mini-batch $(x_0, A, \mathcal{M})$}
          \State Sample $t \sim \mathcal{U}(\mathcal{T})$
          \State $x_t \leftarrow (1-t)x_0 + t\epsilon$, \quad $\epsilon \sim \mathcal{N}(0, I)$
          \State $v_\theta, v_{\theta_{\text{old}}}, v_{\theta_{\text{ref}}} \leftarrow \pi_\theta(x_t, t), \pi_{\theta_{\text{old}}}(x_t, t), \pi_{\theta_{\text{ref}}}(x_t, t)$
          \State \textbf{// Policy loss}
          \State $v^+ \leftarrow \beta \cdot v_\theta + (1-\beta) \cdot v_{\theta_{\text{old}}}$
          \State $v^- \leftarrow (1+\beta) \cdot v_{\theta_{\text{old}}} - \beta \cdot v_\theta$
          \State $\tilde{r} \leftarrow \text{clip}(A / A_{\max}) / 2 + 0.5$
          \State $\mathcal{L}_{\text{policy}} \leftarrow \tilde{r} \|v^+ - x_0\|^2 + (1-\tilde{r}) \|v^- - x_0\|^2$
          \State \textbf{// Selective KL: only on high-uncertainty samples}
          \State $\mathcal{L}_{\text{KL}} \leftarrow \frac{1}{|\mathcal{M}|} \sum_{i: \mathcal{M}^{(i)}=1} \|v_\theta^{(i)} - v_{\theta_{\text{ref}}}^{(i)}\|^2$
          \State $\mathcal{L} \leftarrow \mathcal{L}_{\text{policy}} + \lambda_{\text{KL}} \cdot \mathcal{L}_{\text{KL}}$
          \State Update: $\theta \leftarrow \theta - \eta \nabla_\theta \mathcal{L}$
      \EndFor
      \State \textbf{// Adaptive reference update based on KL divergence}
      \If{$\mathcal{L}_{\text{KL}} > \tau_{\text{KL}}$ \textbf{or} $k - k_{\text{last}} > K_{\max}$}
          \State $\theta_{\text{ref}} \leftarrow \theta$ \Comment{Reset KL reference}
          \State $k_{\text{last}} \leftarrow k$
      \EndIf
      \State $\theta_{\text{old}} \leftarrow \gamma \cdot \theta_{\text{old}} + (1-\gamma) \cdot \theta$
  \EndFor
  \end{algorithmic}
\end{algorithm}

\begin{algorithm}[t]
  \caption{Streaming Training for Long Video}
  \label{alg:streaming_diffusionnft}
  \begin{algorithmic}[1]
  \Require Policy $\pi_\theta$, behavior policy $\pi_{\theta_{\text{old}}}$, KL reference $\pi_{\theta_{\text{ref}}}$; rewards $\{R_m\}$; prompts $\mathcal{D}$; window size $W$,
  total frames $F$
  \Ensure Optimized policy $\pi_\theta$
  \State Initialize $\theta_{\text{old}} \leftarrow \theta$, $\theta_{\text{ref}} \leftarrow \theta$, risk buffer $\mathcal{B} \leftarrow \emptyset$, $\rho \leftarrow \rho_0$
  \For{each epoch $k$}
      \State \textbf{// Select training window (synced across GPUs)}
      \State $s \leftarrow \textsc{SelectWindow}(F, W)$, \quad $F_{\text{req}} \leftarrow s + W$
      \State \textbf{// Phase 1: Efficient rollout with multi-reward}
      \State Sample prompts $\{c_i\}_{i=1}^B \sim \mathcal{D}$
      \State Generate partial videos $\{x_0^{(i)}[0:F_{\text{req}}]\}$ using $\pi_{\theta_{\text{old}}}$
      \State Extract window: $x_{0,W}^{(i)} \leftarrow x_0^{(i)}[s:s+W]$
      \State Compute rewards $\{r_m^{(i)} = R_m(x_{0,W}^{(i)}, c_i)\}$ for each model $m$
      \State Compute advantages $\{A^{(i)}\}$
      \State \textbf{// Reward uncertainty via rank disagreement}
      \For{each reward model $m$}
          \State $\text{rank}_m^{(i)} \leftarrow \text{Rank}(\{r_m^{(j)}\}_{j=1}^B)$
      \EndFor
      \State $\Delta_{\text{rank}}^{(i)} \leftarrow \text{rank}_{\text{primary}}^{(i)} - \frac{1}{M-1}\sum_{m \neq \text{primary}} \text{rank}_m^{(i)}$
      \State $\rho \leftarrow \textsc{UpdateRiskRatio}(\mathcal{B}, \{\Delta_{\text{rank}}^{(i)}\}, \rho)$
      \State $\tau \leftarrow \text{Percentile}(\{\Delta_{\text{rank}}^{(i)} \geq 0\}, 100 - 100\rho)$
      \State $\mathcal{M}^{(i)} \leftarrow \mathbbm{1}[\Delta_{\text{rank}}^{(i)} > \tau]$
      \State \textbf{// Phase 2: Window-focused optimization with selective KL}
      \For{each mini-batch $(x_0, A, \mathcal{M})$}
          \State Sample $t \sim \mathcal{U}(\mathcal{T})$, \quad $x_t \leftarrow (1-t)x_0 + t\epsilon$
          \State $v_\theta, v_{\theta_{\text{old}}}, v_{\theta_{\text{ref}}} \leftarrow \textsc{StreamingNFT}(x_t, t, s, W)$
          \State Extract: $v_{\theta,W}, v_{\theta_{\text{old}},W}, v_{\theta_{\text{ref}},W}, x_{0,W}$
          \State \textbf{// Policy loss on window}
          \State $v^+ \leftarrow \beta \cdot v_{\theta,W} + (1-\beta) \cdot v_{\theta_{\text{old}},W}$
          \State $v^- \leftarrow (1+\beta) \cdot v_{\theta_{\text{old}},W} - \beta \cdot v_{\theta,W}$
          \State $\tilde{r} \leftarrow \text{clip}(A / A_{\max}) / 2 + 0.5$
          \State $\mathcal{L}_{\text{policy}} \leftarrow \tilde{r} \|v^+ - x_{0,W}\|^2 + (1-\tilde{r}) \|v^- - x_{0,W}\|^2$
          \State \textbf{// Selective KL on high-uncertainty samples}
          \State $\mathcal{L}_{\text{KL}} \leftarrow \frac{1}{|\mathcal{M}|} \sum_{i: \mathcal{M}^{(i)}=1} \|v_{\theta,W}^{(i)} - v_{\theta_{\text{ref}},W}^{(i)}\|^2$
          \State Update: $\theta \leftarrow \theta - \eta \nabla_\theta (\mathcal{L}_{\text{policy}} + \lambda_{\text{KL}} \mathcal{L}_{\text{KL}})$
      \EndFor
      \State \textbf{// Adaptive reference update}
      \If{$\mathcal{L}_{\text{KL}} > \tau_{\text{KL}}$ \textbf{or} $k - k_{\text{last}} > K_{\max}$}
          \State $\theta_{\text{ref}} \leftarrow \theta$, \quad $k_{\text{last}} \leftarrow k$
      \EndIf
      \State $\theta_{\text{old}} \leftarrow \gamma \cdot \theta_{\text{old}} + (1-\gamma) \cdot \theta$
  \EndFor
  \end{algorithmic}
  \end{algorithm}

\section{Hyperparameters}
\label{sec:supp_hyper}
We provide detailed hyperparameter in Table~\ref{tab:hyperparameters}.

\begin{table}[h]
\centering
\caption{Comprehensive Hyperparameters for Astrolabe Training. We detail the configurations used across the model architecture, optimization, diffusion process, reinforcement learning, and streaming rollout.}
\label{tab:hyperparameters}
\resizebox{0.95\textwidth}{!}{
\begin{tabular}{llc}
\toprule
\textbf{Module} & \textbf{Hyperparameter} & \textbf{Value} \\
\midrule
\multirow{2}{*}{\textbf{Model \& Video Specs}} 
& Base Architecture & Causal Wan 2.1 \\
& Video Resolution ($H \times W$) & $480 \times 832$ \\
\midrule
\multirow{4}{*}{\textbf{LoRA Fine-Tuning}} 
& Rank ($r$) & 256 \\
& Scaling Factor ($\alpha$) & 256 \\
& Dropout Rate & 0.0 \\
& Gradient Checkpointing & True \\
\midrule
\multirow{7}{*}{\textbf{Optimization}} 
& Hardware & 48 $\times$ NVIDIA GPUs \\
& Precision Mode & \texttt{bf16} (Mixed Precision) \\
& Optimizer & AdamW ($\beta_1=0.9, \beta_2=0.999$) \\
& Learning Rate ($\eta$) & 1e-5 \\
& Weight Decay & 1e-4 \\
& Epsilon ($\epsilon$) & 1e-8 \\
& Max Gradient Norm & 1.0 \\
\midrule
\multirow{3}{*}{\textbf{Diffusion Process}} 
& Distillation Timesteps ($T$) & 4 (sampled from $[1000, 750, 500, 250]$) \\
& Timestep Shift & 5.0 \\
& Forward Process Noise Level & 0.7 \\
\midrule
\multirow{6}{*}{\textbf{Selective Regularization}} 
& Interpolation Strength ($\beta$) & 1.0 \\
& KL Penalty Weight ($\lambda_{KL}$) & 1e-4 \\
& Advantage Clip Max & 5.0 \\
& Reward Normalization & Global Std with Per-Prompt Tracking \\
& EMA Decay Rate ($\gamma$) & 0.9 \\
& EMA Update Interval & 1 step \\
\midrule
\multirow{3}{*}{\textbf{Streaming Rollout}} 
& Window Selection Strategy & Random Choice \\
& Rolling Window Size ($L$) & 21 (Self-Forcing) / 15 (Longlive) \\
& Frame Sink Size ($S$) & 3 \\
\bottomrule
\end{tabular}
}
\end{table}




\section{Theoretical Proofs and Analysis}
\label{sec:appendix_theory}

We extend the theoretical foundation of DiffusionNFT~\cite{zheng2025diffusionnft} from global, non-autoregressive continuous-time diffusion to the settings of autoregressive (AR) streaming generation and few-step distilled models. Specifically, we prove the optimality of local advantage guidance (Theorem~\ref{thm:advantage_guidance}), and establish a reward lower bound via selective KL penalty (Theorem~\ref{thm:kl_lower_bound}).

\subsection{Conditional Improvement via Advantage Guidance}

\begin{theorem}[Conditional Improvement via Advantage Guidance]
\label{thm:advantage_guidance}
Let $\mathcal{C}_n$ be the frozen context at step $n$, $\tilde{r}(x_n, \mathcal{C}_n) \in [0,1]$ the normalized advantage, and $\alpha \triangleq \alpha(x_n^t, \mathcal{C}_n) = \mathbb{E}_{\pi_{\mathrm{old}}}[\tilde{r} \mid x_n^t, \mathcal{C}_n]$ the posterior positive probability. Define the implicit positive/negative distributions:
\begin{equation}
    \pi^+(x_n \mid \mathcal{C}_n) = \frac{\tilde{r} \cdot \pi_{\mathrm{old}}}{\mathbb{E}[\tilde{r}]}, \quad
    \pi^-(x_n \mid \mathcal{C}_n) = \frac{(1-\tilde{r}) \cdot \pi_{\mathrm{old}}}{\mathbb{E}[1-\tilde{r}]}
\end{equation}
and the local policy loss ($\beta > 0$ controls negative repulsion strength):
\begin{equation}
    \mathcal{L}_{\mathrm{policy}}^{(n)}(\theta) = \mathbb{E}_{t,\, x_n^t} \Big[ \| v_\theta - v^+ \|^2 + \beta \| v_\theta - v^- \|^2 \Big]
\end{equation}
where $v^{\pm}$ are conditional flow matching targets under $\pi^{\pm}$. Then the optimal velocity field satisfies:
\begin{equation}
    v_\theta^* = v_{\mathrm{old}} + \frac{1 - \alpha(1+\beta)}{(1+\beta)(1-\alpha)} \big( v^+ - v_{\mathrm{old}} \big)
\end{equation}
which strictly shifts the local transition toward the advantage-weighted policy in the typical regime $\alpha(1+\beta) < 1$.
\end{theorem}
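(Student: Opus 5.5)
The proof has three steps: a pointwise minimization of the quadratic loss, a mixture identity for the old velocity field, and a substitution of the second into the first. The frozen context $\mathcal{C}_n$ enters only as a fixed conditioning variable. Since the KV cache of $x_{<n}$ is detached, it plays exactly the role the text condition plays in the DiffusionNFT analysis, so everything below is conditional on $\mathcal{C}_n$ and reduces to the single-step, non-autoregressive argument.

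\textbf{Step 1 (pointwise minimizer).} Assume $v_\theta$ is expressive enough to be optimized independently at each $(x_n^t, t, \mathcal{C}_n)$. Fix such a point. I will read $v^{\pm}$ in the loss as the marginal velocities $v^{\pm}(x_n^t, \mathcal{C}_n)$ of the forward flow $x_n^t=(1-t)x_n+t\epsilon$ under $\pi^{\pm}$. If instead they are read as per-sample conditional targets, the same conclusion follows from the standard conditional-to-marginal flow matching equivalence. The integrand $\|v-v^+\|^2+\beta\|v-v^-\|^2$ is then strictly convex in $v$. Setting its gradient to zero gives
\begin{equation*}
v_\theta^*=\frac{v^+ + \beta v^-}{1+\beta}.
\end{equation*}

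\textbf{Step 2 (mixture identity).} By the definitions, $\pi_{\mathrm{old}}=\mathbb{E}[\tilde r]\,\pi^+ + \mathbb{E}[1-\tilde r]\,\pi^-$. The same linear forward kernel is applied to each component, so the noisy marginals decompose with the same weights. The marginal velocity of a mixture of flows is the posterior-weighted average of the component velocities. The posterior weight of the positive component at $x_n^t$ equals
\begin{equation*}
\frac{\mathbb{E}[\tilde r]\,p^+_t(x_n^t)}{p^{\mathrm{old}}_t(x_n^t)} = \mathbb{E}_{\pi_{\mathrm{old}}}[\tilde r\mid x_n^t,\mathcal{C}_n] = \alpha.
\end{equation*}
Hence $v_{\mathrm{old}}=\alpha v^+ + (1-\alpha) v^-$. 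I expect this to be the main obstacle: the identity $p^{\pm}_t\propto\mathbb{E}[\tilde r\ \text{or}\ 1-\tilde r\mid x_n^t]\,p^{\mathrm{old}}_t$ must be written out carefully via Bayes' rule, and one must check it survives with the context-conditioned KV cache. I would do this by writing each velocity as $\mathbb{E}[\dot x_n^t\mid x_n^t]$ under the respective joint law and splitting the expectation under $\pi_{\mathrm{old}}$ as $\tilde r+(1-\tilde r)$. The identity also requires $\alpha<1$ to divide, which holds whenever some negative mass is present at $x_n^t$.

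\textbf{Step 3 (substitution and sign).} Solving the mixture identity gives $v^-=(v_{\mathrm{old}}-\alpha v^+)/(1-\alpha)$. Substituting into Step 1 and subtracting $v_{\mathrm{old}}$, the coefficient of $v_{\mathrm{old}}$ in the numerator is
\begin{equation*}
\beta-(1+\beta)(1-\alpha)=-(1-\alpha(1+\beta)),
\end{equation*}
and the coefficient of $v^+$ is $1-\alpha-\alpha\beta$, which is the same quantity with opposite sign. This yields exactly
\begin{equation*}
v_\theta^*=v_{\mathrm{old}}+\frac{1-\alpha(1+\beta)}{(1+\beta)(1-\alpha)}\,(v^+-v_{\mathrm{old}}).
\end{equation*}
Finally, when $\alpha(1+\beta)<1$ the numerator is positive. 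Because $0\le\alpha<1$ and $\beta>0$, the denominator is positive as well. The coefficient is therefore strictly positive, so $v_\theta^*$ moves from $v_{\mathrm{old}}$ strictly in the direction of the advantage-weighted velocity $v^+$, which is the claimed improvement.
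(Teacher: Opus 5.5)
Your proof is correct and is essentially the paper's argument: the Bayes/total-expectation mixture identity $v_{\mathrm{old}}=\alpha v^+ + (1-\alpha)v^-$ combined with the stationarity condition of a strictly convex quadratic. There are two differences, both minor. You minimize first and eliminate $v^-$ afterwards, whereas the paper substitutes $u=v_\theta-v_{\mathrm{old}}$ and $\delta^+=v^+-v_{\mathrm{old}}$ before differentiating. You also write out the Bayes computation of $\alpha$ and the requirement $\alpha<1$, which the paper leaves implicit.

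One small caution concerns your parenthetical that per-sample conditional targets give the same minimizer. This holds only if both terms are averaged under a common law of $x_n^t$, with $x_n\sim\pi^{\pm}(\cdot\mid x_n^t,\mathcal{C}_n)$. If each term instead draws $x_n^t$ from its own forward marginal $p_t^{\pm}$, the pointwise minimizer becomes $(p_t^+v^+ + \beta p_t^-v^-)/(p_t^+ + \beta p_t^-)$. So the stated formula holds under the marginal-velocity reading you actually use, which is also the reading the paper's pointwise identity presupposes.
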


\begin{proof}
Since the joint distribution factorizes as $p(x_{1:N}) = \prod_n p(x_n \mid \mathcal{C}_n)$, the velocity field at step $n$ is strictly conditioned on $\mathcal{C}_n$. By Bayes' theorem, the posterior decomposes as:
\begin{equation}
    \pi_{\mathrm{old}}(x_n^0 \mid x_n^t, \mathcal{C}_n) = \alpha \, \pi^+(x_n^0 \mid x_n^t, \mathcal{C}_n) + (1-\alpha) \, \pi^-(x_n^0 \mid x_n^t, \mathcal{C}_n)
\end{equation}
By the law of total expectation, the baseline velocity inherits this mixture:
\begin{equation} \label{eq:v_old_mix}
    v_{\mathrm{old}} = \alpha \, v^+ + (1-\alpha) \, v^-
\end{equation}
Let $\delta^+ = v^+ - v_{\mathrm{old}}$. From Eq.~\eqref{eq:v_old_mix}, $v^- - v_{\mathrm{old}} = -\frac{\alpha}{1-\alpha}\delta^+$. Substituting into the loss and writing $u = v_\theta - v_{\mathrm{old}}$:
\begin{equation}
    \mathcal{L} = \mathbb{E}\Big[ \|u - \delta^+\|^2 + \beta \big\|u + \tfrac{\alpha}{1-\alpha}\delta^+\big\|^2 \Big]
\end{equation}
Setting $\partial \mathcal{L}/\partial u = 0$:
\begin{equation}
    (1+\beta)\, u^* = \delta^+ - \frac{\beta\alpha}{1-\alpha}\,\delta^+ = \frac{1-\alpha(1+\beta)}{1-\alpha}\,\delta^+
\end{equation}
which yields the stated result. When $\alpha > 0$ and $\alpha(1+\beta) < 1$, the shift aligns with $v^+ - v_{\mathrm{old}}$, pushing the model toward the high-advantage region. As $\beta \to 0$, this degrades to standard reward-weighted regression. Thus, optimizing the local advantage strictly improves $\pi_\theta(x_n \mid \mathcal{C}_n)$ without requiring gradients across the full trajectory.
\end{proof}

\subsection{Performance Lower Bound with Selective Trust Region}

\begin{theorem}[Performance Lower Bound with Selective Trust Region]
\label{thm:kl_lower_bound}
Let $\hat{R}(x)$ be the proxy reward, $R^*(x)$ the true preference with $|R^*| \leq R_{\max}$, and $\mathcal{U} = \{x \mid \Delta_{\mathrm{rank}}(x) > \tau\}$ the high-uncertainty region. Assume:
\begin{itemize}
    \item \textbf{(A3)} For $x \notin \mathcal{U}$: $|\hat{R}(x) - R^*(x)| \leq \epsilon_{\mathrm{safe}}$.
    \item \textbf{(A4)} Under $\pi_{\mathrm{ref}}$: $\mathbb{E}_{\pi_{\mathrm{ref}}}[|\hat{R} - R^*| \mid x \in \mathcal{U}] \leq \epsilon_{\mathrm{risk}}$, with $\epsilon_{\mathrm{risk}} \gg \epsilon_{\mathrm{safe}}$.
\end{itemize}
Define the selective velocity MSE in $\mathcal{U}$:
\begin{equation}
    \mathcal{L}_{\mathrm{KL}}(\theta) = \mathbb{E}_{x \sim \pi_\theta,\, x \in \mathcal{U}} \int_0^1 \frac{\|v_\theta(x_t,t) - v_{\mathrm{ref}}(x_t,t)\|^2}{\sigma^2(t)} \, dt
\end{equation}
Then the true reward of $\pi_\theta$ is lower-bounded by:
\begin{equation}
    \mathbb{E}_{\pi_\theta}[R^*] \geq \mathbb{E}_{\pi_\theta}[\hat{R}] - \epsilon_{\mathrm{safe}} - \pi_\theta(\mathcal{U}) \big( \epsilon_{\mathrm{risk}} + 2R_{\max}\sqrt{\mathcal{L}_{\mathrm{KL}}} \big)
\end{equation}
\end{theorem}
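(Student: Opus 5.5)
We split the expected reward gap according to whether $x$ lies in the uncertainty region. Write
\[
\mathbb{E}_{\pi_\theta}[R^*] - \mathbb{E}_{\pi_\theta}[\hat R] = -\mathbb{E}_{\pi_\theta}\big[(\hat R - R^*)\mathbbm{1}[x\notin\mathcal{U}]\big] - \mathbb{E}_{\pi_\theta}\big[(\hat R - R^*)\mathbbm{1}[x\in\mathcal{U}]\big].
\]
Assumption (A3) bounds the first term in absolute value by $\pi_\theta(\mathcal{U}^c)\,\epsilon_{\mathrm{safe}} \le \epsilon_{\mathrm{safe}}$. The second term equals $\pi_\theta(\mathcal{U})\,\mathbb{E}_{\pi_\theta}[\hat R - R^*\mid x\in\mathcal{U}]$. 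So it suffices to show
\[
\mathbb{E}_{\pi_\theta}\big[|\hat R-R^*|\mid \mathcal{U}\big] \le \epsilon_{\mathrm{risk}} + 2R_{\max}\sqrt{\mathcal{L}_{\mathrm{KL}}}.
\]

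The second step is a change of measure from $\pi_\theta(\cdot\mid\mathcal{U})$ to $\pi_{\mathrm{ref}}(\cdot\mid\mathcal{U})$, where (A4) applies. For $f=|\hat R-R^*|$ we have
\[
\mathbb{E}_{\pi_\theta}[f\mid\mathcal{U}] \le \mathbb{E}_{\pi_{\mathrm{ref}}}[f\mid\mathcal{U}] + \|f\|_\infty\cdot 2\,\mathrm{TV}\big(\pi_\theta(\cdot\mid\mathcal{U}),\pi_{\mathrm{ref}}(\cdot\mid\mathcal{U})\big).
\]
Since $|\hat R|$ is not assumed bounded, I would either also treat $\hat R$ as bounded by $R_{\max}$, or absorb this into the constant. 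To recover exactly the factor $2R_{\max}$ I would instead use the signed version on $\hat R - R^*$, with $\sup|\hat R-R^*|\le 2R_{\max}$ and the oscillation form $\le \mathrm{osc}(f)\,\mathrm{TV}$. The constant bookkeeping here needs care but is routine.

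The third step bounds the total variation by $\sqrt{\mathcal{L}_{\mathrm{KL}}}$ through Pinsker, $\mathrm{TV}\le\sqrt{\tfrac12 \mathrm{KL}}$, combined with the Girsanov identity for the probability-flow / SDE path measures. For two diffusions sharing an initial noise distribution, with drifts derived from $v_\theta$ and $v_{\mathrm{ref}}$ and diffusion coefficient $\sigma(t)$, the path-space divergence is
\[
\mathrm{KL}(\mathbb{P}_\theta\|\mathbb{P}_{\mathrm{ref}}) = \tfrac12\,\mathbb{E}_{\mathbb{P}_\theta}\!\int_0^1 \frac{\|v_\theta-v_{\mathrm{ref}}\|^2}{\sigma^2(t)}\,dt.
\]
(The precise constant depends on how the velocity-to-score conversion scales the drift.) Data processing then passes this to the endpoint marginals. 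Restricting to $\mathcal{U}$ uses the chain rule for KL over the event, which bounds the conditional divergence by the integrand averaged over trajectories ending in $\mathcal{U}$; this is exactly what $\mathcal{L}_{\mathrm{KL}}$ is defined as. Combining the three steps and collecting constants gives the stated bound.

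I expect the main obstacle to be this last step. Conditioning the path-measure KL on the endpoint event $\mathcal{U}$ is not directly controlled by the restricted integral: the conditional KL picks up a $\log(\pi_\theta(\mathcal{U})/\pi_{\mathrm{ref}}(\mathcal{U}))$ term. In addition, for the distilled four-step sampler, "$\sigma(t)$" must be interpreted through an SDE surrogate of the discrete solver. My first attempt is to bound the unconditional TV via the Girsanov integral over all trajectories, and then note that the contribution outside $\mathcal{U}$ is already handled by the $\epsilon_{\mathrm{safe}}$ term. If that fails, I would state an explicit regularity assumption that the restricted Girsanov integral dominates the conditional KL.
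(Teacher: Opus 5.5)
Your outline is the paper's own proof: split over $\mathcal{U}$ and $\mathcal{U}^c$, apply (A3) on the safe part, transfer (A4) from $\pi_{\mathrm{ref}}(\cdot\mid\mathcal{U})$ to $\pi_\theta(\cdot\mid\mathcal{U})$ with a TV bound, then use Pinsker plus Girsanov. The paper also silently needs $|\hat R|\le R_{\max}$ to get $|f|\le 2R_{\max}$, and $4R_{\max}\cdot\tfrac12\sqrt{\mathcal{L}_{\mathrm{KL}}}$ gives the constant. The paper closes the last step only by asserting that ``the KL divergence restricted to $\mathcal{U}$ equals $\tfrac12\mathcal{L}_{\mathrm{KL}}$.'' The step you flagged is a genuine gap, and it cannot be closed, because the stated inequality is false.

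Here is a counterexample. Let $dX_t=\pm\delta\,dt+dW_t$ with $X_0=0$, using $+$ for $\pi_\theta$, $-$ for $\pi_{\mathrm{ref}}$, and $\sigma\equiv 1$. The integrand is then the constant $4\delta^2$, so $\mathcal{L}_{\mathrm{KL}}\le 4\delta^2$ under either reading of the restricted expectation. Take $\mathcal{U}=\{|x|>K\}$, $\delta=1/K$, $R^*\equiv 0$ and $\hat R=R_{\max}\mathbbm{1}[x>K]$. Then $\epsilon_{\mathrm{safe}}=0$, and by reflection symmetry $\epsilon_{\mathrm{risk}}=R_{\max}(1-p)$ with $p=\pi_\theta(X_1>K\mid\mathcal{U})$. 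The claimed bound therefore reduces to $2p-1\le 2\sqrt{\mathcal{L}_{\mathrm{KL}}}\le 4/K$. But the tail ratio $\pi_\theta(X_1>K)/\pi_\theta(X_1<-K)$ tends to $e^{2K\delta}=e^{2}$, so $2p-1\to\tanh 1\approx 0.76$ as $K\to\infty$.

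The mechanism is the one you suspected. Under $\mathbb{P}_\theta$ the path log-density is $2\delta X_1=2\delta^2+2\delta W_1$. Its $dt$-part is $O(\delta^2)$, but its martingale part is of order one on the endpoint event $\mathcal{U}$, and conditioning on an endpoint event does not remove it.

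Neither fallback repairs this.
\begin{itemize}
\item The chain rule does not give what your third paragraph says. It gives $\mathrm{KL}(\mathbb{P}_\theta(\cdot\mid\mathcal{U})\,\|\,\mathbb{P}_{\mathrm{ref}}(\cdot\mid\mathcal{U}))\le \mathrm{KL}(\mathbb{P}_\theta\|\mathbb{P}_{\mathrm{ref}})/\pi_\theta(\mathcal{U})$, which is the full divergence divided by $\pi_\theta(\mathcal{U})$, not the restricted integral. Likewise, the TV between the conditionals is at most $2\,\mathrm{TV}(\pi_\theta,\pi_{\mathrm{ref}})/\pi_\theta(\mathcal{U})$.
\item The contribution of trajectories ending in $\mathcal{U}^c$ to the full Girsanov integral is a drift discrepancy. $\epsilon_{\mathrm{safe}}$ is a reward error and does not absorb it.
\item In the example the full path KL is itself only $2\delta^2$. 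Substituting the unrestricted integral therefore still leaves the stated form false.
\end{itemize}
What the unconditional route does prove is a different bound:
\[
\mathbb{E}_{\pi_\theta}[R^*]\ \ge\ \mathbb{E}_{\pi_\theta}[\hat R]-\epsilon_{\mathrm{safe}}-\pi_\theta(\mathcal{U})\,\epsilon_{\mathrm{risk}}-c\,R_{\max}\sqrt{\mathrm{KL}(\mathbb{P}_\theta\|\mathbb{P}_{\mathrm{ref}})}.
\]
Here $c$ is an absolute constant, there is no $\pi_\theta(\mathcal{U})$ prefactor on the divergence term, and the divergence is unrestricted, so the selective penalty does not control it.

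Your second fallback, assuming the restricted integral dominates the conditional KL, already fails for constant drifts. It is not a regularity condition but a restatement of the missing step. The velocity-to-drift constant and the SDE surrogate for the four-step sampler, which you also raise, are further points the paper passes over, but they are secondary.
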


\begin{proof}
\textbf{Step 1: Regional decomposition.}
\begin{equation}
    \mathbb{E}_{\pi_\theta}[R^*] = \mathbb{E}_{\pi_\theta}[R^* \mid x \notin \mathcal{U}] \cdot \pi_\theta(\mathcal{U}^C) + \mathbb{E}_{\pi_\theta}[R^* \mid x \in \mathcal{U}] \cdot \pi_\theta(\mathcal{U})
\end{equation}

\textbf{Step 2: Safe region.} By (A3), $\mathbb{E}_{\pi_\theta}[R^* \mid x \notin \mathcal{U}] \geq \mathbb{E}_{\pi_\theta}[\hat{R} \mid x \notin \mathcal{U}] - \epsilon_{\mathrm{safe}}$.

\textbf{Step 3: Risk region.} Since (A4) holds for $\pi_{\mathrm{ref}}$, transferring to $\pi_\theta$ requires bounding the distribution shift. By the TV dual representation with $|f| \leq 2R_{\max}$:
\begin{equation}
    \mathbb{E}_{\pi_\theta}[R^* \mid \mathcal{U}] \geq \mathbb{E}_{\pi_\theta}[\hat{R} \mid \mathcal{U}] - \epsilon_{\mathrm{risk}} - 4R_{\max} \cdot D_{\mathrm{TV}}(\pi_\theta \| \pi_{\mathrm{ref}} \mid \mathcal{U})
\end{equation}

\textbf{Step 4: KL--TV connection.} By Girsanov's theorem, the KL divergence restricted to $\mathcal{U}$ equals $\frac{1}{2}\mathcal{L}_{\mathrm{KL}}$. Pinsker's inequality then gives:
\begin{equation}
    D_{\mathrm{TV}} \leq \sqrt{\tfrac{1}{2} D_{\mathrm{KL}}} = \tfrac{1}{2}\sqrt{\mathcal{L}_{\mathrm{KL}}}
\end{equation}

\textbf{Step 5: Assembly.} Combining Steps 1--4 with $\pi_\theta(\mathcal{U}^C) \leq 1$ yields the stated bound.
\end{proof}

\textbf{Theoretical Insight.} Unlike global KL penalties that indiscriminately constrain all exploration, Astrolabe's selective penalty acts only through the $\pi_\theta(\mathcal{U}) \cdot 2R_{\max}\sqrt{\mathcal{L}_{\mathrm{KL}}}$ term. In safe regions ($x \notin \mathcal{U}$), the policy explores freely; the trust region activates only when $\pi_\theta$ drifts into the high-uncertainty region $\mathcal{U}$, naturally balancing exploration and stability.

\section{Algorithm}
We provide the detailed pseudocode for the training procedures of the Astrolabe framework. Algorithm~\ref{alg:diffusionnft} outlines the core forward-process reinforcement learning pipeline, termed NFT Training with Selective KL Regularization. 
It details the multi-reward evaluation, uncertainty quantification via rank disagreement, and the selective application of the KL penalty to mitigate reward hacking during short-clip optimization. 
To scale this alignment to long-sequence generation without exceeding memory constraints, Algorithm~\ref{alg:streaming_diffusionnft} presents the Streaming Training for Long Video. 
This algorithm leverages a rolling KV cache and detaches historical context, applying gradient updates exclusively to local active windows while maintaining long-range temporal coherence.

\section{More Qualitative Results}
\label{sec:supp_qualitative}
We present additional qualitative comparisons on 30-second long video generation under the single-prompt setting in Figures~\ref{fig:comparison_00007_00011}--\ref{fig:comparison_00959_00971}.
Qualitative comparisons showing improvements over baseline distilled models and competing methods.
As shown in Figure~\ref{fig:krea}, our method consistently produces higher-quality and more temporally coherent videos compared to the Krea 14B baseline~\cite{krea_realtime_14b}, demonstrating that our approach yields notable improvements even when applied on top of large-scale 14B models.

\begin{figure}[h]
    \centering
    \includegraphics[width=1.0\linewidth]{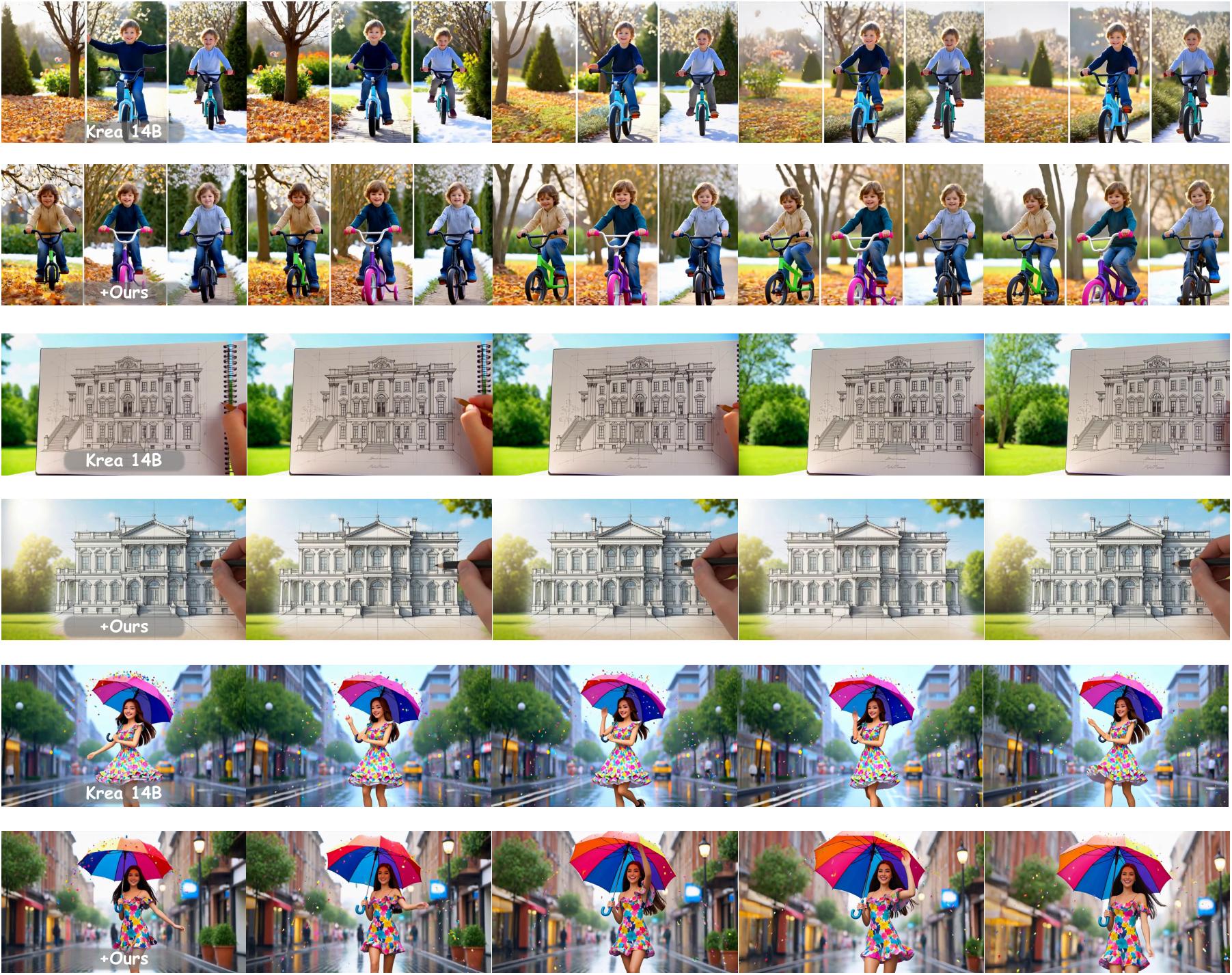}
    \vspace{-2mm}
    \caption{Qualitative comparison between Krea 14B (odd rows) and with our method (+Ours, even rows) }
\vspace{-2mm}
\label{fig:krea}
\end{figure}

\section{Discussion}
While Astrolabe establishes a robust and memory-efficient paradigm for aligning distilled autoregressive video models, several limitations warrant further investigation.

\noindent{\textbf{Reliance on the Capability of Reward Models.}}
Astrolabe fundamentally depends on the accuracy of the selected reward models.
Currently, open-source video evaluation models (e.g., VideoAlign) exhibit strong proficiency in assessing short-term aesthetics and text-video alignment. 
However, their ability to evaluate complex physics, long-horizon causality, and multi-entity interactions in minute-scale videos remains limited. Consequently, if the reward model fails to penalize subtle temporal hallucinations in extended sequences, Astrolabe cannot explicitly correct them. Developing more robust, physics-aware reward models for long-form video represents a critical next step for the community.

\noindent{\textbf{Inherent Bottlenecks of the Base Architecture.}}
As a post-training RL framework, Astrolabe excels at shifting the generation distribution toward high-reward regions and mitigating error accumulation. However, reinforcement learning cannot arbitrarily instantiate capabilities that are entirely absent from the distilled base model. For instance, if the original streaming models (e.g., Self-Forcing or LongLive) inherently lack the capacity to render highly complex spatial geometries or specific domain knowledge due to extreme distillation, our method can optimize the presentation of existing knowledge but cannot overcome the fundamental capacity ceiling of the base architecture.


\begin{figure*}[t]
    \centering
    \includegraphics[width=\linewidth]{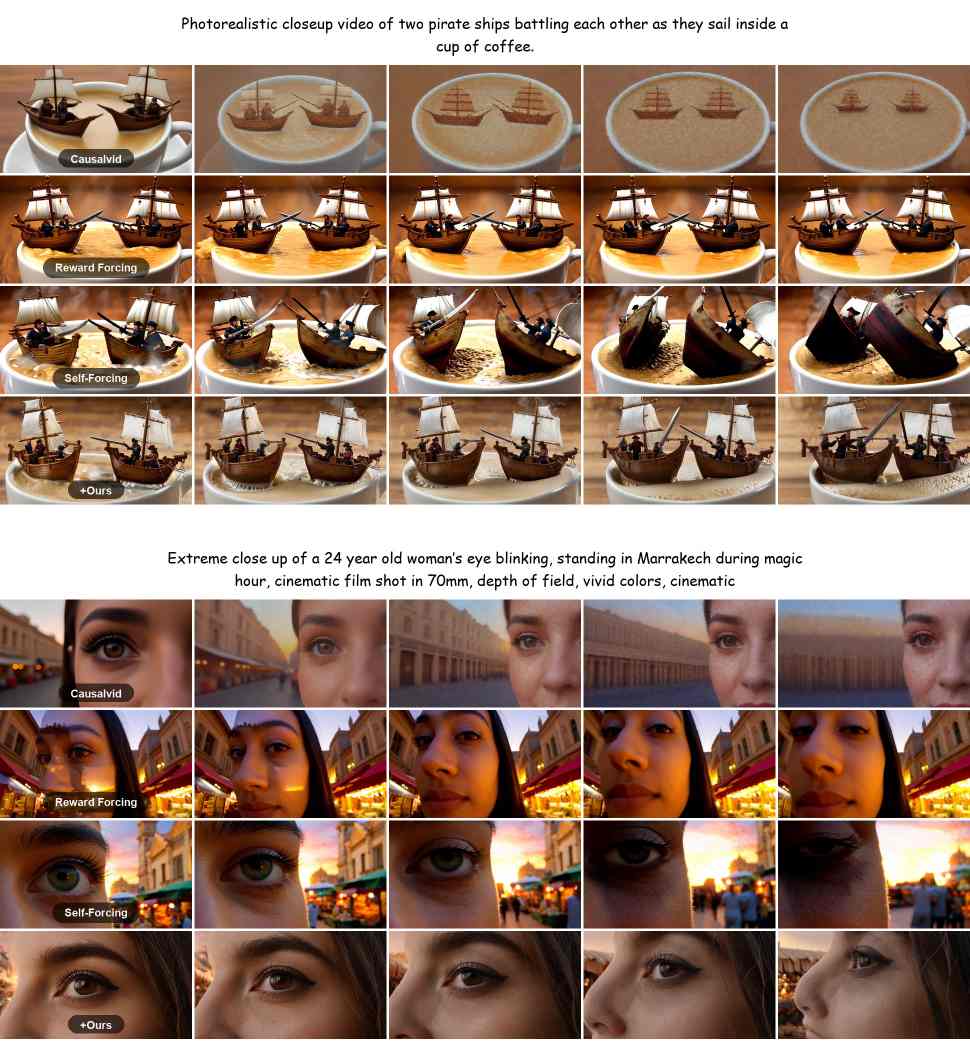}
    \caption{Qualitative comparison on long video generation (30s) under the single-prompt setting (Set 1).}
    \label{fig:comparison_00007_00011}
\end{figure*}

\begin{figure*}[t]
    \centering
    \includegraphics[width=\linewidth]{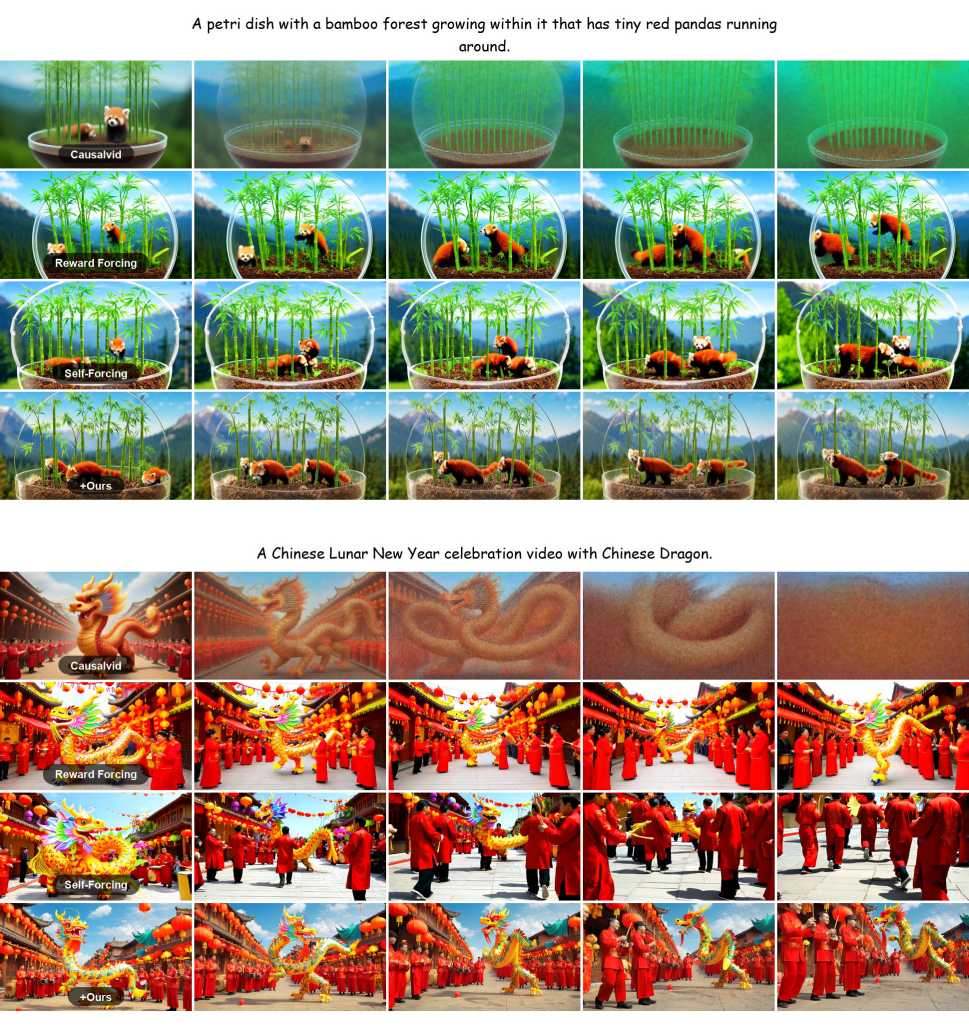}
    \caption{Additional qualitative comparison on long video generation (30s) under the single-prompt setting (Set 2).}
    \label{fig:comparison_00014_00024}
\end{figure*}

\begin{figure*}[t]
    \centering
    \includegraphics[width=\linewidth]{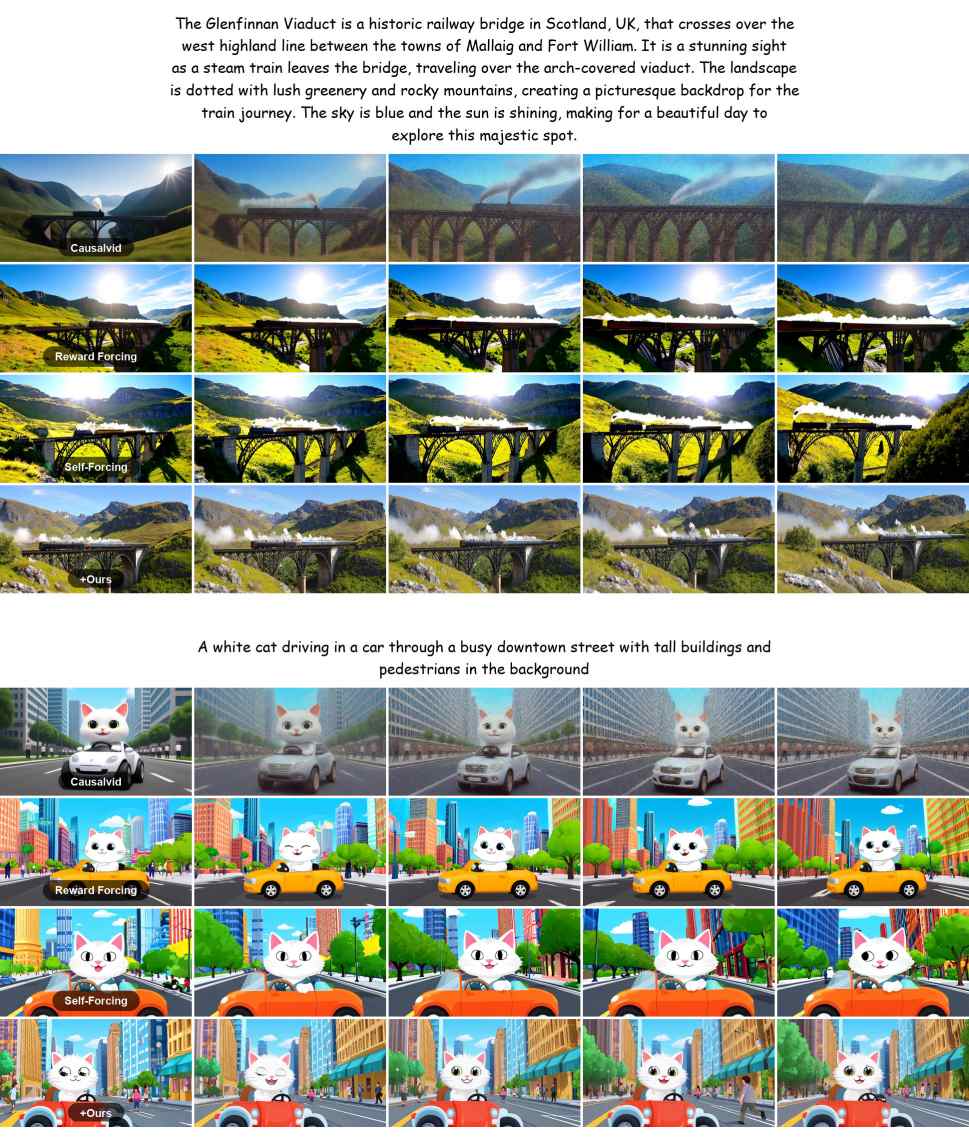}
    \caption{Additional qualitative comparison on long video generation (30s) under the single-prompt setting (Set 3).}
    \label{fig:comparison_00039_00050}
\end{figure*}

\begin{figure*}[t]
    \centering
    \includegraphics[width=\linewidth]{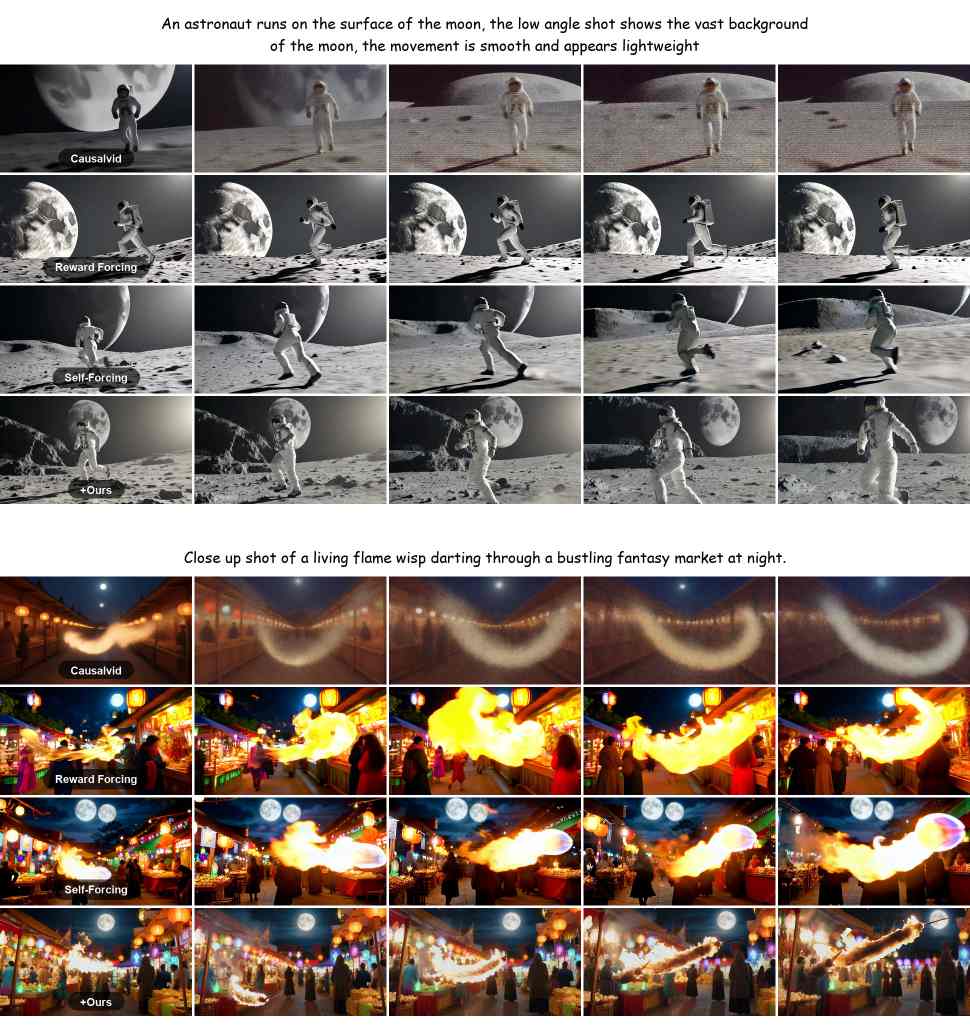}
    \caption{Additional qualitative comparison on long video generation (30s) under the single-prompt setting (Set 4).}
    \label{fig:comparison_00060_00074}
\end{figure*}

\begin{figure*}[t]
    \centering
    \includegraphics[width=\linewidth]{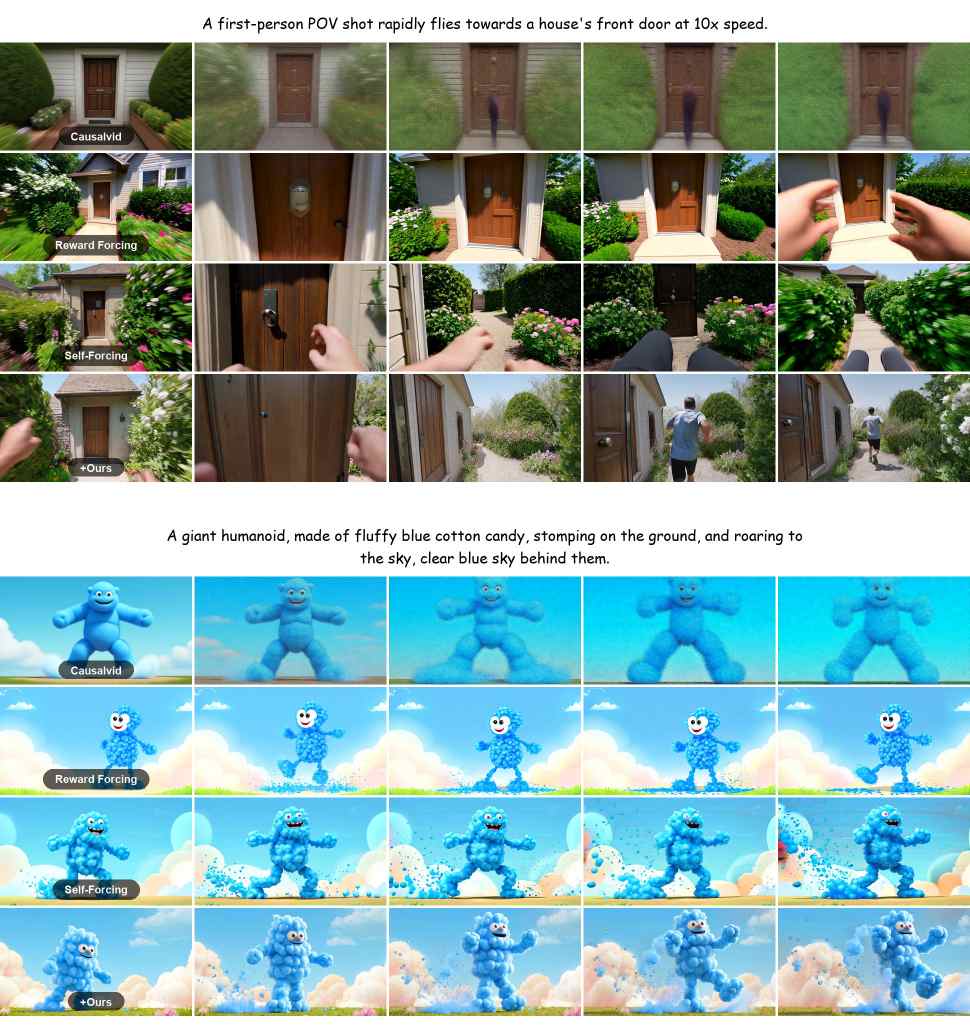}
    \caption{Additional qualitative comparison on long video generation (30s) under the single-prompt setting (Set 5).}
    \label{fig:comparison_00085_00104}
\end{figure*}

\begin{figure*}[t]
    \centering
    \includegraphics[width=\linewidth]{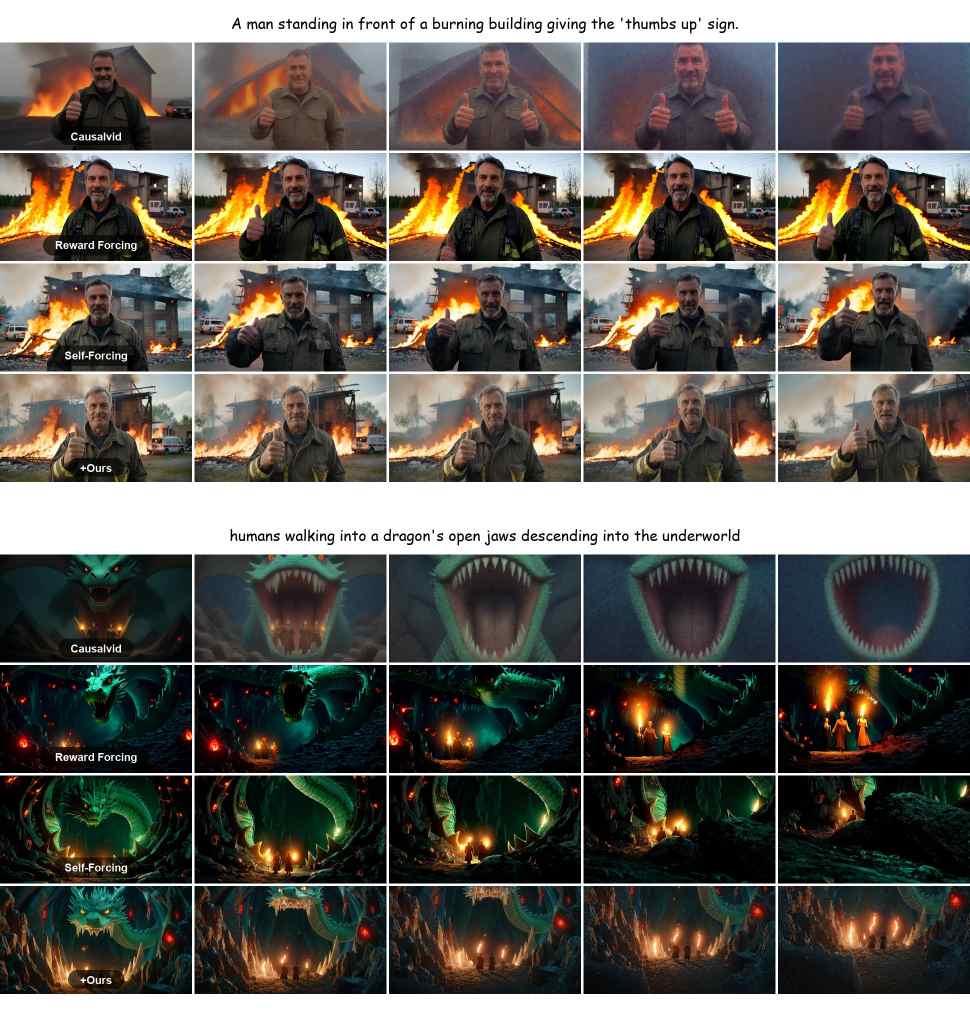}
    \caption{Additional qualitative comparison on long video generation (30s) under the single-prompt setting (Set 6).}
    \label{fig:comparison_00107_00135}
\end{figure*}

\begin{figure*}[t]
    \centering
    \includegraphics[width=\linewidth]{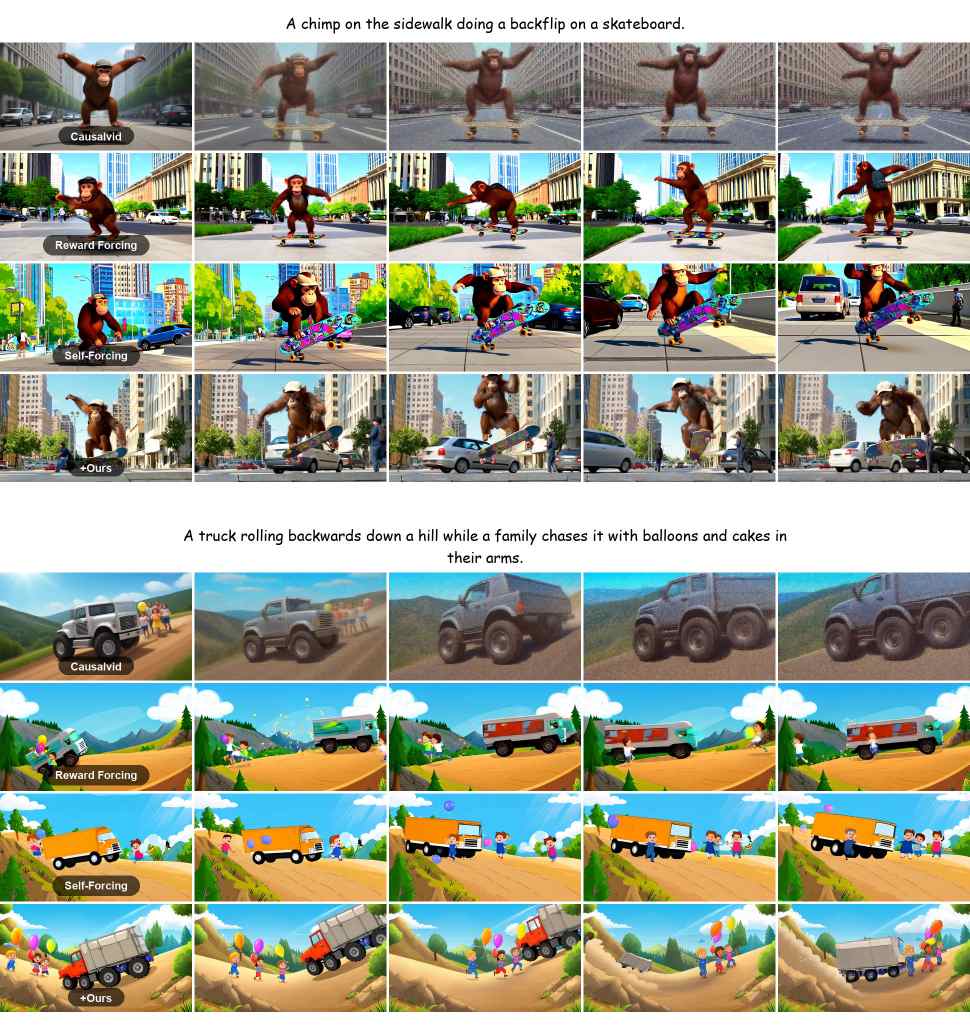}
    \caption{Additional qualitative comparison on long video generation (30s) under the single-prompt setting (Set 7).}
    \label{fig:comparison_00150_00193}
\end{figure*}

\begin{figure*}[t]
    \centering
    \includegraphics[width=\linewidth]{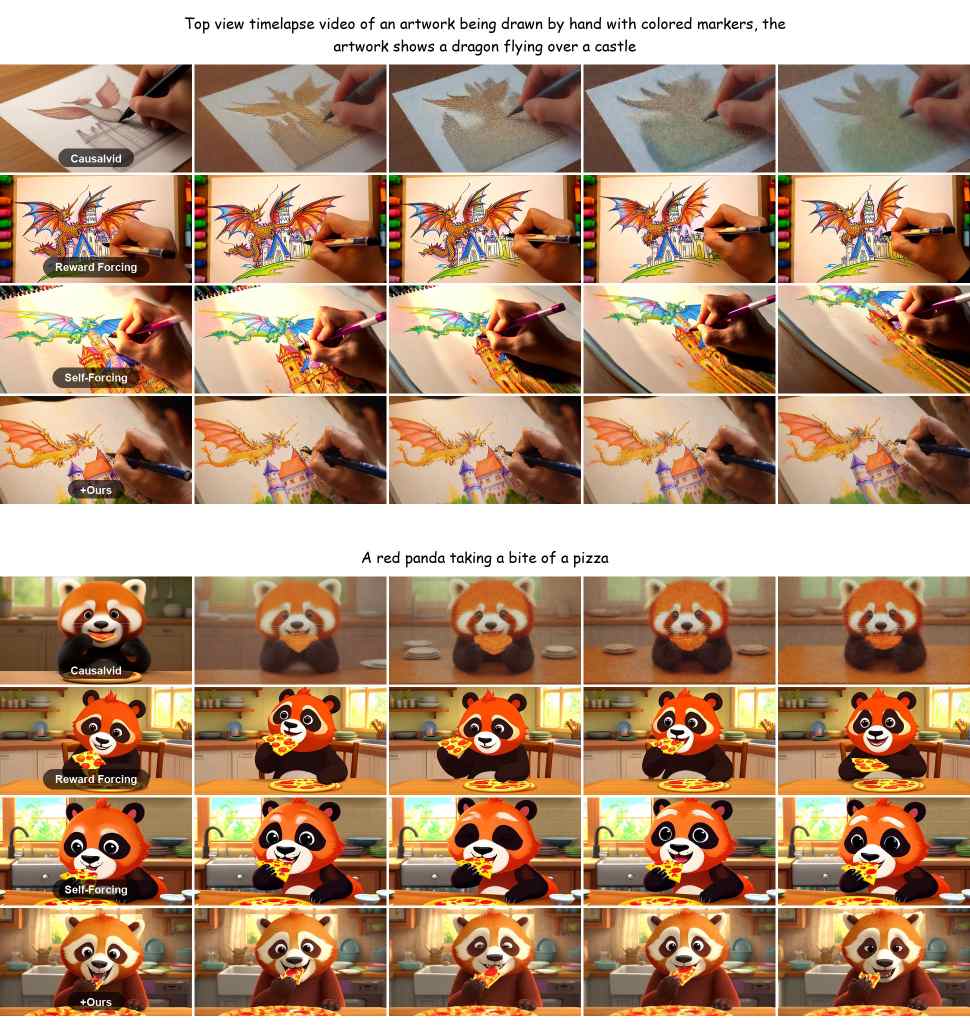}
    \caption{Additional qualitative comparison on long video generation (30s) under the single-prompt setting (Set 8).}
    \label{fig:comparison_00257_00265}
\end{figure*}

\begin{figure*}[t]
    \centering
    \includegraphics[width=\linewidth]{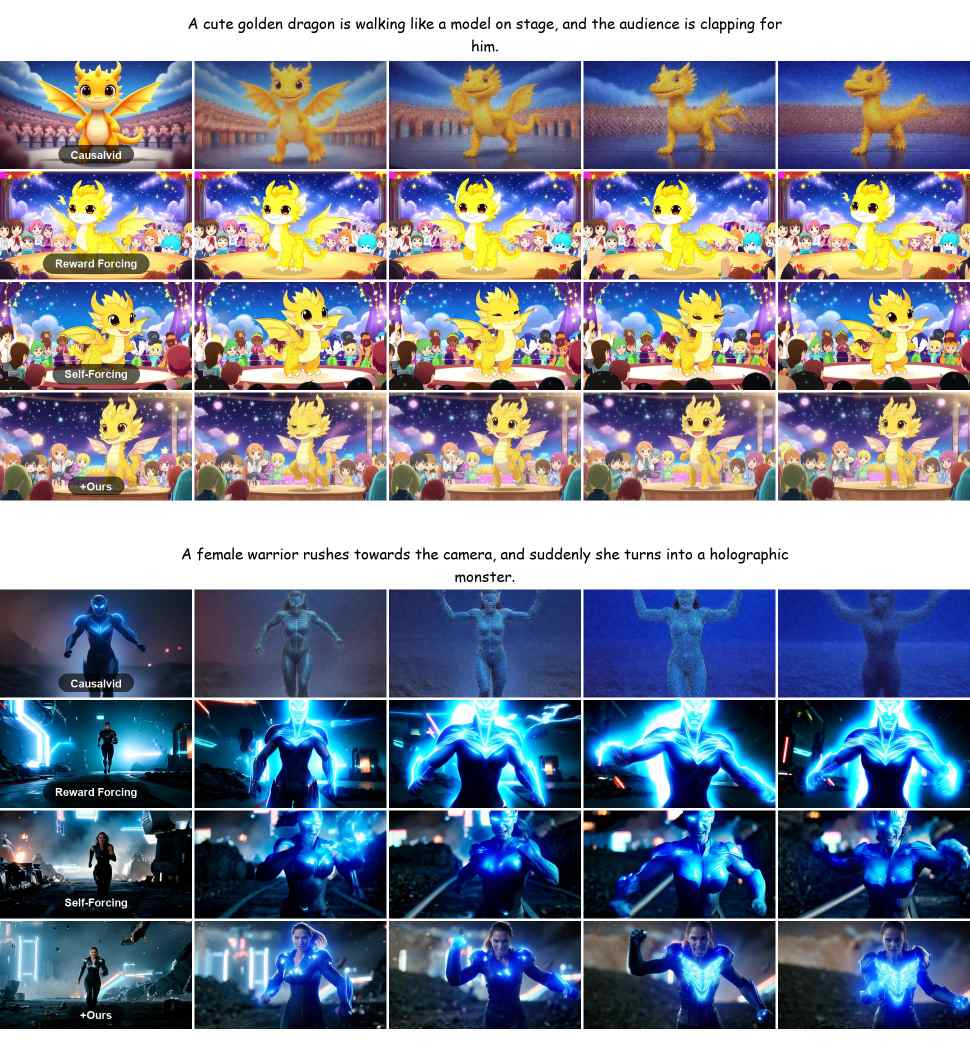}
    \caption{Additional qualitative comparison on long video generation (30s) under the single-prompt setting (Set 9).}
    \label{fig:comparison_00275_00297}
\end{figure*}

\begin{figure*}[t]
    \centering
    \includegraphics[width=\linewidth]{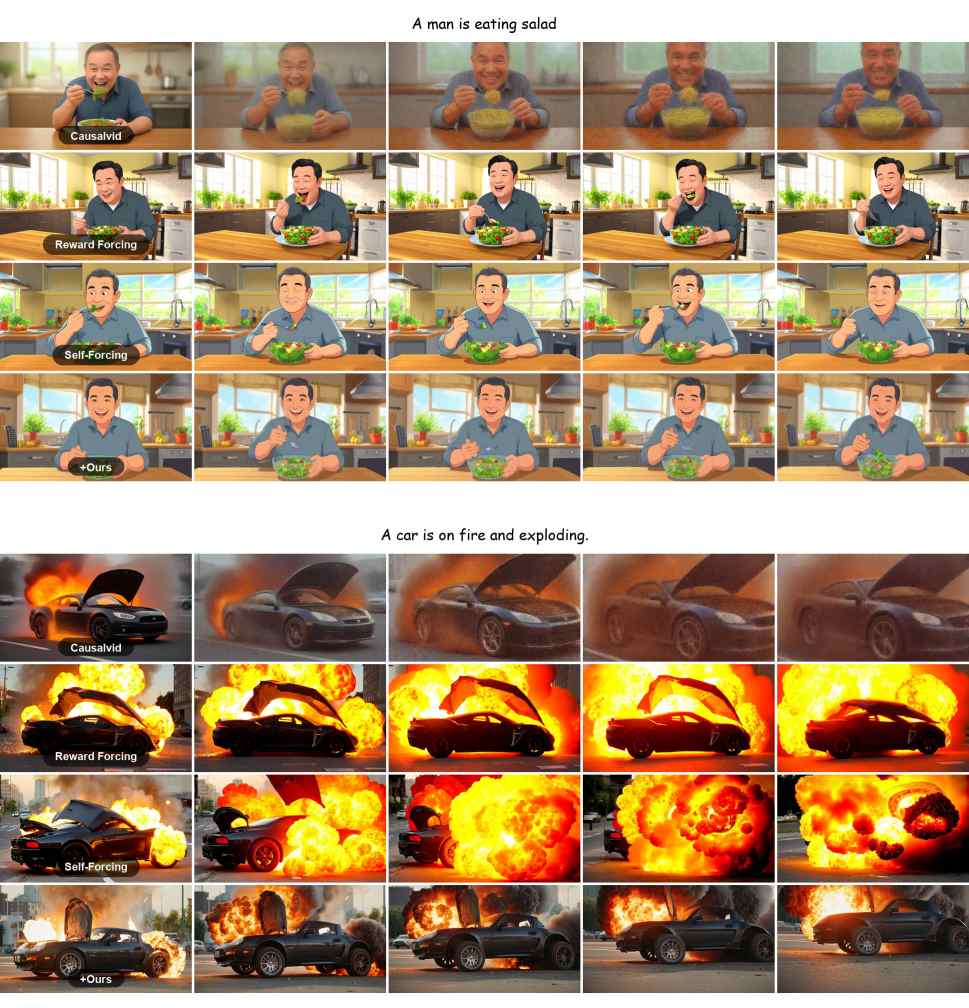}
    \caption{Additional qualitative comparison on long video generation (30s) under the single-prompt setting (Set 10).}
    \label{fig:comparison_00314_00342}
\end{figure*}

\begin{figure*}[t]
    \centering
    \includegraphics[width=\linewidth]{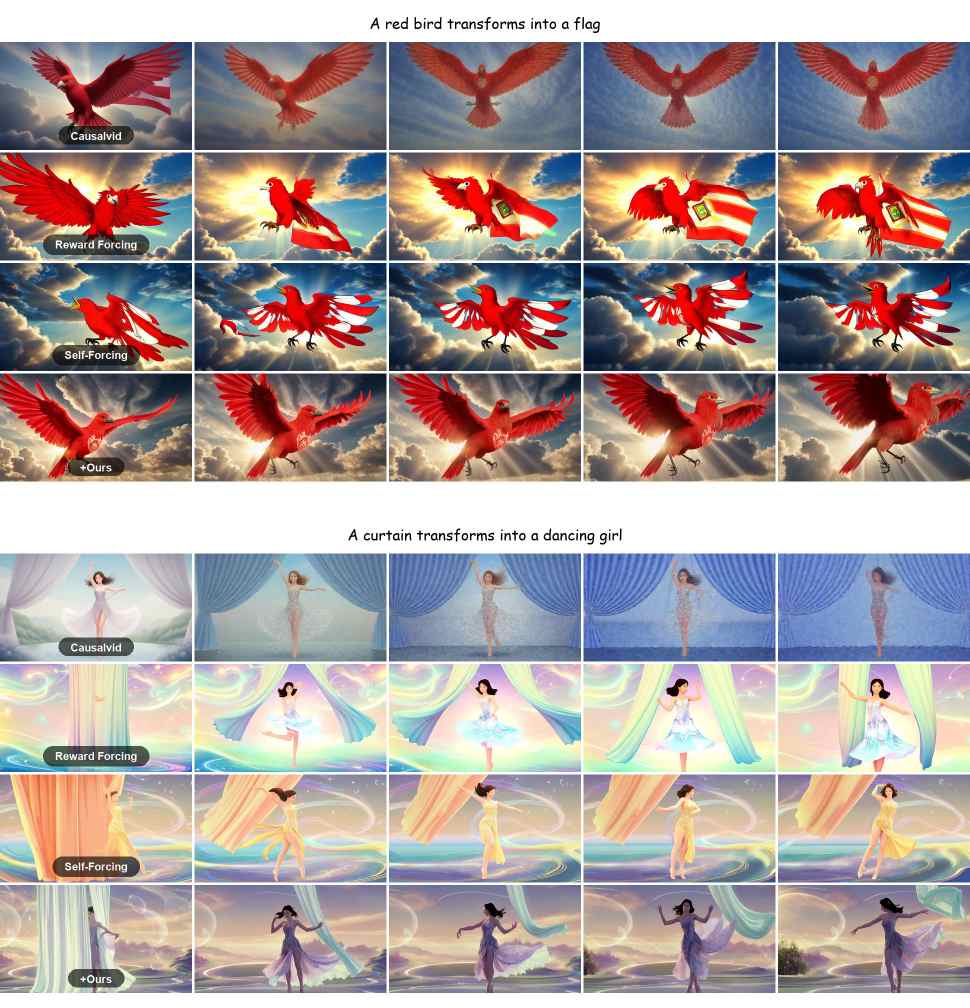}
    \caption{Additional qualitative comparison on long video generation (30s) under the single-prompt setting (Set 11).}
    \label{fig:comparison_00371_00372}
\end{figure*}

\begin{figure*}[t]
    \centering
    \includegraphics[width=\linewidth]{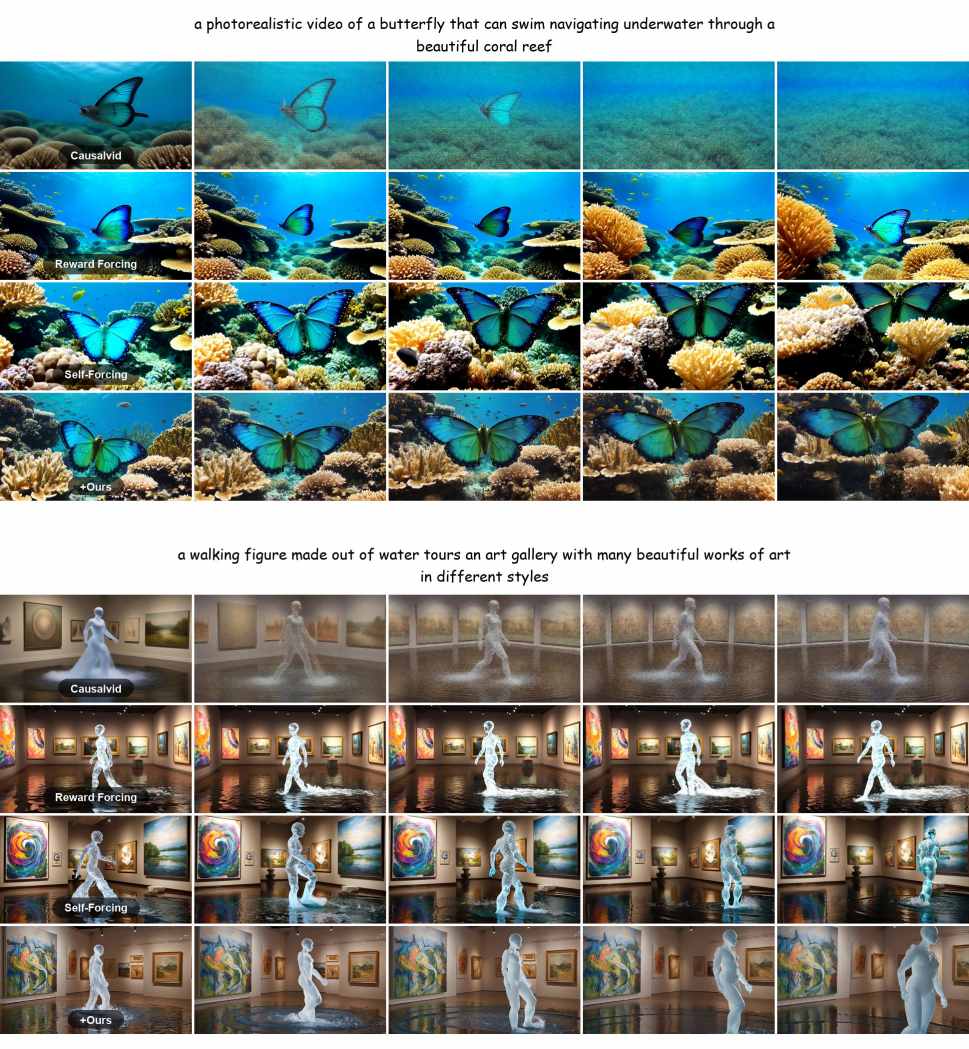}
    \caption{Additional qualitative comparison on long video generation (30s) under the single-prompt setting (Set 12).}
    \label{fig:comparison_00383_00386}
\end{figure*}

\begin{figure*}[t]
    \centering
    \includegraphics[width=\linewidth]{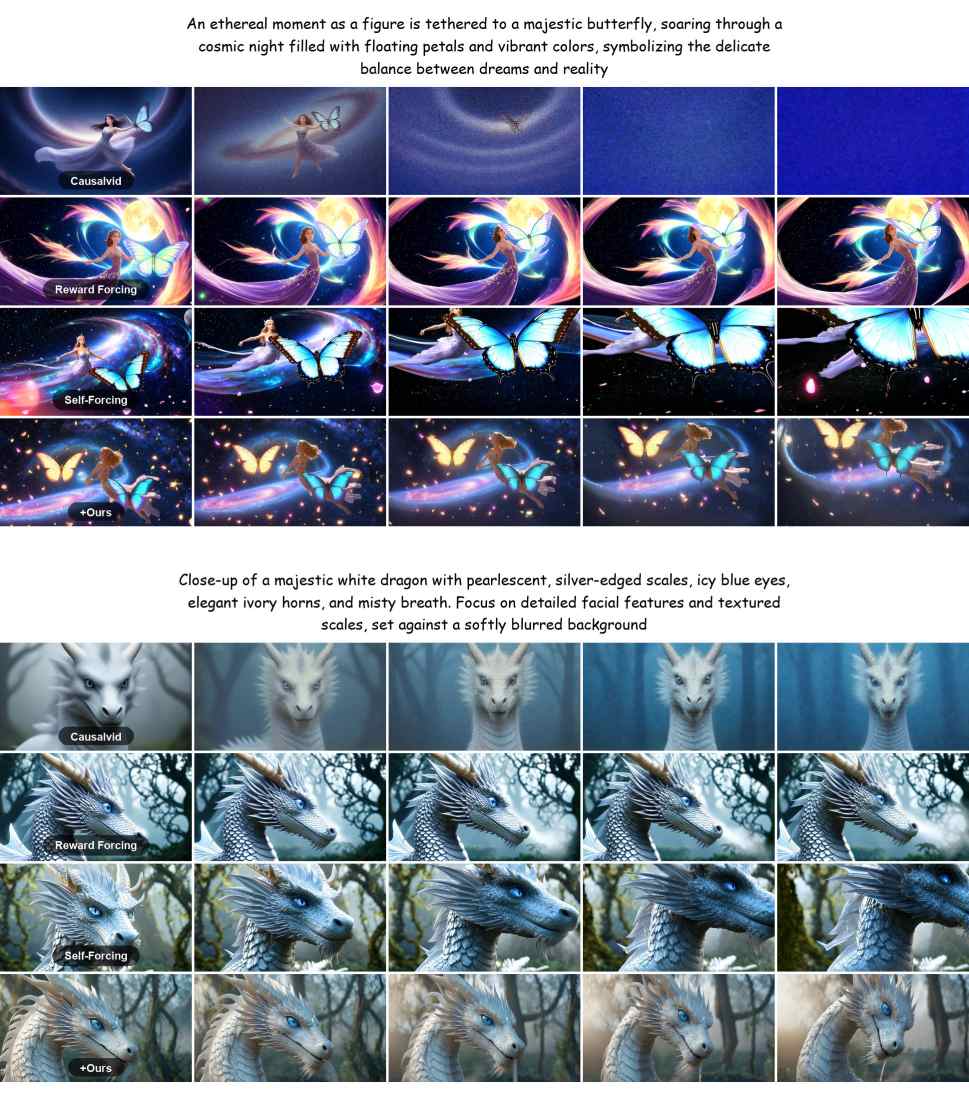}
    \caption{Additional qualitative comparison on long video generation (30s) under the single-prompt setting (Set 13).}
    \label{fig:comparison_00387_00391}
\end{figure*}

\begin{figure*}[t]
    \centering
    \includegraphics[width=\linewidth]{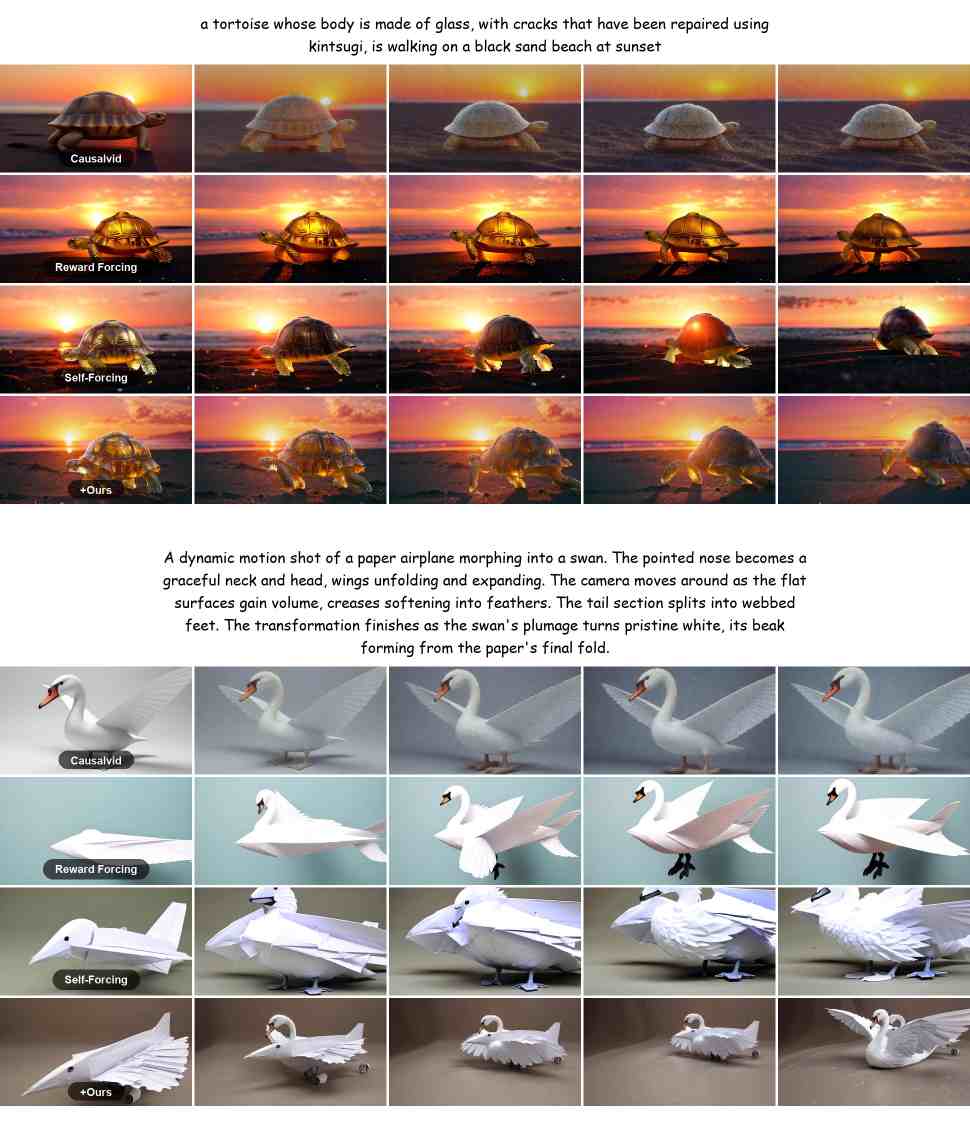}
    \caption{Additional qualitative comparison on long video generation (30s) under the single-prompt setting (Set 14).}
    \label{fig:comparison_00400_00410}
\end{figure*}

\begin{figure*}[t]
    \centering
    \includegraphics[width=\linewidth]{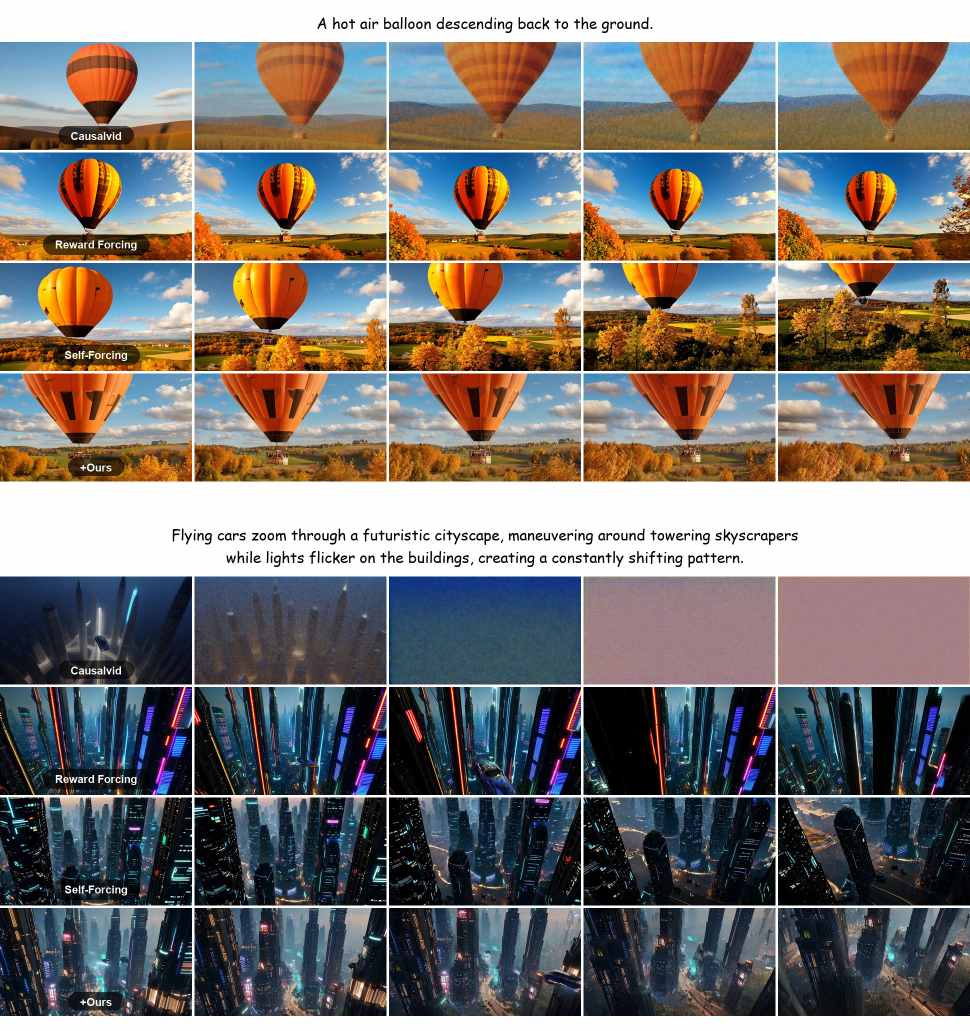}
    \caption{Additional qualitative comparison on long video generation (30s) under the single-prompt setting (Set 15).}
    \label{fig:comparison_00484_00491}
\end{figure*}

\begin{figure*}[t]
    \centering
    \includegraphics[width=\linewidth]{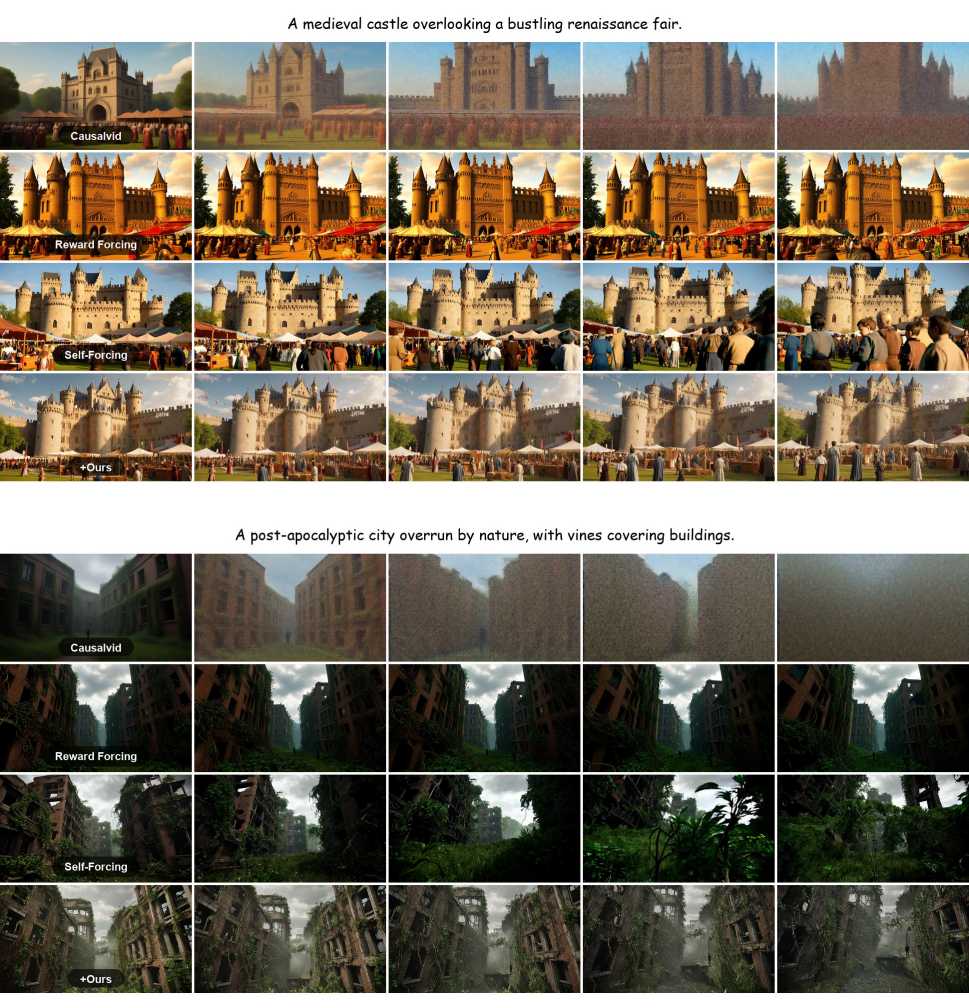}
    \caption{Additional qualitative comparison on long video generation (30s) under the single-prompt setting (Set 16).}
    \label{fig:comparison_00500_00507}
\end{figure*}

\begin{figure*}[t]
    \centering
    \includegraphics[width=\linewidth]{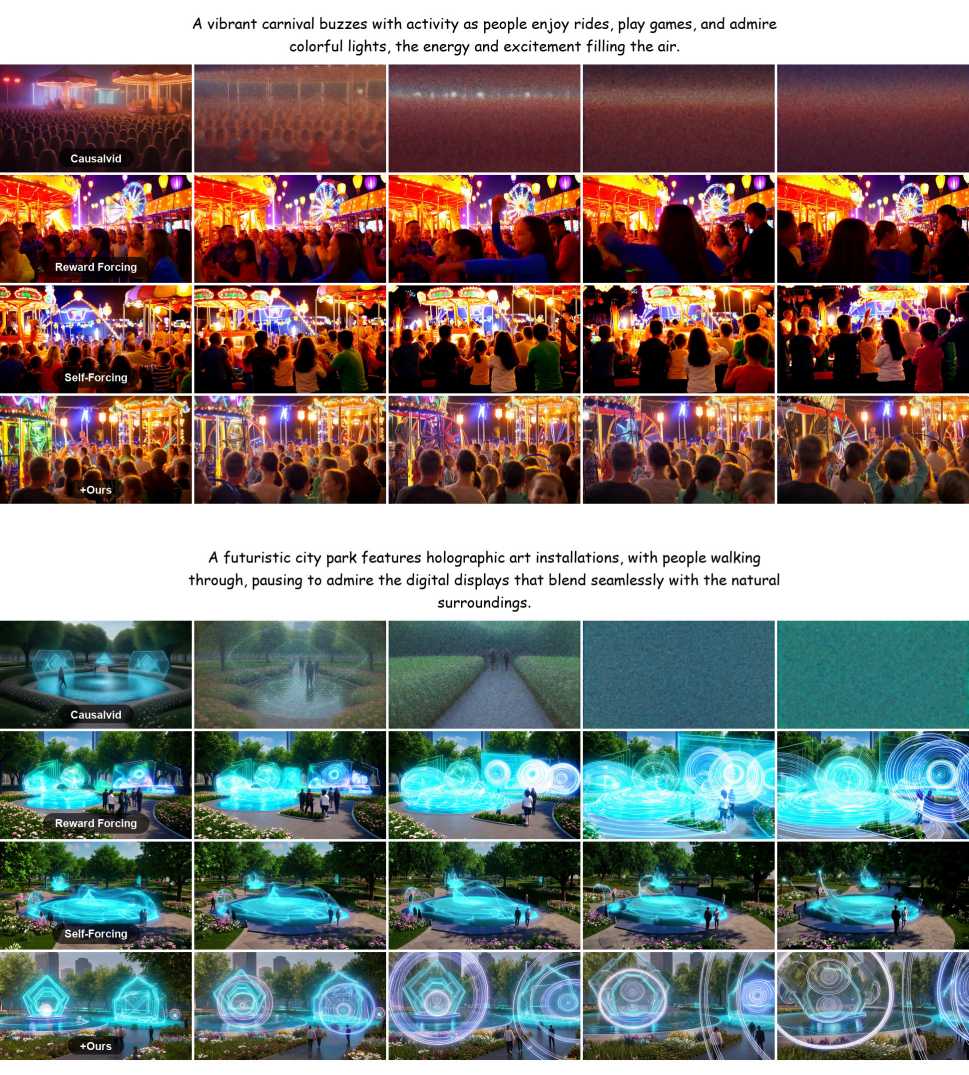}
    \caption{Additional ualitative comparison on long video generation (30s) under the single-prompt setting (Set 17).}
    \label{fig:comparison_00538_00540}
\end{figure*}

\begin{figure*}[t]
    \centering
    \includegraphics[width=\linewidth]{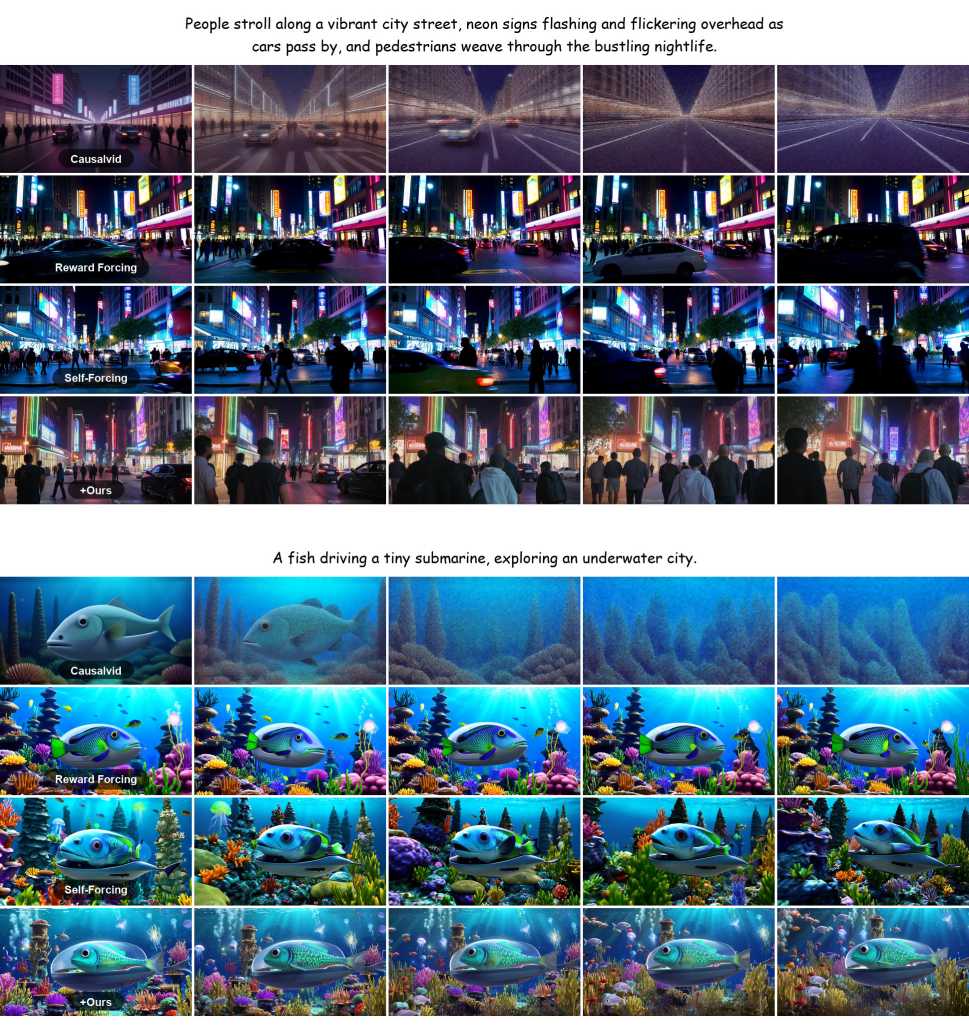}
    \caption{Additional ualitative comparison on long video generation (30s) under the single-prompt setting (Set 18).}
    \label{fig:comparison_00546_00565}
\end{figure*}

\begin{figure*}[t]
    \centering
    \includegraphics[width=\linewidth]{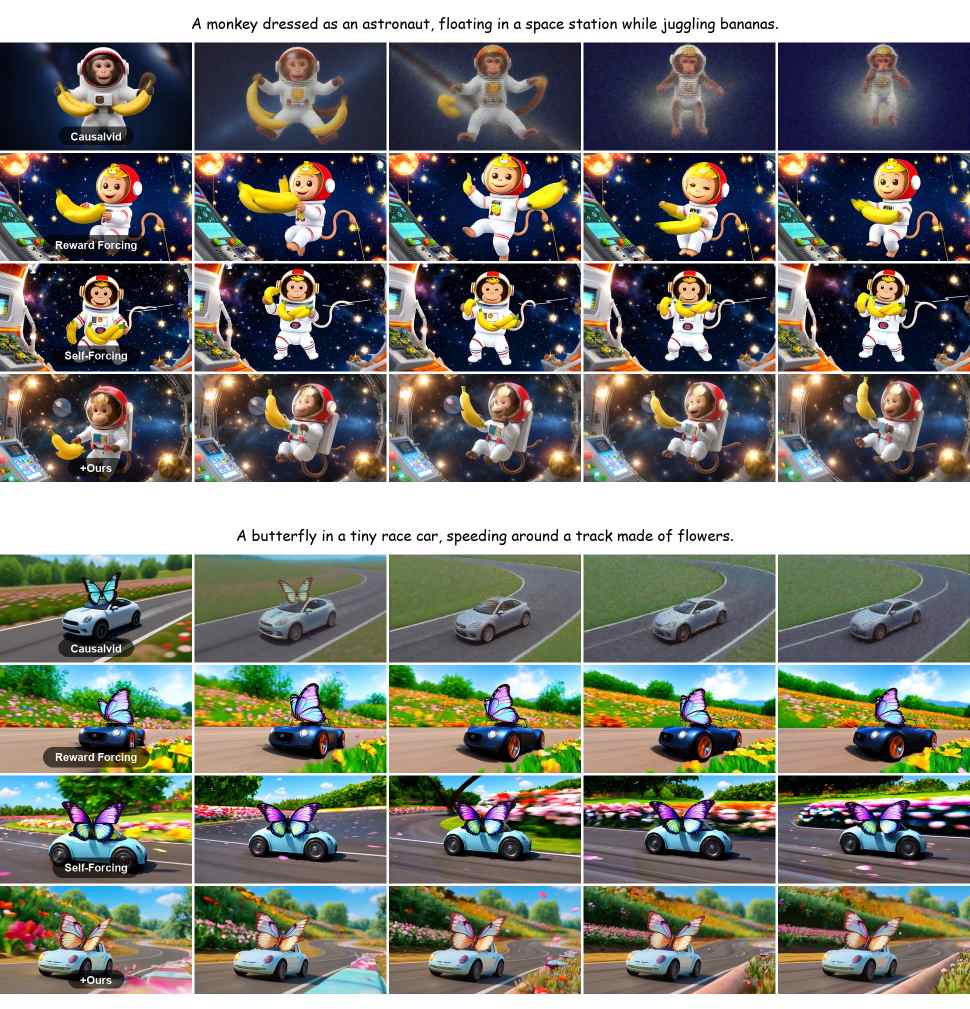}
    \caption{Additional ualitative comparison on long video generation (30s) under the single-prompt setting (Set 19).}
    \label{fig:comparison_00567_00575}
\end{figure*}

\begin{figure*}[t]
    \centering
    \includegraphics[width=\linewidth]{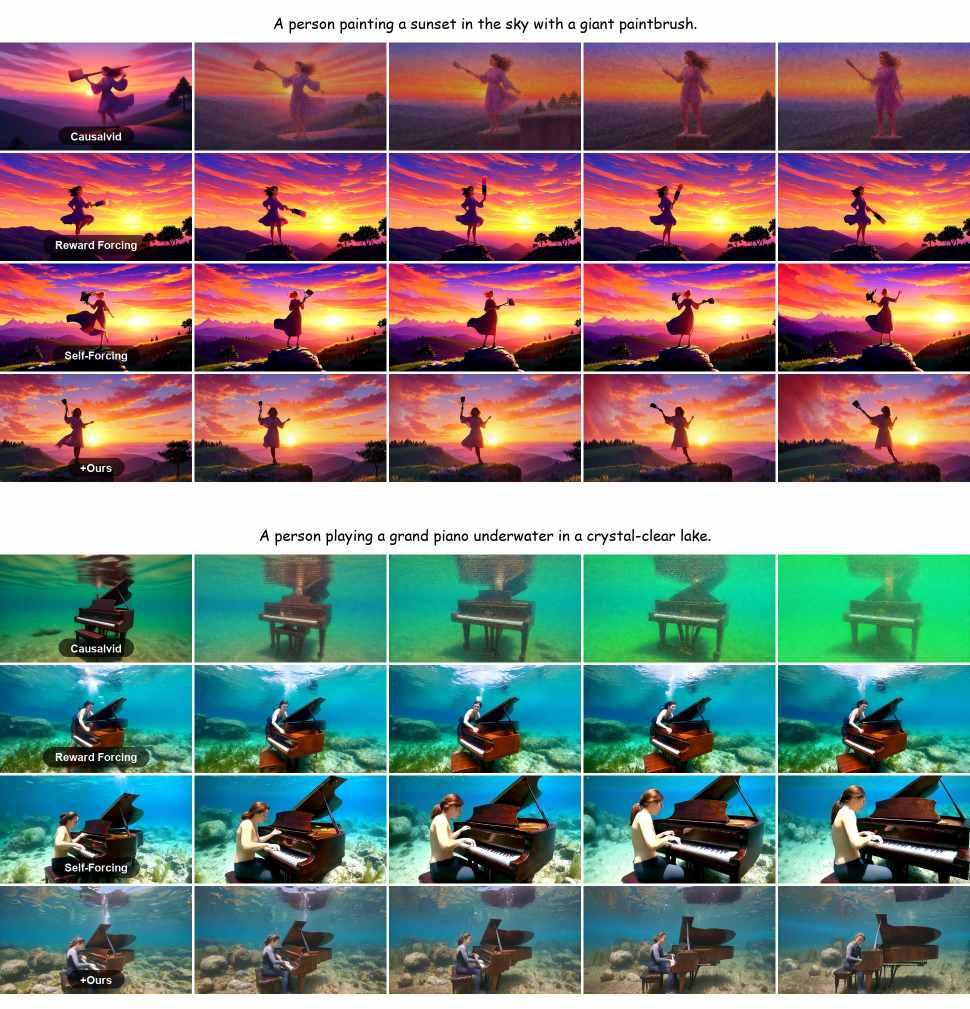}
    \caption{Additional qualitative comparison on long video generation (30s) under the single-prompt setting (Set 20).}
    \label{fig:comparison_00594_00596}
\end{figure*}

\begin{figure*}[t]
    \centering
    \includegraphics[width=\linewidth]{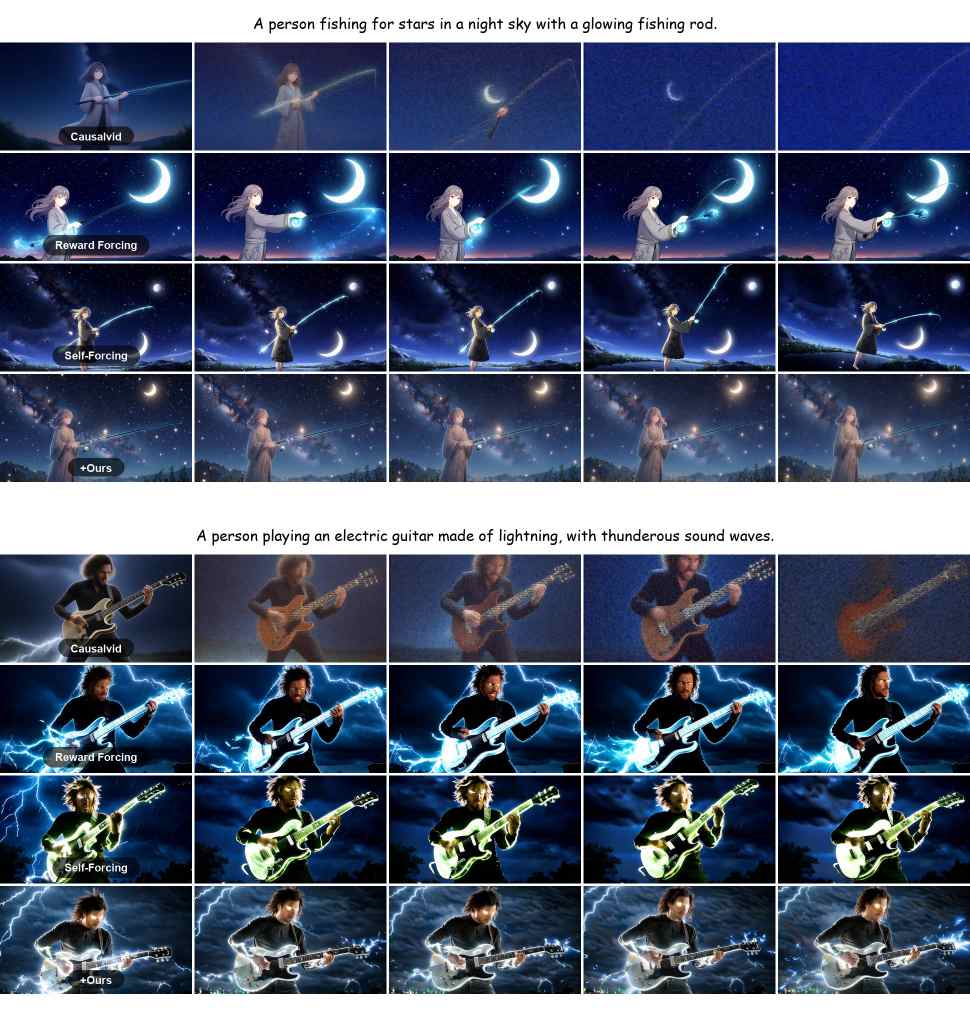}
    \caption{Additional qualitative comparison on long video generation (30s) under the single-prompt setting (Set 21).}
    \label{fig:comparison_00607_00617}
\end{figure*}

\begin{figure*}[t]
    \centering
    \includegraphics[width=\linewidth]{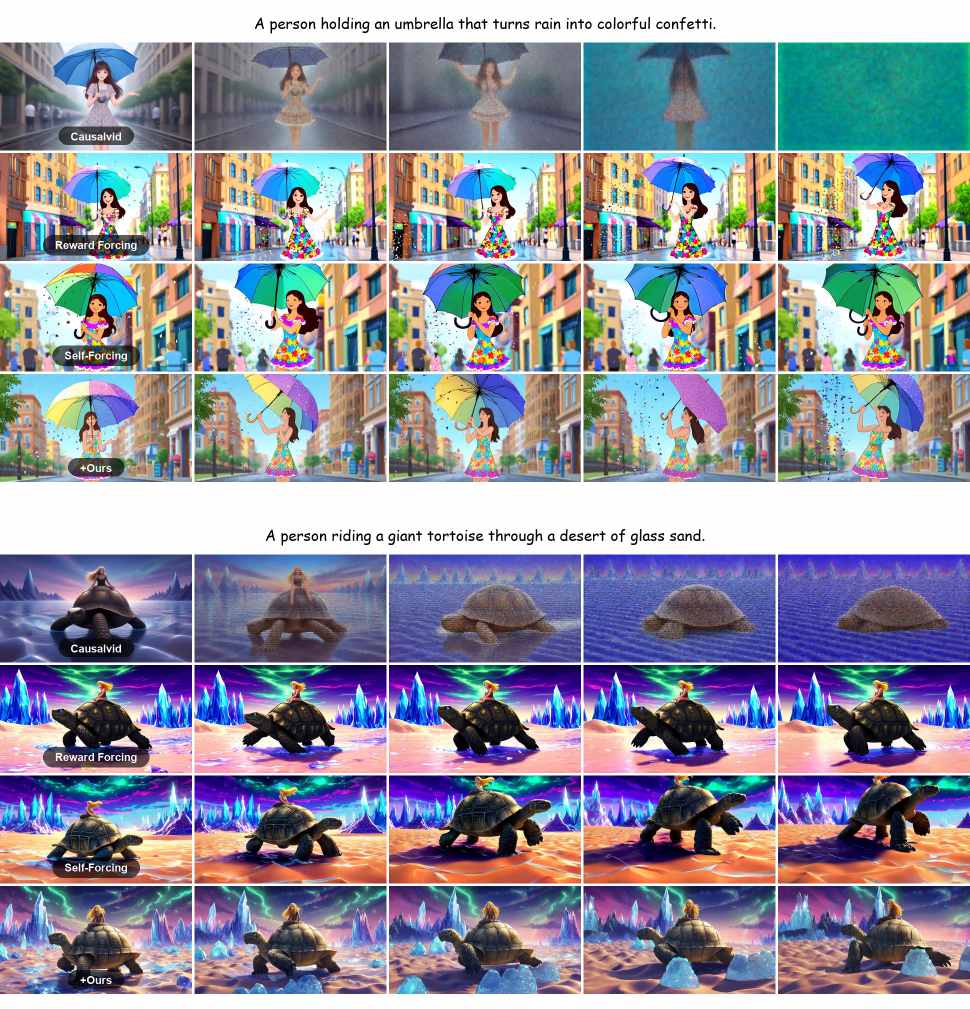}
    \caption{Additional qualitative comparison on long video generation (30s) under the single-prompt setting (Set 22).}
    \label{fig:comparison_00630_00635}
\end{figure*}

\begin{figure*}[t]
    \centering
    \includegraphics[width=\linewidth]{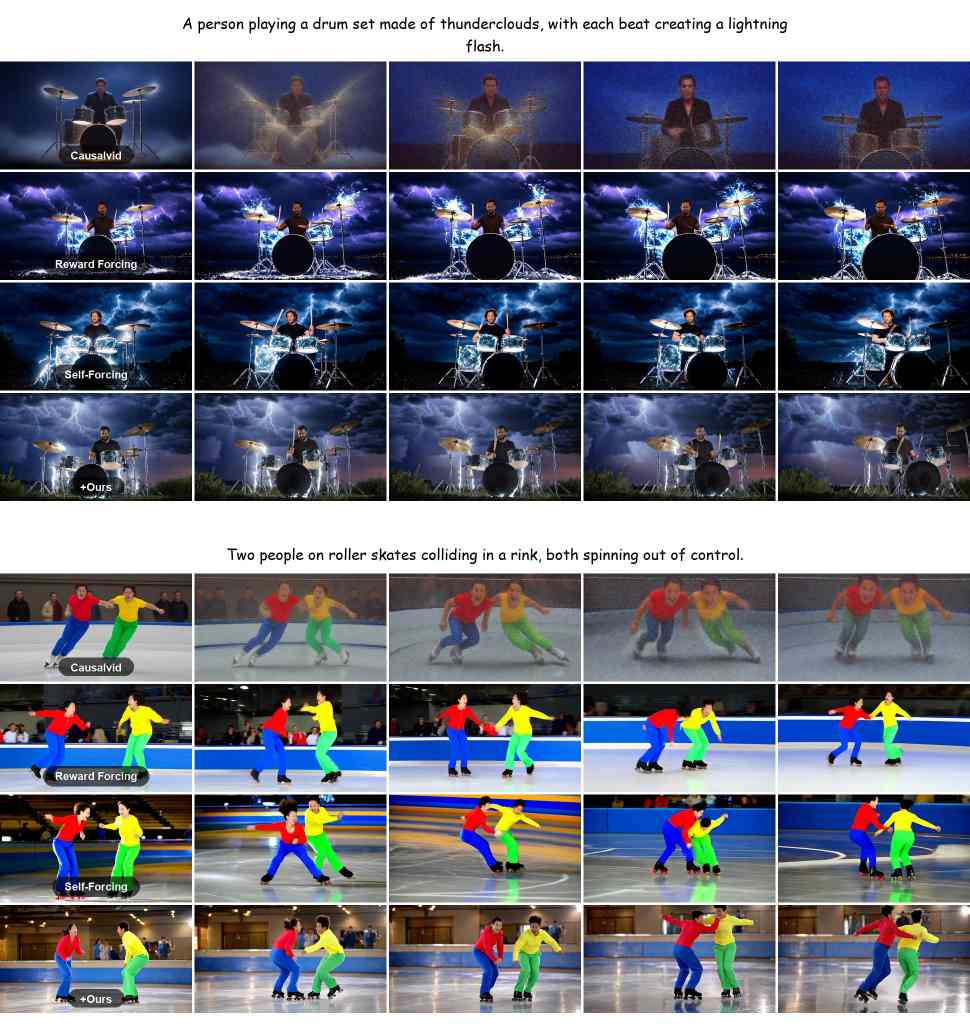}
    \caption{Additional qualitative comparison on long video generation (30s) under the single-prompt setting (Set 23).}
    \label{fig:comparison_00636_00650}
\end{figure*}

\begin{figure*}[t]
    \centering
    \includegraphics[width=\linewidth]{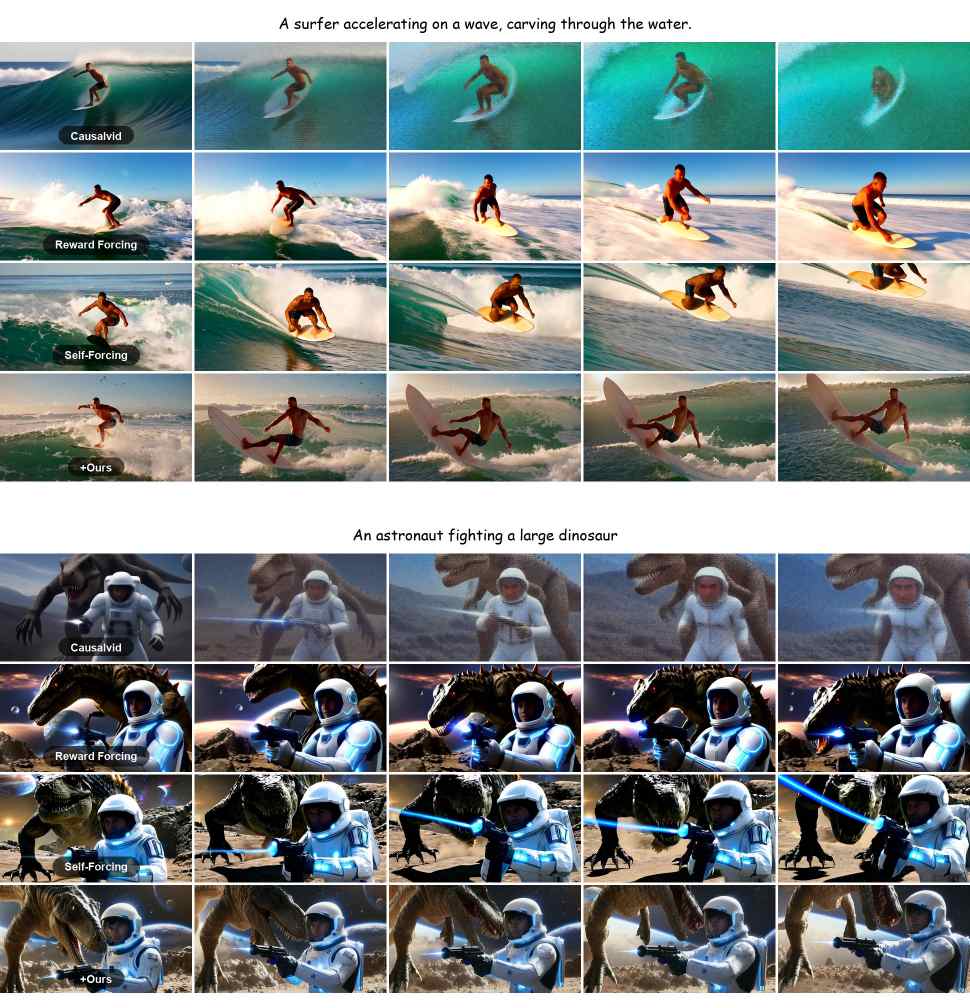}
    \caption{Additional qualitative comparison on long video generation (30s) under the single-prompt setting (Set 24).}
    \label{fig:comparison_00679_00690}
\end{figure*}

\begin{figure*}[t]
    \centering
    \includegraphics[width=\linewidth]{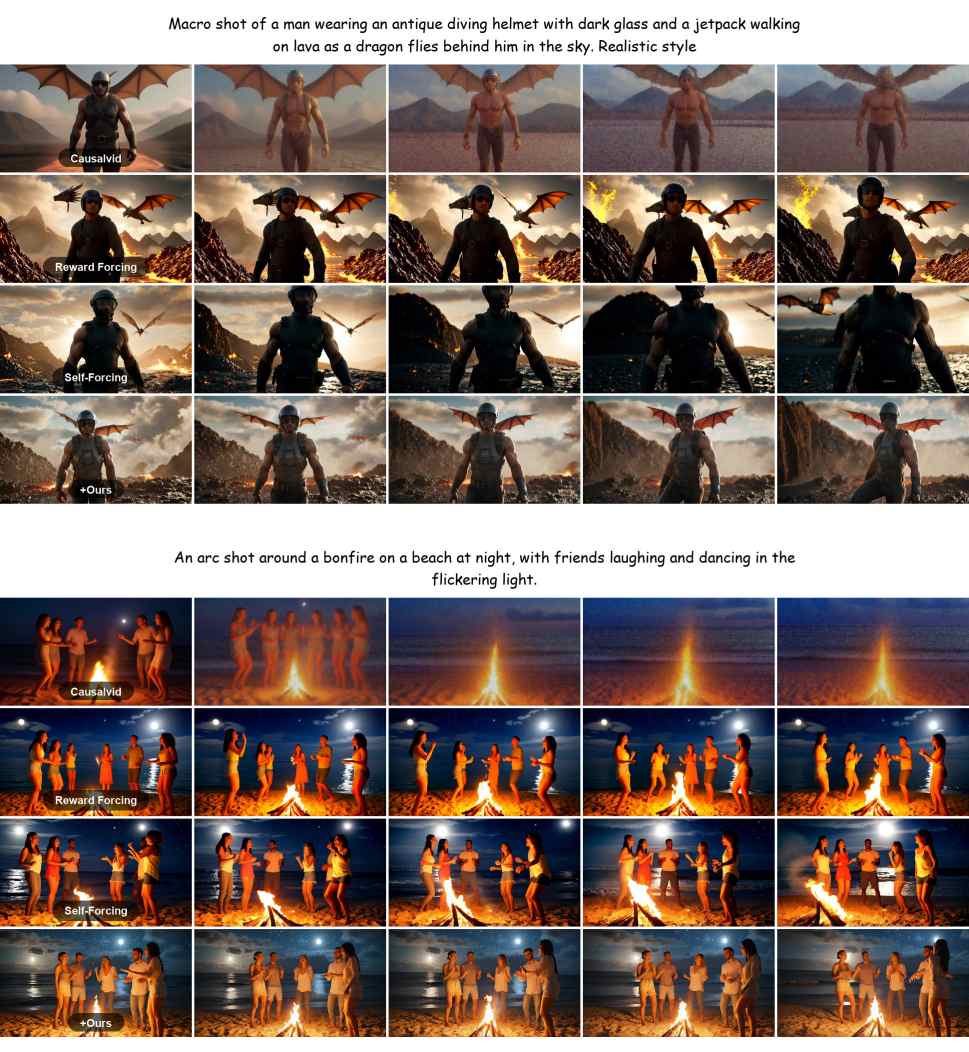}
    \caption{Additional qualitative comparison on long video generation (30s) under the single-prompt setting (Set 25).}
    \label{fig:comparison_00692_00753}
\end{figure*}

\begin{figure*}[t]
    \centering
    \includegraphics[width=\linewidth]{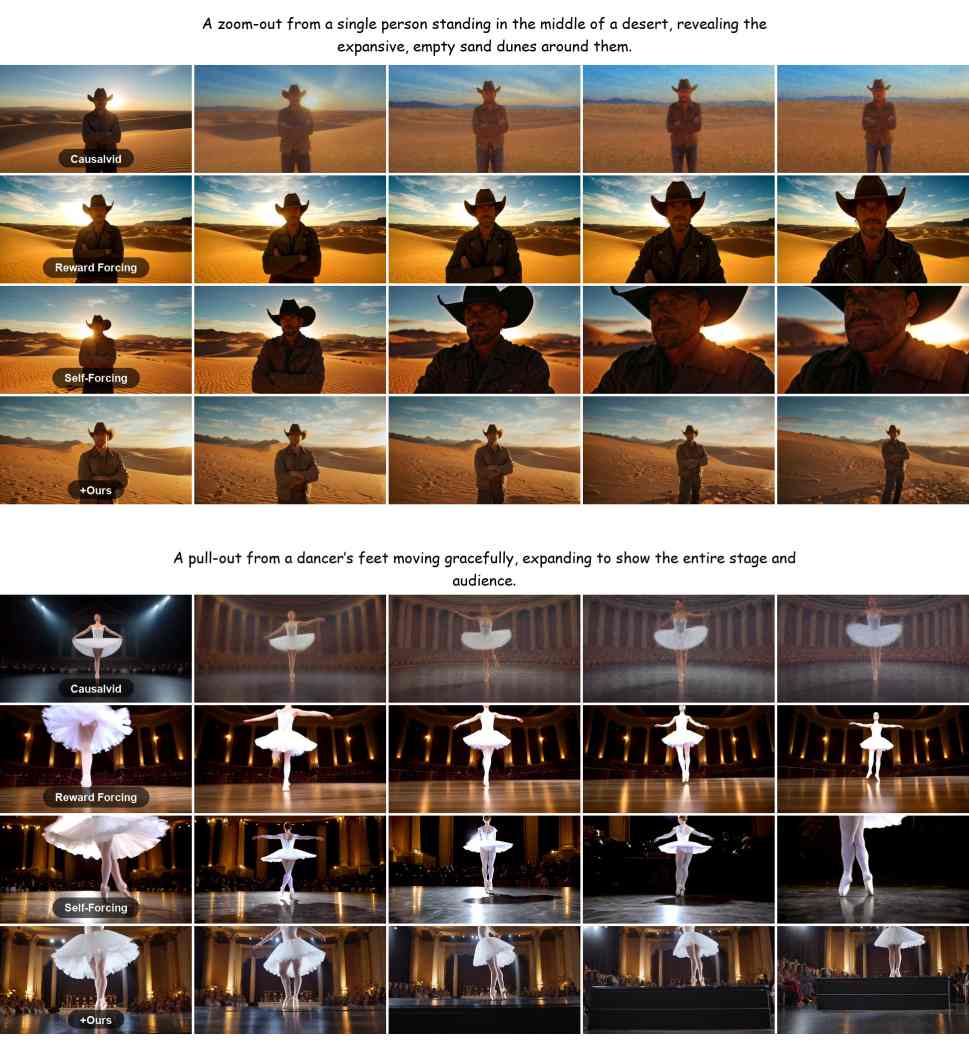}
    \caption{Additional qualitative comparison on long video generation (30s) under the single-prompt setting (Set 26).}
    \label{fig:comparison_00836_00843}
\end{figure*}

\begin{figure*}[t]
    \centering
    \includegraphics[width=\linewidth]{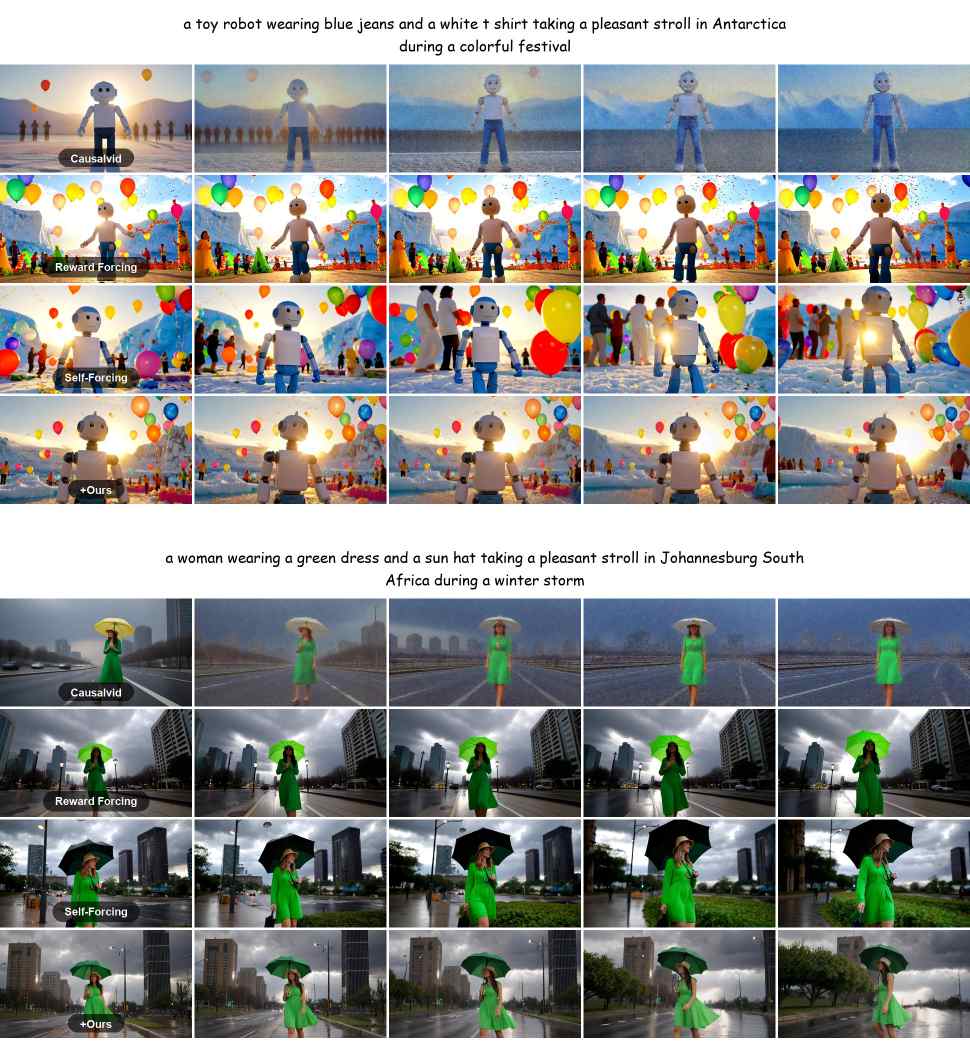}
    \caption{Additional qualitative comparison on long video generation (30s) under the single-prompt setting (Set 27).}
    \label{fig:comparison_00902_00945}
\end{figure*}

\begin{figure*}[t]
    \centering
    \includegraphics[width=\linewidth]{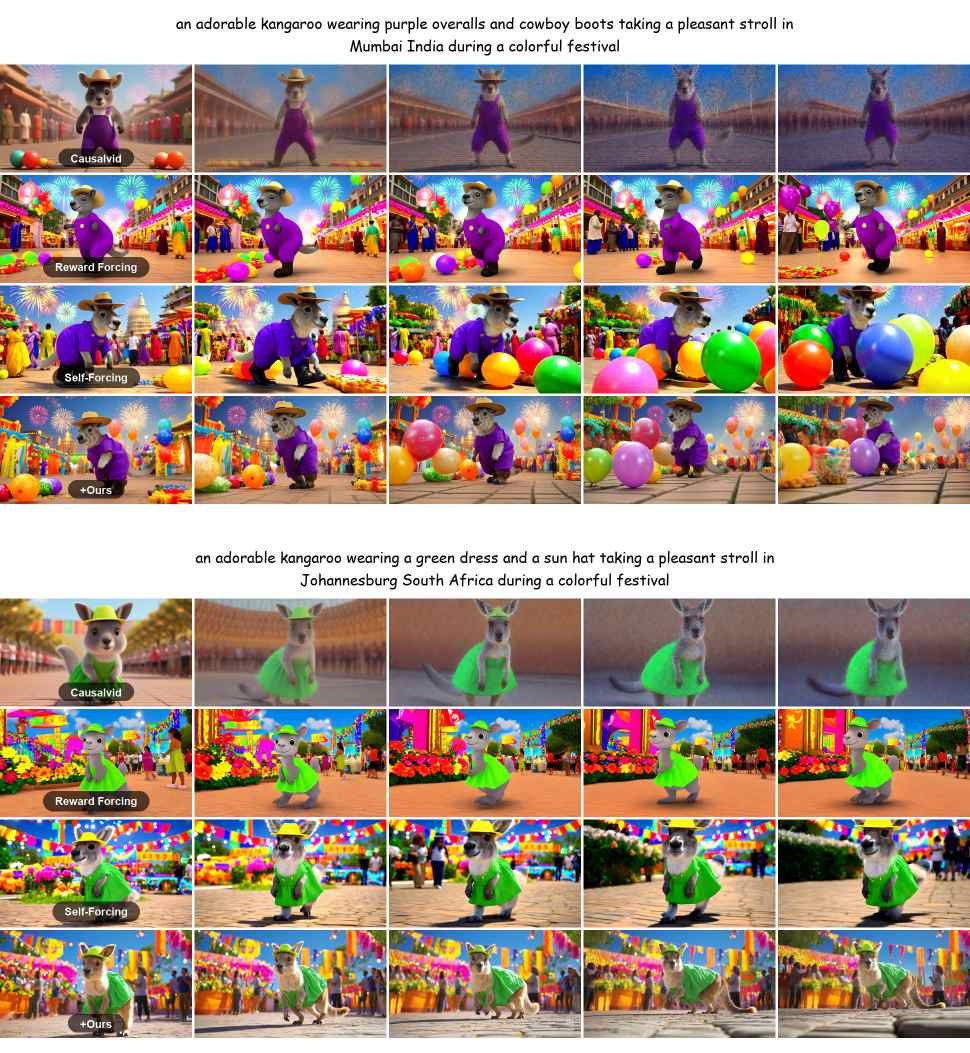}
    \caption{Additional qualitative comparison on long video generation (30s) under the single-prompt setting (Set 28).}
    \label{fig:comparison_00959_00971}
\end{figure*}

\end{document}